\pdfoutput=1
\documentclass{article}

% if you need to pass options to natbib, use, e.g.:
% \PassOptionsToPackage{numbers, compress}{natbib}
% before loading nips_2018

% ready for submission
%\usepackage{nips_2018}

% to compile a preprint version, e.g., for submission to arXiv, add
% add the [preprint] option:
\usepackage[final]{nips_2018}
%\usepackage{epstopdf}
% to compile a camera-ready version, add the [final] option, e.g.:
% \usepackage[final]{nips_2018}

% to avoid loading the natbib package, add option nonatbib:
% \usepackage[nonatbib]{nips_2018}

\usepackage[utf8]{inputenc} % allow utf-8 input
\usepackage[T1]{fontenc}    % use 8-bit T1 fonts
\usepackage{hyperref}       % hyperlinks
\usepackage{url}            % simple URL typesetting
\usepackage{booktabs}       % professional-quality tables
\usepackage{amsfonts}       % blackboard math symbols
\usepackage{nicefrac}       % compact symbols for 1/2, etc.
\usepackage{microtype}      % microtypography

% For citations
\usepackage{natbib}
% For algorithms
\usepackage{algorithm}
\usepackage{algorithmic}

\usepackage{dsfont}
\usepackage{amsmath}
\usepackage{amsfonts}
\usepackage{amssymb}
\usepackage{amsthm}
\usepackage{subfigure}
\usepackage{epsfig}
\usepackage{color}
\usepackage{enumerate}
\usepackage{placeins}

% use Times
\usepackage{times}
% For figures
\usepackage{graphicx} % more modern
\usepackage{subfigure} 

% For citations
%\usepackage[numbers]{natbib}

% For algorithms
% For algorithms
\usepackage{algorithm}
\usepackage{algorithmic}

\usepackage[dvipsnames]{xcolor}

\usepackage{hyperref}

% As of 2011, we use the hyperref package to produce hyperlinks in the
% resulting PDF.  If this breaks your system, please commend out the
% following usepackage line and replace \usepackage{icml2017} with
% \usepackage[nohyperref]{icml2017} above.
\usepackage{hyperref}

% Packages hyperref and algorithmic misbehave sometimes.  We can fix
% this with the following command.

\newtheorem{lemma}{Lemma}
\newtheorem{theorem}{Theorem}

\newtheorem{definition}{Definition}
\newtheorem{assumption}{Assumption}
\newtheorem{remark}{Remark}

\theoremstyle{definition}

%\newcommand{\real}{{\rm I\!R}}

%\newcommand{\integer}{{\rm I\!N}}

%\newcommand{\expectation}{{\rm I\!E}}

%\Declare\Delta_{\Y}athOperator*{\diameter}{diam}

%\newcommand{\ind}[1]{\mathbf{1}\left\{#1\right\}}
%\newcommand{\ind}[1]{\mathbf{1}_{\left\{#1\right\}}}

\DeclareMathOperator*{\argmax}{argmax}
\DeclareMathOperator*{\argmin}{argmin}

\usepackage{rotating}

\title{To Trust Or Not To Trust A Classifier}

% The \author macro works with any number of authors. There are two
% commands used to separate the names and addresses of multiple
% authors: \And and \AND.
%
% Using \And between authors leaves it to LaTeX to determine where to
% break the lines. Using \AND forces a line break at that point. So,
% if LaTeX puts 3 of 4 authors names on the first line, and the last
% on the second line, try using \AND instead of \And before the third
% author name.

\author{
  Heinrich Jiang\thanks{All authors contributed equally.} \\
  Google Research\\
  \texttt{heinrichj@google.com} \\
  \And
  Been Kim  \\
  Google Brain \\
  \texttt{beenkim@google.com} \\
   \And
  Melody Y. Guan\thanks{Work done while intern at Google Research.} \\
  Stanford University  \\
  \texttt{mguan@stanford.edu}  \\
  \And
  Maya Gupta \\
  Google Research \\
  \texttt{mayagupta@google.com} \\
}

\begin{document}
% \nipsfinalcopy is no longer used

\maketitle

\begin{abstract}
%Knowing when a classifier's prediction can be trusted 
%is useful in many applications.
%It is useful to know whether a classifier's prediction for an example can be trusted.

%this approach may not be accurate enough. 
%we show that this is not effective, validating prior work.

Knowing when a classifier's prediction can be trusted is useful in many applications and critical for safely using AI. While the bulk of the effort in machine learning research has been towards improving classifier performance, understanding when a classifier's predictions should and should not be trusted has received far less attention. The standard approach is to use the classifier's discriminant or confidence score; however, we show there exists an alternative that is more effective in many situations. We propose a new score, called the {\it trust score}, which measures the agreement between the classifier and a modified nearest-neighbor classifier on the testing example. We show empirically that high (low) trust scores produce surprisingly high precision at identifying correctly (incorrectly) classified examples, consistently outperforming the classifier's confidence score as well as many other baselines. Further, under some mild distributional assumptions, we show that if the trust score for an example is high (low), the classifier will likely agree (disagree) with the Bayes-optimal classifier. Our guarantees consist of non-asymptotic rates of statistical consistency under various nonparametric settings and build on recent developments in topological data analysis. 

%Extensive empirical validation validates that our method consistently outperforms classifier's confidence score, and many other baselines. 

\end{abstract}

\section{Introduction}
Machine learning (ML) is a powerful and widely-used tool for making potentially important decisions, from product recommendations to medical diagnosis. However, despite ML's impressive performance, it makes mistakes, with some more costly than others. As such, ML trust and safety is an important theme \cite{varshney2017safety,lee2004trust, concreteAIsafety}. While improving overall accuracy is an important goal that the bulk of the effort in ML community has been focused on, it may not be enough: we need to also better understand the strengths and limitations of ML techniques. 

This work focuses on one such challenge: knowing whether a classifier's prediction for a test example can be trusted or not. Such trust scores have practical applications. They can be directly shown to users to help them gauge whether they should trust the AI system. This is crucial when a model's prediction influences important decisions such as a medical diagnosis, but can also be helpful even in low-stakes scenarios such as movie recommendations. Trust scores can be used to override the classifier and send the decision to a human operator, or to prioritize decisions that human operators should be making. Trust scores are also useful for monitoring classifiers to detect distribution shifts that may mean the classifier is no longer as useful as it was when deployed. \let\thefootnote\relax\footnotetext{An open-source implementation of Trust Scores can be found here: https://github.com/google/TrustScore}  

% Maya cut this because we either already said it above or say it in a couple lines, and it's messing up the narrative flow. 
%We consider the setting where we are given a classifier $h$ and we have access to some labeled examples, and the objective is to identify whether $h$'s prediction should be trusted or not for a given test example. 

A standard approach to deciding whether to trust a classifier's decision is to use the classifiers' own reported confidence or score, e.g. probabilities from the softmax layer of a neural network, distance to the separating hyperplane in support vector classification, mean class probabilities for the trees in a random forest. While using a model's own implied confidences appears reasonable, 
it has been shown that the raw confidence values from a classifier are poorly calibrated \citep{guo2017calibration, kuleshov2015calibrated}.
Worse yet, even if the scores are calibrated,
the ranking of the scores itself may not be reliable. 
In other words, a higher confidence score from the model does not necessarily imply higher probability that the classifier is correct, as shown in \cite{provost1998case, goodfellow2014explaining, nguyen2015deep}.  A classifier may simply not be the best judge of its own trustworthiness.

In this paper, we use a set of labeled examples (e.g. training data or validation data) to help determine a classifier's trustworthiness for a particular testing example. First, we propose a simple procedure that reduces the training data to a high density set for each class. Then we define the {\it trust score}---the ratio between the distance from the testing sample to the nearest class different from the predicted class and the distance to the predicted class---to determine whether to trust that classifier prediction. 

Theoretically, we show that high/low trust scores correspond to high probability of agreement/disagreement with the Bayes-optimal classifier. We show finite-sample estimation rates when the data is full-dimension and supported on or near a low-dimensional manifold. Interestingly, we attain bounds that depend only on the lower manifold dimension and independent of the ambient dimension without any changes to the procedure or knowledge of the manifold. To our knowledge, these results are new and may be of independent interest. 

Experimentally, we found that the trust score better identifies correctly-classified points for low and medium-dimension feature spaces than the model itself. However,  high-dimensional feature spaces were more challenging, and we demonstrate that the trust score's utility depends on the vector space used to compute the trust score differences.

%We empirically validate this idea on various set of classifiers, including logistic regression, random forest and deep neural networks such as InceptionV3 \citep{szegedy2016rethinking,deng2009imagenet}. We show that the precision of the trust score outperforms a set of baselines across benchmark datasets, both categorical and images data. 

\section{Related Work\label{sec:related}}
One related line of work is that of confidence calibration, which transforms classifier outputs into values that can be interpreted as probabilities, e.g. \cite{platt1999probabilistic,zadrozny2002transforming,niculescu2005predicting,guo2017calibration}. In recent work, \cite{kuleshov2015calibrated}  explore the structured prediction setting, and
\cite{lakshminarayanan2017simple} obtain confidence estimates by using ensembles of networks. These calibration techniques typically only use the the model's reported score (and the softmax layer in the case of a neural network) for calibration, which notably preserves the rankings of the classifier scores. Similarly, \cite{hendrycks2016baseline} considered using the softmax probabilities for the related problem of identifying misclassifications and mislabeled points. % using performance of softmax probabilities, while we use our distance ratio and compare to this baseline. 

% Maya notes: suggest cutting this due to space
% Our goal differs since we are neither interested in estimating such probabilities nor using the model confidence scores. The calibration techniques however could be used to transform our signal into a probability, but this is beyond the scope of this paper.

Recent work explored estimating uncertainty for Bayesian neural networks and returning a distribution over the outputs \citep{gal2016dropout,kendall2017uncertainties}. The proposed trust score does not change the network structure (nor does it assume any structure) and gives a single score, rather than a distribution over outputs as the representation of uncertainty.

% Maya cut this because it wasn't clear and seemed beside-the-point
%by making a connection with dropout \citep{srivastava2014dropout}.

The problem of classification with a reject option or learning with abstention \cite{bartlett2008classification,yuan2010classification,cortes2016learning,grandvalet2009support,cortes2016boosting,herbei2006classification,cortes2017online} is a highly related framework where the classifier is allowed to abstain from making a prediction at a certain cost. Typically such methods jointly learn the classifier and the rejection function. Note that the interplay between classification rate and reject rate is studied in many various forms e.g. \citep{chow1970optimum,dubuisson1993statistical,fumera2000multiple,santos2005optimal,tortorella2000optimal,fumera2002support,landgrebe2006interaction,el2010foundations,wiener2011agnostic,tax2008growing}. Our paper assumes an already trained and possibly black-box classifier and learns the confidence scores separately, but we do not explicitly learn the appropriate rejection thresholds. 

Whether to trust a classifier also arises in the setting where one has access to a sequence of classifiers, but there is some cost to evaluating each classifier, and the goal is to decide after evaluating each classifier in the sequence if one should trust the current classifier decision enough to stop, rather than evaluating more classifiers in the sequence (e.g. \cite{Wang:2015,Parrish:2013,Fan:2002}). Those confidence decisions are usually based on whether the current classifier score will match the classification of the full sequence.

Experimentally we find that the vector space used to compute the distances in the trust score matters, and that computing trust scores on more-processed layers of a deep model generally works better. This observation is similar to the work of Papernot and McDaniel \cite{papernot2018deep}, who use $k$-NN regression on the intermediate representations of the network which they showed enhances robustness to adversarial attacks and leads to better calibrated uncertainty estimations.
%Wallace et al. \cite{wallace2018interpreting} use nearest neighbor methods to enhance interpretability.

% MAYA NOTES: Xing's paper is the closest thing I can find but it's old and a heuristic mess, so I think we can cut this:
%The most related of that work is that of \cite{Xing:98} in that it also considers how well a current classifier matches the 1-NN classifier, but their strategy is heuristic and limited to deciding whether to trust classifiers with an incomplete feature set. 

Our work builds on recent results in topological data analysis. Our method to filter low-density points estimates a particular density level-set given a parameter $\alpha$, which aims at finding the level-set that contains $1-\alpha$ fraction of the probability mass. Level-set estimation has a long history \cite{hartigan1975clustering,ester1996density,tsybakov1997nonparametric,singh2009adaptive,rigollet2009optimal,jiang2017density}. However such works assume knowledge of the density level, which is difficult to determine in practice. We provide rates for Algorithm~\ref{alg:alpha-level-set} in estimating the appropriate level-set corresponding to $\alpha$ without knowledge of the level. % To our knowledge this is the first time 
The proxy $\alpha$ offers a more intuitive parameter compared to the density value, is used for level-set estimation. Our analysis is also done under various settings including when the data lies near a lower dimensional manifold and we provide rates that depend only on the lower dimension.% such a setting has only been analyzed for cluster-tree estimation \cite{balakrishnan2013cluster}, which is a related but different problem from what we handle.

% MAYA NOTES  - given lack of space and how loose this connection is, I think we can cut it:
%It is also worth mentioning the connection of our approach to semi-supervised learning \cite{chapelle2003cluster,rigollet2007generalization,chapelle2009semi}. A well-studied theme is that of the cluster assumption, where it is assumed that the data for each label are clustered, and with enough unlabeled data, these structures can be learned to improve classification by inferring each clusters' label based on the labeled samples within the cluster. While our approach also learns the structures for each class, it only does so with labeled data.

\section{Algorithm: The Trust Score \label{sec:algo}}

Our approach proceeds in two steps outlined in Algorithm 1 and 2. We first pre-process the training data, as described in Algorithm 1, to find the {\it $\alpha$-high-density-set} of each class, which is defined as the training samples within that class after filtering out $\alpha$-fraction of the samples with lowest density (which may be outliers):
\begin{definition} [$\alpha$-high-density-set]
Let $0\le \alpha < 1$ and $f$ be a continuous density function with compact support $\mathcal{X} \subseteq \mathbb{R}^D$. Then define $H_\alpha(f)$, the $\alpha$-high-density-set of $f$, to be the $\lambda_\alpha$-level set of $f$, defined as $\{x \in \mathcal{X} : f(x) \ge \lambda_\alpha\}$ where $\lambda_\alpha := \inf\left\{\lambda \ge 0 : \int_{\mathcal{X}} 1\left[ f(x) \le \lambda \right] f(x) dx \ge \alpha \right\}$.
\end{definition}
In order to approximate the $\alpha$-high-density-set, Algorithm~\ref{alg:alpha-level-set} filters the $\alpha$-fraction of the sample points with lowest {\it empirical} density, based on $k$-nearest neighbors.
This data filtering step is independent of the given classifier $h$. 

Then, the second step: given a testing sample, we define its \emph{trust score} to be the ratio between the distance from the testing sample to the  $\alpha$-high-density-set of the nearest class different from the predicted class, and the distance from the test sample to the $\alpha$-high-density-set of the class predicted by $h$, as detailed in Algorithm~\ref{alg:distance-ratio}. The intuition is that if the classifier $h$ predicts a label that is considerably farther than the closest label, then this is a warning that the classifier may be making a mistake.  

Our procedure can thus be viewed as a comparison to a modified nearest-neighbor classifier, where the modification lies in the initial filtering of points not in the $\alpha$-high-density-set for each class. 

%We show in Section~\ref{sec:theory} that the proposed trust score can reveal signals from the Bayes-optimal classifier.

%The filtering of Algorithm~\ref{alg:alpha-level-set} removes the low-density points (which may be outliers) for each class. Formally, Algorithm~\ref{alg:alpha-level-set} aims at estimating the $\alpha$-high-density-set of the underlying density, defined:

%In order to approximate the $\alpha$-high-density-set, Algorithm~\ref{alg:alpha-level-set} filters the $\alpha$-fraction of the sample points with lowest empirical density, where the empirical density used is based on $k$-nearest neighbors. Recent density estimation results \cite{dasgupta2014optimal} will play a key role in establishing statistical consistency.

% Maya cut this because it's really redudnat with the writing above and the Algorithm 1.
%Then, the trust score computed in Algorithm~\ref{alg:distance-ratio} is the ratio between the nearest neighbor distance from the testing point to the estimated $\alpha$-high-density-set corresponding to the closest label that is not the predicted label to that of the predicted label.

\begin{remark}
The distances can be computed with respect to any representation of the data. For example, the raw inputs, an unsupervised embedding of the space, or the activations of the intermediate representations of the classifier. Moreover, the nearest-neighbor distance can be replaced by other distance measures, such as $k$-nearest neighbors or distance to a centroid.
\end{remark}

\begin{algorithm}[H]
   \caption{Estimating $\alpha$-high-density-set}
   \label{alg:alpha-level-set}
\begin{algorithmic}
   \STATE Parameters: $\alpha$ (density threshold), $k$.
   \STATE Inputs: Sample points $X := \{x_1,..,x_n\}$ drawn from $f$.
   \STATE Define $k$-NN radius $r_k(x) := \inf\{ r > 0 : |B(x, r) \cap X| \ge k\}$ and let $\varepsilon := \inf\{ r > 0 : |\{x \in X : r_k(x) > r\}| \le \alpha\cdot n\}$.
   \STATE \textbf{return} $\widehat{H_\alpha}(f) := \{ x \in X : r_k(x) \le \varepsilon \}$.
\end{algorithmic}
\end{algorithm}
\vspace{-0.5cm}

\begin{algorithm}[H]
   \caption{Trust Score}
   \label{alg:distance-ratio}
   \begin{algorithmic}
   \STATE Parameters: $\alpha$ (density threshold), $k$.
   \STATE Input: Classifier $h : \mathcal{X} \rightarrow \mathcal{Y}$. Training data $(x_1,y_1),..., (x_n,y_n)$. Test example $x$.
\STATE For each $\ell \in \mathcal{Y}$, let $\widehat{H_\alpha}(f_\ell)$ be the output of Algorithm~\ref{alg:alpha-level-set} with parameters $\alpha, k$ and sample points $\{x_j : 1 \le j \le n, y_j = \ell \}$. Then, return the trust score, defined as:
   \vspace{-0.4cm}
   \STATE \begin{align*}
\xi(h, x) := d\left(x, \widehat{H_\alpha}(f_{\widetilde{h}(x)})\right)/d\left(x, \widehat{H_\alpha}(f_{h(x)})\right),
\end{align*}
where $\widetilde{h}(x) = \argmin_{l\in \mathcal{Y}, l\neq h(x)} d\left(x, \widehat{H_\alpha}(f_l)\right)$.
\end{algorithmic}
\end{algorithm}
\vspace{-0.5cm}

%The trust score can be used as either a black-box method without using any of inner workings of classifier, or a white-box method where the distances are computed in some intermediate representational space of the classifier, such as a middle layer of a DNN. We show empirical results for both in Section~\ref{sec:results}.

The method has two 
hyperparameters: $k$ (the number of neighbors, such as in $k$-NN) and $\alpha$ (fraction of data to filter) to compute the empirical densities. We show in theory that $k$ can lie in a wide range and still give us the desired consistency guarantees. Throughout our experiments, we fix $k=10$, and
use cross-validation to select $\alpha$ as it is data-dependent.

\begin{remark}
We observed that the procedure was not very sensitive to the choice of $k$ and $\alpha$. As will be shown in the experimental section, for efficiency on larger datasets, we skipped the initial filtering step of Algorithm~\ref{alg:alpha-level-set} (leading to a hyperparameter-free procedure) and obtained reasonable results. This initial filtering step can also be replaced by other strategies. One such example is filtering examples whose labels have high disagreement amongst its neighbors, which is implemented in the open-source code release but not experimented with here.
\end{remark}

\section{Theoretical Analysis\label{sec:theory}}

In this section, we provide theoretical guarantees for Algorithms~\ref{alg:alpha-level-set} and~\ref{alg:distance-ratio}. 
Due to space constraints, all the proofs are deferred to the Appendix. To simplify the main text, we state our results treating $\delta$, the confidence level, as a constant. The dependence on $\delta$ in the rates is made explicit in the Appendix.

% CUTTING THIS FOR SPACE, it is clearly labeled anyways.
%For convenience, each proof comprises of two sections in the appendix, where the first section introduces the tools and known supporting results and the second section is the contributed proof for the theorem. 

We show that Algorithm~\ref{alg:alpha-level-set} is a statistically consistent estimator of the $\alpha$-high-density-level set with finite-sample estimation rates. We analyze Algorithm~\ref{alg:alpha-level-set} in three different settings: when the data lies on (i) a full-dimensional $\mathbb{R}^D$;
(ii) an unknown lower dimensional submanifold embedded in $\mathbb{R}^D$; and
(iii) an unknown lower dimensional submanifold with full-dimensional noise. 

For setting (i), where the data lies in $\mathbb{R}^D$, the estimation rate has a dependence on the dimension $D$, which may be unattractive in high-dimensional situations: this is known as the curse of dimensionality, suffered by density-based procedures in general. However, when the data has low intrinsic dimension in (ii), it turns out that, remarkably, without any changes to the procedure, the estimation rate depends on the lower dimension $d$ and is {\it independent} of the ambient dimension $D$. However, in realistic situations, the data may not lie {\it exactly} on a lower-dimensional manifold, but {\it near} one. This reflects the setting of (iii), where the data essentially lies on a manifold but has general full-dimensional noise so the data is overall full-dimensional. Interestingly, we show that we still obtain estimation rates depending only on the manifold dimension and independent of the ambient dimension; moreover, we do not require knowledge of the manifold nor its dimension to attain these rates.

We then analyze Algorithm~\ref{alg:distance-ratio}, and establish the culminating result of Theorem~\ref{theo:distance_ratio}: for labeled data distributions with well-behaved class margins, when the trust score is large, the classifier likely agrees with the Bayes-optimal classifier, and when the trust score is small, the classifier likely disagrees with the Bayes-optimal classifier. If it turns out that even the Bayes-optimal classifier has high-error in a certain region, then any classifier will have difficulties in that region. Thus, Theorem~\ref{theo:distance_ratio} does not guarantee that the trust score can predict misclassification, but rather that it can predict when the classifier is making an unreasonable decision. 

\subsection{Analysis of Algorithm~\ref{alg:alpha-level-set}}

We require the following regularity assumptions on the boundaries of $H_\alpha(f)$, which are standard in analyses of level-set estimation \cite{singh2009adaptive}. Assumption 1.1 ensures that the density around $H_\alpha(f)$ has both smoothness and curvature. The upper bound gives smoothness, which is important to ensure that our density estimators are accurate for our analysis (we only require this smoothness near the boundaries and not globally). The lower bound ensures curvature: this ensures that $H_\alpha(f)$ is salient enough to be estimated. Assumption 1.2 ensures that $H_\alpha(f)$ does not get arbitrarily thin anywhere.

\begin{assumption} [$\alpha$-high-density-set regularity]\label{assumption:levelset}
Let $\beta > 0$. There exists $\check{C}_\beta, \hat{C}_\beta, \beta, r_c, r_0, \rho > 0$ s.t.
\begin{enumerate}
	\item  $\check{C}_{\beta} \cdot d(x, H_\alpha(f))^{\beta} \le |\lambda_\alpha - f(x)| \le \hat{C}_{\beta} \cdot d(x, H_\alpha(f))^{\beta}$ for all $x \in \partial H_\alpha(f) + B(0, r_c)$.

 \item  For all $0 < r < r_0$ and $x \in H_\alpha(f)$, we have 
	    $\text{Vol}(B(x, r)) \ge \rho \cdot r^D$.
	\end{enumerate}
	where $\partial A$ denotes the boundary of a set $A$, $d(x, A) := \inf_{x' \in A} ||x - x'||$, $B(x, r) := \{x' : |x - x'| \le r\}$ and $A + B(0, r) := \{ x : d(x, A) \le r\}$.
\end{assumption}

Our statistical guarantees are under the Hausdorff metric, which ensures a \emph{uniform} guarantee over our estimator: it is a stronger notion of consistency than other common metrics \cite{rigollet2009optimal,rinaldo2010generalized}.
\begin{definition}[Hausdorff distance]
$d_H(A, B) := \max\{\sup_{x \in A} d(x, B), \sup_{x \in B} d(x, A) \}$.
\end{definition}

We now give the following result for Algorithm~\ref{alg:alpha-level-set}. It says that as long as our density function satisfies the regularity assumptions stated earlier, and the parameter $k$ lies within a certain range, then we can bound the Hausdorff distance between what Algorithm~\ref{alg:alpha-level-set} recovers and $H_\alpha(f)$, the true $\alpha$-high-density set, from an i.i.d. sample drawn from $f$ of size $n$. Then, as $n$ goes to $\infty$, and $k$ grows as a function of $n$, the quantity goes to $0$. 
\begin{theorem}[Algorithm~\ref{alg:alpha-level-set} guarantees]\label{theo:levelset} Let $0 < \delta < 1$ and 
suppose that $f$ is continuous and has compact support $\mathcal{X} \subseteq \mathbb{R}^D$ and satisfies Assumption~\ref{assumption:levelset}. 
There exists constants $C_l, C_u, C > 0$ depending on $f$ and $\delta$ such that the following holds with probability at least $1 - \delta$.
Suppose that $k$ satisfies $C_l\cdot \log n \le k \le C_u\cdot (\log n)^{D(2\beta + D)} \cdot n^{2\beta/(2\beta + D)}$. 
Then we have 
\begin{align*}
d_H(H_\alpha(f), \widehat{H_\alpha}(f)) \le C\cdot \left(n^{-1/2D} + \log(n)^{1/2\beta} \cdot k^{-1/2\beta} \right).
\end{align*}
\end{theorem}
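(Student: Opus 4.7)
The plan is to pass from $k$-NN radii to density estimates, then from uniform density estimates to a bound on the (implicitly defined) empirical level $\hat\lambda$, and finally from density proximity to Hausdorff distance using the regularity Assumption~\ref{assumption:levelset}. The first observation is that the filter $\{r_k(x)\le\varepsilon\}$ is exactly $\{\hat f(x)\ge\hat\lambda\}$ with $\hat f(x):=k/(n v_D r_k(x)^D)$ and $\hat\lambda:=k/(n v_D \varepsilon^D)$, so Algorithm~\ref{alg:alpha-level-set} is an (implicit) plug-in level-set estimator. The two terms in the rate can then be read off as: $k^{-1/(2\beta)}$ is the density-estimation variance $\sqrt{\log n/k}$ pushed through the boundary regularity exponent $1/\beta$, and $n^{-1/(2D)}$ governs how finely the sample covers $H_\alpha(f)$ under the volume lower bound in Assumption~\ref{assumption:levelset}.2.

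First I would establish uniform concentration of $\hat f$. Using that balls in $\mathbb R^D$ form a VC class, Bernstein-type bounds on $|B(x,r)\cap X|$ give, with probability $1-\delta$ and uniformly over $x$, an inequality of the form $|\hat f(x)-f(x)|\le \eta_n:=C(f(x)\sqrt{\log n/k} + \text{(local bias)})$. Assumption~\ref{assumption:levelset}.1 supplies the local smoothness we need near $\partial H_\alpha(f)$, so the bias is controlled on the relevant slab, and the allowed range for $k$ in the theorem is exactly what is needed to make $\eta_n$ small while keeping the ball-count concentration valid. Next I would calibrate $\hat\lambda$. Since $\varepsilon$ is defined so that exactly an $\alpha$-fraction of samples is removed, $\frac{1}{n}\sum_i \mathbf 1\{\hat f(X_i)<\hat\lambda\}=\alpha$; combining the DKW inequality for $t\mapsto\mathbb P(f(X)\le t)$ with the uniform bound from Step 1 and the local curvature of $f$ around $\lambda_\alpha$ (again from Assumption~\ref{assumption:levelset}.1) gives $|\hat\lambda-\lambda_\alpha|\lesssim \eta_n + n^{-1/2}$.

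With these two pieces, converting to Hausdorff distance is essentially a two-sided exercise. For the first direction $\sup_{x\in\widehat{H_\alpha}(f)} d(x,H_\alpha(f))$: any kept sample $x$ has $f(x)\ge\hat\lambda-\eta_n\ge\lambda_\alpha - O(\eta_n)$, and the lower bound $\check C_\beta d(x,H_\alpha(f))^\beta\le|\lambda_\alpha-f(x)|$ in Assumption~\ref{assumption:levelset}.1 yields $d(x,H_\alpha(f))\lesssim\eta_n^{1/\beta}\sim (\log n/k)^{1/(2\beta)}$. For the reverse direction $\sup_{y\in H_\alpha(f)} d(y,\widehat{H_\alpha}(f))$: the volume lower bound in Assumption~\ref{assumption:levelset}.2 combined with a standard covering argument guarantees that every $y\in H_\alpha(f)$ has a sample point within distance $r_n\sim(\log n/n)^{1/D}$; if $y$ is deep in $H_\alpha(f)$, that sample point has density safely above $\hat\lambda$ by Step 1 and hence survives the filter, while if $y$ is within $\eta_n^{1/\beta}$ of $\partial H_\alpha(f)$ (by the upper bound in Assumption~\ref{assumption:levelset}.1) there is nothing to prove. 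Carrying the constants through yields the stated $n^{-1/(2D)}+(\log n)^{1/(2\beta)}k^{-1/(2\beta)}$ bound.

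The main obstacle I expect is Step~2: the threshold $\varepsilon$ is an order statistic of a data-dependent quantity ($k$-NN radii), so the empirical CDF being inverted is itself random and correlated with the $\hat f(X_i)$'s. Handling this cleanly requires decoupling the density-estimator error from the $\alpha$-quantile error via a uniform statement and using the local curvature of $f(X)$'s distribution at $\lambda_\alpha$ in lieu of global smoothness, since Assumption~\ref{assumption:levelset}.1 is only local. The secondary subtlety is matching the admissible $k$-range in the theorem statement: the upper end $k\lesssim(\log n)^{D(2\beta+D)}n^{2\beta/(2\beta+D)}$ is precisely what keeps the density-estimation bias from dominating the variance term $\sqrt{\log n/k}$, and I would verify this by a careful balance at the end.
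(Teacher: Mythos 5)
Your overall blueprint matches the paper's: recognize Algorithm~\ref{alg:alpha-level-set} as an implicit plug-in level-set estimator with $\hat f(x)=k/(nv_D r_k(x)^D)$ and an implicit level $\hat\lambda=k/(nv_D\varepsilon^D)$, get uniform $k$-NN density bounds, calibrate $\hat\lambda$ against $\lambda_\alpha$, then convert density error to Hausdorff error via Assumption~\ref{assumption:levelset}.1 and a covering argument via Assumption~\ref{assumption:levelset}.2. You also correctly flag the decoupling issue (the paper handles it by working with true level sets $H_{\tilde\alpha}(f)$ and applying Hoeffding to the Bernoulli indicators $\mathds 1\{X_i\in H_{\tilde\alpha}(f)\}$, rather than inverting the empirical CDF of $\hat f(X_i)$).

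However, there is a real gap in how you account for the $n^{-1/(2D)}$ term. You attribute it to the covering argument, but that step only produces a term of order $((\log n)^2/n)^{1/D}\approx n^{-1/D}$, which is strictly smaller and ends up dominated. The $n^{-1/(2D)}$ term actually comes from the level-calibration step, and obtaining it requires a lemma you do not have: that $\alpha\mapsto\lambda_\alpha$ is H\"older with exponent $\beta/D$ (the paper's Lemma~\ref{lemma:lambdalapha_smoothness}, proved by combining both sides of Assumption~\ref{assumption:levelset}.1 with the volume lower bound to control the mass of a thin shell). With that lemma, the $O(n^{-1/2})$ Hoeffding/DKW error in the $\alpha$-quantile translates into $|\hat\lambda-\lambda_\alpha|\lesssim \eta_n+n^{-\beta/(2D)}$, and then the $1/\beta$ exponent from Assumption~\ref{assumption:levelset}.1 gives $n^{-1/(2D)}$. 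Your stated Step~2 bound $|\hat\lambda-\lambda_\alpha|\lesssim\eta_n+n^{-1/2}$ implicitly assumes the distribution of $f(X)$ has density bounded away from zero at $\lambda_\alpha$, which is not implied by the assumptions; pushing $n^{-1/2}$ through the $1/\beta$ power would give $n^{-1/(2\beta)}$, not $n^{-1/(2D)}$, and neither of your two error sources as written would yield the theorem's rate. The fix is exactly the paper's Lemma~\ref{lemma:lambdalapha_smoothness}: quantify the modulus of continuity of $\lambda_\alpha$ in $\alpha$ before converting to distances.
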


\begin{remark}
The condition on $k$ can be simplified by ignoring log factors: $\log n \lesssim k \lesssim n^{2\beta/(2\beta + D)}$, which is a wide range.
Setting $k$ to its allowed upper bound, we obtain our consistency guarantee of 
\begin{align*}
d_H(H_\alpha(f), \widehat{H_\alpha}(f)) \lesssim \max\{n^{-1/2D}, n^{-1/(2\beta + D)}\}.
\end{align*}
The first term is due to the error from estimating the appropriate level given $\alpha$ (i.e. identifying the level $\lambda_\alpha$) and the second term corresponds to the error for recovering the level set given knowledge of the level. The latter term matches the lower bound for level-set estimation up to log factors \cite{tsybakov1997nonparametric}.
\end{remark}

\subsection{Analysis of Algorithm~\ref{alg:alpha-level-set} on Manifolds}

One of the disadvantages of Theorem \ref{theo:levelset} is that the estimation errors have a dependence on $D$, the dimension of the data, which may be highly undesirable in high-dimensional settings. We next improve these rates when the data has a lower intrinsic dimension. Interestingly, we are able to show rates that depend only on the intrinsic dimension of the data, without explicit knowledge of that dimension nor any changes to the procedure. As common to related work in the manifold setting, we make the following regularity assumptions which are standard among works 
in manifold learning (e.g. \citep{niyogi2008finding, genovese2012minimax, balakrishnan2013cluster}).
\begin{assumption}[Manifold Regularity]\label{assumption:manifold} $M$ is a $d$-dimensional smooth compact Riemannian manifold without boundary embedded in compact subset $\mathcal{X} \subseteq \mathbb{R}^D$ with bounded volume. $M$ has finite condition number $1/\tau$, which controls the curvature and prevents self-intersection.
\end{assumption}

\begin{theorem}[Manifold analogue of Theorem~\ref{theo:levelset}] \label{theo:levelset_manifold}
Let $0 < \delta < 1$.
Suppose that density function $f$ is continuous and supported on $M$ and Assumptions~\ref{assumption:levelset} and~\ref{assumption:manifold} hold. Suppose also that there exists $\lambda_0 > 0$ such that $f(x) \ge \lambda_0$ for all $x \in M$.
Then, there exists constants $C_l, C_u, C > 0$ depending on $f$ and $\delta$ such that the following holds with probability at least $1 - \delta$.
Suppose that $k$ satisfies $C_l\cdot  \log n \le k \le C_u\cdot  (\log n)^{d(2\beta' + d)} \cdot n^{2\beta'/(2\beta' + d)}$. 
where $\beta' := \max\{1, \beta\}$.
Then we have 
\begin{align*}
d_H(H_\alpha(f), \widehat{H_\alpha}(f)) \le C\cdot \left(n^{-1/2d} + \log(n)^{1/2\beta} \cdot k^{-1/2\beta} \right).
\end{align*}
\end{theorem}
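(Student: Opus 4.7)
The proof is a careful adaptation of the argument for Theorem~\ref{theo:levelset}, where every occurrence of the ambient-dimension exponent $D$ is replaced by the intrinsic exponent $d$. Since Algorithm~\ref{alg:alpha-level-set} does not depend on $M$ or its dimension, the only things that change are the geometric scalings under which $k$-NN radii and empirical-measure concentrations are controlled. The key geometric ingredient is the standard locally-Euclidean lemma for manifolds of positive reach: under Assumption~\ref{assumption:manifold}, there exist constants $r_M, c_1, c_2 > 0$ depending only on $\tau$ and $d$ such that
\begin{equation*}
c_1\, r^d \;\le\; \mathrm{Vol}_M\bigl(B(x, r) \cap M\bigr) \;\le\; c_2\, r^d
\end{equation*}
for all $x \in M$ and $0 < r \le r_M$ (Niyogi--Smale--Weinberger).

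First I would establish uniform two-sided concentration of the $k$-NN radii on the manifold. Combining the volume lemma with the lower bound $f \ge \lambda_0$ gives $\int_{B(x,r) \cap M} f\, d\mathrm{Vol}_M \asymp r^d$ uniformly in $x \in M$. A VC-style uniform bound on the empirical measure of Euclidean balls (the VC complexity entering only logarithmically) then yields, with probability at least $1-\delta$, a uniform estimate $r_k(x) \asymp \bigl(k/(n f(x))\bigr)^{1/d}$ with multiplicative error $O(\sqrt{(\log n)/k})$. This is exactly the analogue of the concentration step of Theorem~\ref{theo:levelset}, with $D$ replaced by $d$. The upper condition $k \le C_u (\log n)^{d(2\beta'+d)} n^{2\beta'/(2\beta'+d)}$ is precisely what guarantees the relevant radii stay inside $r_M$ so the volume lemma applies; the lower condition $k \ge C_l \log n$ is what makes the VC concentration nontrivial.

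Given this uniform density control, the rest of the proof proceeds as in Theorem~\ref{theo:levelset}. The error in identifying $\lambda_\alpha$ via the empirical $\alpha$-quantile of the $k$-NN radii (equivalently, matching the empirical mass in the sub-level set) contributes the $n^{-1/2d}$ term, the $1/d$ exponent coming from the same radius-to-level conversion that produced $n^{-1/2D}$ in the ambient case. The error in recovering the level set given the estimated level contributes the $\log(n)^{1/2\beta} k^{-1/2\beta}$ term: the uniform density error $O(\sqrt{(\log n)/k})$ translates through the $\beta$-regularity in Assumption~\ref{assumption:levelset}.1 to Hausdorff error of this order. Summing the two yields the claimed bound.

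The hard part is the first step: establishing manifold-adaptive concentration whose rate depends only on the intrinsic $d$, even though the procedure uses Euclidean distances in $\mathbb{R}^D$. The insight is that the \emph{true} measure of Euclidean balls on $M$ scales as $r^d$, while the empirical-process complexity enters only logarithmically. The assumption $f \ge \lambda_0$ is essential here (and absent in Theorem~\ref{theo:levelset}) because without it some $k$-NN radii in low-density regions could escape $r_M$ and invalidate the intrinsic volume scaling, at which point the effective rate would revert to depending on the ambient $D$.
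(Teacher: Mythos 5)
Your proposal is correct and takes essentially the same approach as the paper. The paper's argument (Theorem~\ref{theo:levelset_manifold_full} and Lemmas 11--13 in the appendix) is exactly the adaptation you describe: it imports from prior work (Balakrishnan et~al.\ 2013, Jiang 2017) the manifold analogues of the uniform ball bounds and $k$-NN density estimation bounds, in which the ambient $D$ is replaced by the intrinsic $d$ via the Niyogi--Smale--Weinberger volume comparison $\text{vol}_d(B(x,r)\cap M) \asymp v_d r^d$, and then reruns the proof of Theorem~\ref{theo:levelset} verbatim. Your identification of the two error sources (level identification giving $n^{-1/2d}$; level-set recovery given the level giving $\log(n)^{1/2\beta} k^{-1/2\beta}$), and your observation that the lower bound $f \ge \lambda_0$ is what keeps $k$-NN radii below the reach scale so the intrinsic volume scaling applies, both match the paper's reasoning. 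One small imprecision: you attribute the upper constraint on $k$ entirely to keeping radii inside $r_M$, but the appearance of $\beta' = \max\{1,\beta\}$ rather than $\beta$ signals that \emph{two} constraints interact -- one from the manifold density-estimation lemmas (roughly $k \lesssim n^{2/(2+d)}$, the $\beta$-independent curvature constraint) and one from the level-identification step (roughly $k \lesssim n^{2\beta/(2\beta+d)}$), and the stated range is meant to satisfy both simultaneously. This does not affect the validity of your overall argument.
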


\begin{remark}
Setting $k$ to its allowed upper bound, we obtain (ignoring log factors),
\begin{align*}
d_H(H_\alpha(f), \widehat{H_\alpha}(f)) \lesssim \max\{n^{-1/2d}, n^{-1/(2\max\{1, \beta\} + d)}\}.
\end{align*}
The first term can be compared to that of the previous result where $D$ is replaced with $d$. The second term is the error for recovering the level set on manifolds, which matches recent rates \cite{jiang2017density}.
\end{remark}

\subsection{Analysis of Algorithm~\ref{alg:alpha-level-set} on Manifolds with Full Dimensional Noise}

In realistic settings, the data may not lie exactly on a low-dimensional manifold, but {\it near} one. We next present a result where the data is distributed along a manifold with additional full-dimensional noise. We make mild assumptions on the noise distribution. Thus, in this situation, the data has intrinsic dimension equal to the ambient dimension. Interestingly, we are still able to show that the rates only depend on the dimension of the manifold and not the dimension of the entire data.

\begin{theorem} \label{theo:levelset_manifold_noise} Let $0 < \eta < \alpha < 1$ and $0 < \delta < 1$.
Suppose that distribution $\mathcal{F}$ is a weighted mixture $(1 - \eta) \cdot \mathcal{F}_M + \eta\cdot \mathcal{F}_E$ where $\mathcal{F}_M$ is a distribution with continous density $f_M$ supported on a $d$-dimensional manifold $M$ satisfying Assumption~\ref{assumption:manifold} and $\mathcal{F}_E$ is a (noise) distribution with continuous density $f_E$ with compact support over $\mathbb{R}^D$ with $d < D$. Suppose also that there exists $\lambda_0 > 0$ such that $f_M(x) \ge \lambda_0$ for all $x \in M$ and $H_{\widetilde{\alpha}}(f_M)$ (where $\widetilde{\alpha} := \frac{\alpha - \eta}{1 - \eta}$) satisfies Assumption~\ref{assumption:levelset} for density $f_M$.
Let $\widehat{H}_\alpha$ be the output of Algorithm~\ref{alg:alpha-level-set} on a sample $X$ of size $n$ drawn i.i.d. from $\mathcal{F}$.
Then, there exists constants $C_l, C_u, C > 0$ depending on $f_M$, $f_E$, $\eta$, $M$  and $\delta$ such that the following holds with probability at least $1 - \delta$.
Suppose that $k$ satisfies $C_l\cdot \log n \le k \le C_u\cdot (\log n)^{d(2\beta' + d)} \cdot n^{2\beta'/(2\beta' + d)}$,
where $\beta' := \max\{1, \beta\}$. 
Then we have
\begin{align*}
d_H(H_{\widetilde{\alpha}}(f_M), \widehat{H_\alpha}) \le C\cdot \left(n^{-1/2d} + \log(n)^{1/2\beta} \cdot k^{-1/2\beta} \right).
\end{align*}
\end{theorem}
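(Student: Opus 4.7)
The plan is to reduce the analysis to the pure-manifold setting already handled by Theorem~\ref{theo:levelset_manifold}, by showing that Algorithm~\ref{alg:alpha-level-set} effectively discards the noise component and performs level-set estimation on $\mathcal{F}_M$ with the adjusted parameter $\widetilde{\alpha}$. First, by a standard Chernoff bound, with probability at least $1 - \delta/3$ the sample $X$ decomposes as a disjoint union $X = X_M \sqcup X_E$ of i.i.d.\ draws of respective sizes $(1 - \eta + o(1))n$ and $(\eta + o(1))n$ from $\mathcal{F}_M$ and $\mathcal{F}_E$.

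The heart of the argument is a sharp dichotomy between the $k$-NN radii of near- and far-from-manifold sample points. Combining Bernstein-style uniform bounds on empirical masses of Euclidean balls with the manifold sampling estimates underlying Theorem~\ref{theo:levelset_manifold} and the tube-volume bound $\vol{B(M, r)} = O(r^{D-d})$ afforded by Assumption~\ref{assumption:manifold}, I would establish on a $1 - \delta/3$ event that (a) every sample point $x$ with $d(x, M) \le c_1 (k/n)^{1/d}$ satisfies $r_k(x) \le C_1 (k/((1-\eta) n \lambda_0))^{1/d}$, while (b) every sample point with $d(x, M) \ge r_n := C_2 (k/(\eta n))^{1/D}$ satisfies $r_k(x) > r_n$. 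Under the stated range of $k$, the lower bound in (b) strictly exceeds the upper bound in (a), producing a separating gap between the two populations.

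Therefore the threshold $\varepsilon$ chosen by Algorithm~\ref{alg:alpha-level-set} falls in this gap, so $\widehat{H}_\alpha$ consists only of points within distance $r_n$ of $M$ and every far-from-manifold noise sample is discarded. Because $\eta < \alpha$, the $\alpha n$ filtered-out points comprise all such far noise samples plus an additional $(\alpha - \eta)n$ points drawn from $X_M$ and the small population of tube-close noise samples. Restricted to $X_M$, this is exactly an $\widetilde{\alpha}$-fraction filter by largest $r_k$, so the on-manifold portion of $\widehat{H}_\alpha$ coincides with the output of Algorithm~\ref{alg:alpha-level-set} when run on $X_M$ alone with parameter $\widetilde{\alpha}$. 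Applying Theorem~\ref{theo:levelset_manifold} to this reduced problem bounds the Hausdorff distance to $H_{\widetilde{\alpha}}(f_M)$ by the claimed rate, and the retained tube-close noise samples lie within $r_n$ of $M$, a distance absorbed into $n^{-1/2d} + \log(n)^{1/2\beta} k^{-1/2\beta}$ under the stated range of $k$.

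The main obstacle is the uniform $r_k$ control in the mixed regime near $M$, where Euclidean balls around candidate points receive contributions from both $\mathcal{F}_M$ and $\mathcal{F}_E$. Cleanly separating these contributions requires a VC-cover argument over Euclidean balls combined with the tubular-volume estimate and the positivity hypothesis $\lambda_0 > 0$; this is also where the constants $C_l, C_u, C$ in the theorem inherit their dependence on $\eta$, $\lambda_0$, the reach $\tau$ of $M$, and $\|f_E\|_\infty$.
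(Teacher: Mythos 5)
Your overall strategy---showing noise gets filtered out and then reducing to Theorem~\ref{theo:levelset_manifold} applied to the on-manifold portion---matches the paper's, which establishes (i) a density-ratio lemma showing $\mathcal{F}_E(B(x,r))/\mathcal{F}_M(B(x,r)) \lesssim r^{D-d}$ near $M$, (ii) an isolation lemma showing no points outside $H_{\widetilde{\alpha}-\omega}(f_M)$ are retained, and (iii) a lemma bounding the gap between the $k$-NN density estimator computed on the full sample $X$ versus on $X \cap M$ (error $\approx (k/n)^{D/d-1}$).

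However, there are two places where your sketch over-claims. First, the assertion that ``the on-manifold portion of $\widehat{H}_\alpha$ coincides with the output of Algorithm~\ref{alg:alpha-level-set} when run on $X_M$ alone with parameter $\widetilde{\alpha}$'' is not correct as stated: the $r_k$ values used in the ranking are computed on the full sample $X$, and noise samples landing inside candidate balls perturb these radii, so the filter applied to $X_M$ is only approximately the $\widetilde{\alpha}$-filter. This perturbation is exactly what the paper's Lemma~\ref{lemma:manifold_noise_density} quantifies, showing it contributes an $O((k/n)^{D/d-1}) \lesssim (\log n)/\sqrt{k}$ additive error to the density estimate, which is then absorbed by the previous manifold analysis. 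Your proposal asserts coincidence rather than closeness, and a rigorous argument needs this comparison lemma. Second, your distance-based dichotomy with cutoff $r_n \asymp (k/(\eta n))^{1/D}$ does not by itself control the Hausdorff error from retained tube-close noise: a noise sample at distance up to $r_n$ from $M$ that survives the filter contributes $d(x, H_{\widetilde{\alpha}}(f_M))$ to the one-sided Hausdorff term, and $r_n$ is \emph{not} dominated by $n^{-1/(2d)} + (\log n)^{1/2\beta}k^{-1/2\beta}$ for $k$ near its upper limit (e.g.\ $\beta'=\beta=1$, $D=2d$ already fails). To close this, one must argue that any retained tube-close noise point in fact has $r_k(x)\le\varepsilon$ with $\varepsilon \asymp (k/n)^{1/d} \ll r_n$ (forcing $d(x,M)\lesssim(k/n)^{1/d}$), \emph{and} that its projection onto $M$ lands near $H_{\widetilde\alpha}(f_M)$---i.e.\ one needs density-level control, not just a distance-to-manifold threshold, which is why the paper's isolation lemma is phrased in terms of super-level sets of $f_M$ rather than tubular neighborhoods. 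Your final paragraph correctly identifies the mixed-regime near $M$ as the crux, but the proposed resolution stops short of these two steps.
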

The above result is compelling because it shows why our methods can work, even in high-dimensions, despite the curse of dimensionality of non-parametric methods. In typical real-world data, even if the data lies in a high-dimensional space, there may be far fewer degrees of freedom. Thus, our theoretical results suggest that when this is true, then our methods will enjoy far better convergence rates -- even when the data overall has full intrinsic dimension due to factors such as noise.  

\subsection{Analysis of Algorithm~\ref{alg:distance-ratio}: the Trust Score}
We now provide a guarantee about the trust score, making the same assumptions as in Theorem~\ref{theo:levelset_manifold_noise} for each of the label distributions. We additionally assume that the class distributions are well-behaved in the following sense: that high-density-regions for each of the classes satisfy the property that for any point $x \in \mathcal{X}$, if the ratio of the distance to one class's high-density-region to that of another is smaller by some margin $\gamma$, then it is more likely that $x$'s label corresponds to the former class.

\begin{theorem}\label{theo:distance_ratio}
Let $0 < \eta < \alpha < 1$.
Let us have labeled data $(x_1,y_1),...,(x_n,y_n)$ drawn from distribution $\mathcal{D}$, which is a joint distribution over $\mathcal{X} \times \mathcal{Y}$ where $\mathcal{Y}$ are the labels, $|\mathcal{Y}| < \infty$, and $\mathcal{X} \subseteq \mathbb{R}^D$ is compact. 
Suppose for each $\ell \in \mathcal{Y}$, the conditional distribution for label $\ell$ satisfies the conditions of Theorem~\ref{theo:levelset_manifold_noise} for some manifold and noise level $\eta$. Let $f_{M, \ell}$ be the density of the portion of the conditional distribution for label $\ell$ supported on $M$.  
Define $M_\ell := H_{\widetilde{\alpha}}(f_\ell)$, where $\widetilde{\alpha} := \frac{\alpha - \eta}{1 - \eta}$ and let $\epsilon_n$ be the maximum Hausdorff error from estimating $M_\ell$ over each $\ell \in \mathcal{Y}$ in Theorem~\ref{theo:levelset_manifold_noise}. Assume that $\min_{\ell \in \mathcal{Y}} \mathbb{P}_{\mathcal{D}} (y = \ell) > 0$ to ensure we have  samples from each label.

Suppose also that for each $x \in \mathcal{X}$, if $d(x, M_i)/d(x, M_j) < 1 - \gamma$ then $\mathbb{P}(y = i | x) > \mathbb{P}(y = j | x)$ for $i, j \in \mathcal{Y}$. That is, if we are closer to $M_i$ than $M_j$ by a ratio of less than $1 - \gamma$, then the point is more likely to be from class $i$. 
Let $h^*$ be the Bayes-optimal classifier, defined by
$h^*(x) := \argmax_{\ell \in \mathcal{Y}} \mathbb{P}(y = \ell | x)$.
Then the trust score $\xi$ of Algorithm~\ref{alg:distance-ratio} satisfies the following with high probability uniformly over all $x \in \mathcal{X}$ and all classifiers $h : \mathcal{X} \rightarrow \mathcal{Y}$ simultaneously for $n$ sufficiently large depending on $\mathcal{D}$:
\begin{align*}
\xi(h, x) < 1 - \gamma - \frac{\epsilon_n}{ d(x, M_{h(x)}) + \epsilon_n}\cdot \left(\frac{d(x, M_{\widetilde{h}(x)})}{d(x, M_{h(x)})} + 1 \right)  \hspace{0.4cm} &\Rightarrow\hspace{0.2cm} h(x) \neq h^*(x),\\
\frac{1}{\xi(h, x)} < 1 - \gamma - \frac{\epsilon_n}{ d(x, M_{\widetilde{h}(x)}) + \epsilon_n}\cdot \left(\frac{d(x, M_{h(x)})}{d(x, M_{\widetilde{h}(x)})} + 1 \right)
 \hspace{0.4cm}  &\Rightarrow \hspace{0.2cm}  h(x) = h^*(x).
\end{align*}
\end{theorem}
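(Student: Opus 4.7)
The plan is to reduce Theorem~\ref{theo:distance_ratio} to a deterministic algebraic calculation once each class's high-density region has been estimated well in Hausdorff distance. Since $|\mathcal{Y}| < \infty$ and each class is represented with positive probability, a union bound applied to Theorem~\ref{theo:levelset_manifold_noise} on each of the class-conditional distributions yields, with high probability, $d_H(M_\ell, \widehat{H}_\alpha(f_\ell)) \le \epsilon_n$ simultaneously for every $\ell \in \mathcal{Y}$. Because $A \mapsto d(x, A)$ is $1$-Lipschitz in Hausdorff distance, this lifts to the pointwise control $|d(x, M_\ell) - d(x, \widehat{H}_\alpha(f_\ell))| \le \epsilon_n$ uniformly over $x \in \mathcal{X}$ and $\ell \in \mathcal{Y}$; this is the only probabilistic input the rest of the argument will use.

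Abbreviate $a := d(x, M_{h(x)})$, $b := d(x, M_{\widetilde{h}(x)})$, and let $\hat{a}, \hat{b}$ denote their estimated counterparts, so $\xi(h,x) = \hat{b}/\hat{a}$. For the first implication, I would use $\hat{b} \ge b - \epsilon_n$ and $\hat{a} \le a + \epsilon_n$ to obtain $(b - \epsilon_n)/(a + \epsilon_n) \le \hat{b}/\hat{a}$, chain this with the hypothesis, and cross-multiply by $(a + \epsilon_n)$. The correction term $\frac{\epsilon_n}{a + \epsilon_n}(b/a + 1)$ is engineered so that the $\epsilon_n$-contributions telescope exactly and leave the clean conclusion $b/a < 1 - \gamma$. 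Applying the margin condition with $i = \widetilde{h}(x)$ and $j = h(x)$ then yields $\mathbb{P}(y = \widetilde{h}(x) \mid x) > \mathbb{P}(y = h(x) \mid x)$, ruling out $h(x) = h^*(x)$.

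For the second implication, running the same telescoping algebra (with the roles of $\hat{a}$ and $\hat{b}$ swapped) on the hypothesis $1/\xi(h,x) < 1 - \gamma - \frac{\epsilon_n}{b + \epsilon_n}(a/b + 1)$ yields $a/b < 1 - \gamma$. Upgrading this to $h(x) = h^*(x)$ requires the analogous inequality $a/d(x, M_\ell) < 1 - \gamma$ for every $\ell \neq h(x)$. For any such $\ell$, the definition of $\widetilde{h}(x)$ as the $\widehat{H}_\alpha$-nearest competitor gives $d(x, \widehat{H}_\alpha(f_\ell)) \ge \hat{b}$, so $\hat{a}/d(x, \widehat{H}_\alpha(f_\ell)) \le 1/\xi(h,x)$; the Hausdorff bound then gives $d(x, M_\ell) \ge \hat{b} - \epsilon_n \ge b - 2\epsilon_n$, and for $n$ sufficiently large (as the theorem assumes) this $2\epsilon_n$ slack is dominated by the strict margin that the hypothesis forces on $a/b$, producing $a/d(x, M_\ell) < 1 - \gamma$. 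The margin condition applied pairwise to $(h(x), \ell)$ for each $\ell \neq h(x)$ then closes the argument.

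The main obstacle will be engineering the telescoping cancellation so that no $\epsilon_n$-slack survives the algebra: a naive approach such as bounding $|\hat{b}/\hat{a} - b/a| \lesssim \epsilon_n$ would leave a residual that the fixed margin $\gamma$ cannot absorb. The precise form of the correction term in the hypothesis is chosen exactly so that $(b - \epsilon_n)/(a + \epsilon_n) < 1 - \gamma - \frac{\epsilon_n}{a+\epsilon_n}(b/a + 1)$, after cross-multiplication by $(a + \epsilon_n)$, collapses to $b(1 + \epsilon_n/a) < (1 - \gamma)(a + \epsilon_n)$, which factors cleanly as $b/a < 1 - \gamma$. Reverse-engineering this cancellation (and mirroring it for the second implication) is the crux; everything else is a standard application of Hausdorff consistency and the pairwise margin assumption.
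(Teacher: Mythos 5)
Your overall approach mirrors the paper's own proof exactly: a union bound over the finitely many classes lifts Theorem~\ref{theo:levelset_manifold_noise} to simultaneous Hausdorff control, the $1$-Lipschitz property of $A \mapsto d(x,A)$ translates this to pointwise distance control, and the correction term in the hypothesis is engineered so that the bound $\xi \ge (b-\epsilon_n)/(a+\epsilon_n)$ (with $a = d(x,M_{h(x)})$, $b = d(x,M_{\widetilde h(x)})$) telescopes against the hypothesis to yield exactly $b/a < 1-\gamma$. Your verification of the identity $(b-\epsilon_n)/(a+\epsilon_n) = b/a - \tfrac{\epsilon_n}{a+\epsilon_n}(b/a+1)$ is correct and is the crux the paper relies on.

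You have also correctly spotted a gap that the paper itself glosses over: in the second implication, $\widetilde h(x)$ is defined by Algorithm~\ref{alg:distance-ratio} as the $\widehat{H_\alpha}$-nearest competitor, not the $M_\ell$-nearest competitor, so $a/b < 1-\gamma$ does not automatically yield $a/d(x,M_c) < 1-\gamma$ for every $c \neq h(x)$. The paper simply asserts this implication without justification. However, your attempted patch does not close the gap. You write that ``this $2\epsilon_n$ slack is dominated by the strict margin that the hypothesis forces on $a/b$,'' but the telescoping collapses to \emph{exactly} $a/b < 1-\gamma$ with no surplus margin at all: the $\epsilon_n$-correction on the hypothesis side and the $\epsilon_n$-correction on the lower bound for $1/\xi$ cancel identically. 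Consequently, from $d(x,M_c) \ge b - 2\epsilon_n$ you can only conclude $a/d(x,M_c) \le a/(b-2\epsilon_n)$, which exceeds $a/b$ and is not bounded by $1-\gamma$. Uniformity over all $x \in \mathcal{X}$ prevents you from appealing to ``$n$ sufficiently large'' here, since $b$ can be arbitrarily close to $\epsilon_n$-scale for some $x$ (the non-vacuousness of the hypothesis only forces $b \gtrsim \tfrac{\gamma}{1-\gamma}\epsilon_n$, which does not keep $b - 2\epsilon_n$ comparable to $b$ unless $\gamma$ is large). So the second implication is not fully established by either the paper's argument or your patch; a correct fix would either rephrase the correction term of the hypothesis in terms of the true nearest competitor $\argmin_{\ell \neq h(x)} d(x, M_\ell)$, or strengthen the hypothesis to absorb the residual $2\epsilon_n$ explicitly.
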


\section{Experiments\label{sec:results}}
%%% UCI - baseline ratios confidence intervals
%%% MNIST - baseline ratios/models
%%% INCEPTION - baseline ratios/models

%D second paragraph to be first sentences?
%D why precision and not something else
%E for instance if we have something inaccurate ... 
%D percentile level, read thru the graph why is this called percentile level
% first grah top left it's tight gap btw conf and ours
%D why parkinson why it goes down
%D make sure top and bottom correct and incorrect
% and meaty caption

% maybe pick on out of a lot and go detail
%D NN ratio - one time it's NN nuaral network and nearest neighbor

%D we needed a way to measure how the model work, we decied to use precision this is reasonable 

% Figure 2 and 3 include why layer we can apply this method to any layer. Maybe take it out to separate section?
% one generalization of our meathod is tha twe can apply in separate layers. natural question is one layer work better than other. So we will investigatin this in this section. 

% maybe mentioned pca in theorem section. 
% section in theory what we discover in practice is time comsuming, howerver by applying PCA to small number of dimensions but stil able to recover
% 20 PCA: is a global thing..

\begin{figure}%[H]

\begin{center}
\includegraphics[width=1.\textwidth]{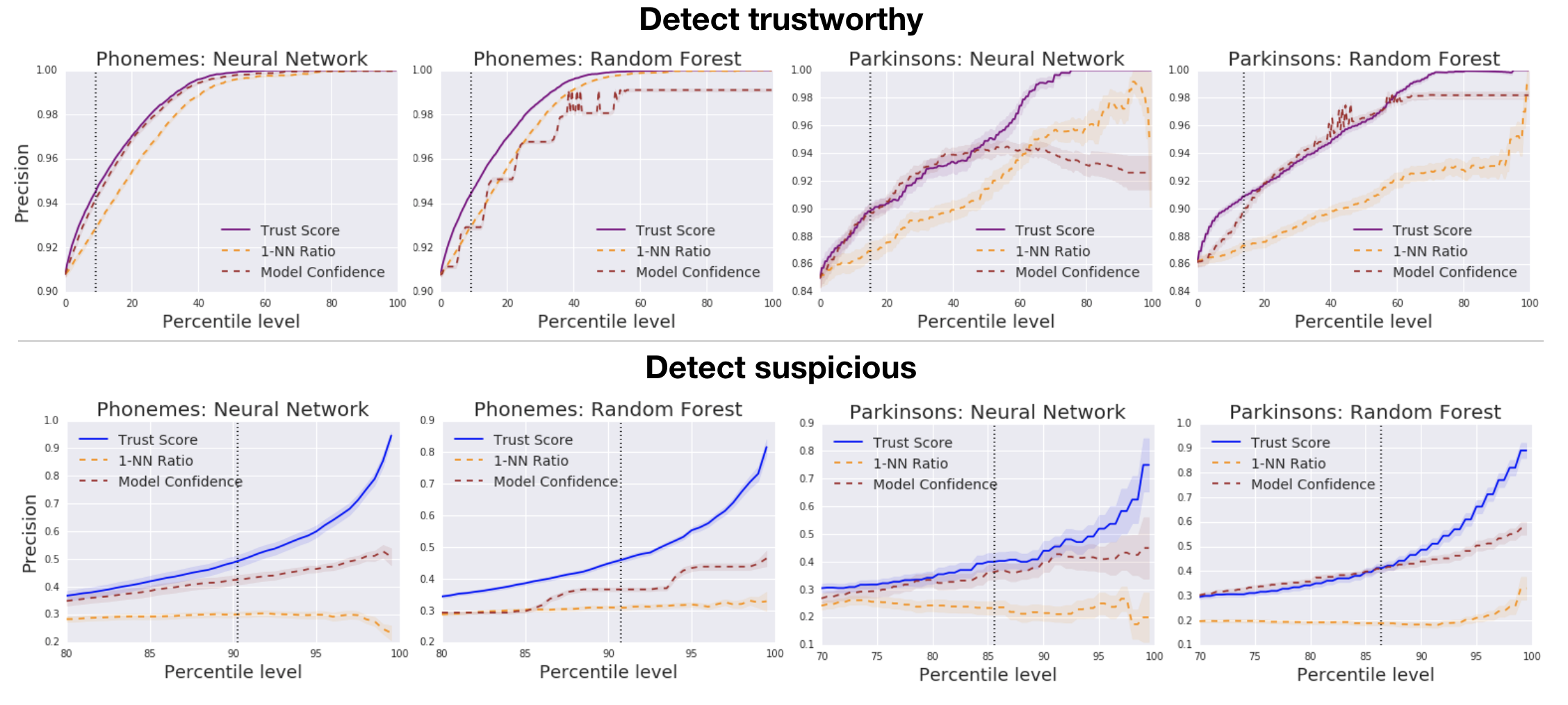}
\end{center}
\caption{Two example datasets and models. For predicting correctness (top row) the vertical dotted black line indicates error level of the trained classifier. For predicting incorrectness (bottom) the vertical black dotted line is the accuracy rate of the classifier. For detecting trustworthy, for each percentile level, we take the test examples whose trust score was above that percentile level and plot the percentage of those test points that were correctly classified by the classifier, and do the same model confidence and 1-nn ratio. For detecting suspicious, we take the negative of each signal and plot the precision of identifying incorrectly classified examples. Shown are average of $20$ runs with shaded standard error band. The trust score consistently attains a higher precision for each given percentile of classifier decision-rejection. Furthermore, the trust score generally shows increasing precision as the percentile level increases, but surprisingly, many of the comparison baselines do not. See the Appendix for the full results.}\label{fig:uci}
\vspace{-0.5cm}
\end{figure}

In this section, we empirically test whether trust scores can both detect examples that are incorrectly classified with high precision and be used as a signal to determine which examples are likely correctly classified. We perform this evaluation across (i) different datasets (Sections~\ref{sec:uci} and \ref{sec:cifar}), (ii) different families of classifiers (neural network, random forest and logistic regression) (Section~\ref{sec:uci}), (iii) classifiers with varying accuracy on the same task  (Section~\ref{sec:varyingacc}) and (iv) different representations of the data e.g. input data or activations of various intermediate layers in neural network (Section~\ref{sec:cifar}). 

First, we test if testing examples with high trust score corresponds to examples in which the model is correct ("identifying trustworthy examples"). Each method produces a numeric score for each testing example. For each method, we bin the data points by percentile value of the score (i.e. 100 bins). Given a recall percentile level (i.e. the $x$-axis on our plots), we take the performance of the classifier on the bins above the percentile level as the precision (i.e. the $y$-axis). 
Then, we take the negative of each signal and test if low trust score corresponds to the model being wrong ("identifying suspicious examples"). Here the $y$-axis is the misclassification rate and the $x$-axis corresponds to decreasing trust score or model confidence. 

In both cases, the higher the precision vs percentile curve, the better the method.
The vertical black dotted lines in the plots represent the omniscient ideal. For identifying trustworthy examples it is the error rate of the classifier and for identifying suspicious examples" it is the accuracy rate.

The baseline we use in Section is the model's own confidence score, which is similar to the approach of \cite{hendrycks2016baseline}. While calibrating the classifiers' confidence scores (i.e. transforming them into probability estimates of correctness) is an important related work \cite{guo2017calibration, platt1999probabilistic}, such techniques typically do not change the rankings of the score, at least in the binary case. Since we evaluate the trust score on its precision at a given recall {\it percentile level}, we are interested in the relative {\it ranking} of the scores rather than their absolute values. Thus, we do not compare against calibration techniques. There are surprisingly few methods aimed at identifying correctly or incorrectly classified examples with precision at a recall percentile level as noted in \cite{hendrycks2016baseline}.

{\bf Choosing Hyperparameters}: The two hyperparameters for the trust score are $\alpha$ and $k$. Throughout the experiments, we fix $k = 10$ and choose $\alpha$ using cross-validation over (negative) powers of $2$ on the training set. The metric for cross-validation was optimal performance on detecting suspicious examples at the percentile corresponding to the classifier's accuracy. The bulk of the computational cost for the trust-score is in $k$-nearest neighbor computations for training and $1$-nearest neighbor searches for evaluation. To speed things up for the larger datasets MNIST, SVHN, CIFAR-10 and CIFAR-100, we skipped the initial filtering step of Algorithm~\ref{alg:alpha-level-set} altogether and reduced the intermediate layers down to $20$ dimensions using PCA before being processed by the trust score which showed similar performance. We note that any approximation method (such as approximate instead of exact nearest neighbors) could have been used instead.

 % most of them does very poorly, predicting most of things are 'correct' (learning from very skewed dataset). This make sense since this is harder problem than classification itself. We didn't plot them because they hover lower end of the graph, but we show this graph in appendix. 

%The result sections is in threefold. For each section, we vary 1) models (NN, random forest and logistic regression) 2) training procedure (one NN retrained multiple times) 3) representation that the result is learned (e.g., input data, intermediate layer in NN). Across a set of UCI data sets, MNIST and ImageNet data on InceptionV3, we show that our method outperform all other baselines. 

% 1. d_closest
% 2. d_closest/sum(d)
% 3. d_closest/d_second closest

% b1. trained on (x, h(x)=y_test)
% b2. trained on (c(x), h(x)=y_test)
% b3. trained on ((c(x), x), h(x)=y_test)
%\scoreLower 
%\scoreUpper 
%\scoreLowerSpace
%\scoreUpperSpace 

\subsection{Performance on Benchmark UCI Datasets}\label{sec:uci}
In this section, we show performance on five benchmark UCI datasets \cite{friedman2001elements}, each for three kinds of classifiers (neural network, random forest and logistic regression). Due to space, we only show two data sets and two models in Figure~\ref{fig:uci}. The rest can be found in the Appendix. For each method and dataset, we evaluated with multiple runs. For each run we took a random stratified split of the dataset into two halves. One portion was used for training the trust score and the other was used for evaluation and the standard error is shown in addition to the average precision across the runs at each percentile level. The results show that our method consistently has a higher precision vs percentile curve than the rest of the methods across the datasets and models. This suggests the trust score considerably improves upon known methods as a signal for identifying trustworthy and suspicious testing examples for low-dimensional data. 

In addition to the model's own confidence score, we try one additional baseline, which we call the {\it nearest neighbor ratio (1-nn ratio)}. It is the ratio between the 1-nearest neighbor distance to the closest and second closest class, which can be viewed as an analogue to the trust score without knowledge of the classifier's hard prediction.

\subsection{Performance as Model Accuracy Varies}\label{sec:varyingacc}

In Figure~\ref{fig:accuracy}, we show how the performance of trust score changes as the accuracy of the classifier changes (averaged over 20 runs for each condition). We observe that as the accuracy of the model increases, while the trust score still performs better than model confidence, the amount of improvement diminishes. This suggests that as the model improves, 
%so does its ability to determine its own uncertainty and the less added information 
the information trust score can provide in addition to the model confidence decreases. 
However, as we show in Section~\ref{sec:cifar}, the trust score can still have added value even when the classifier is known to be of high performance on some benchmark larger-scale datasets.

\begin{figure}%[H]
\begin{center}
 \includegraphics[width=1.0\textwidth]{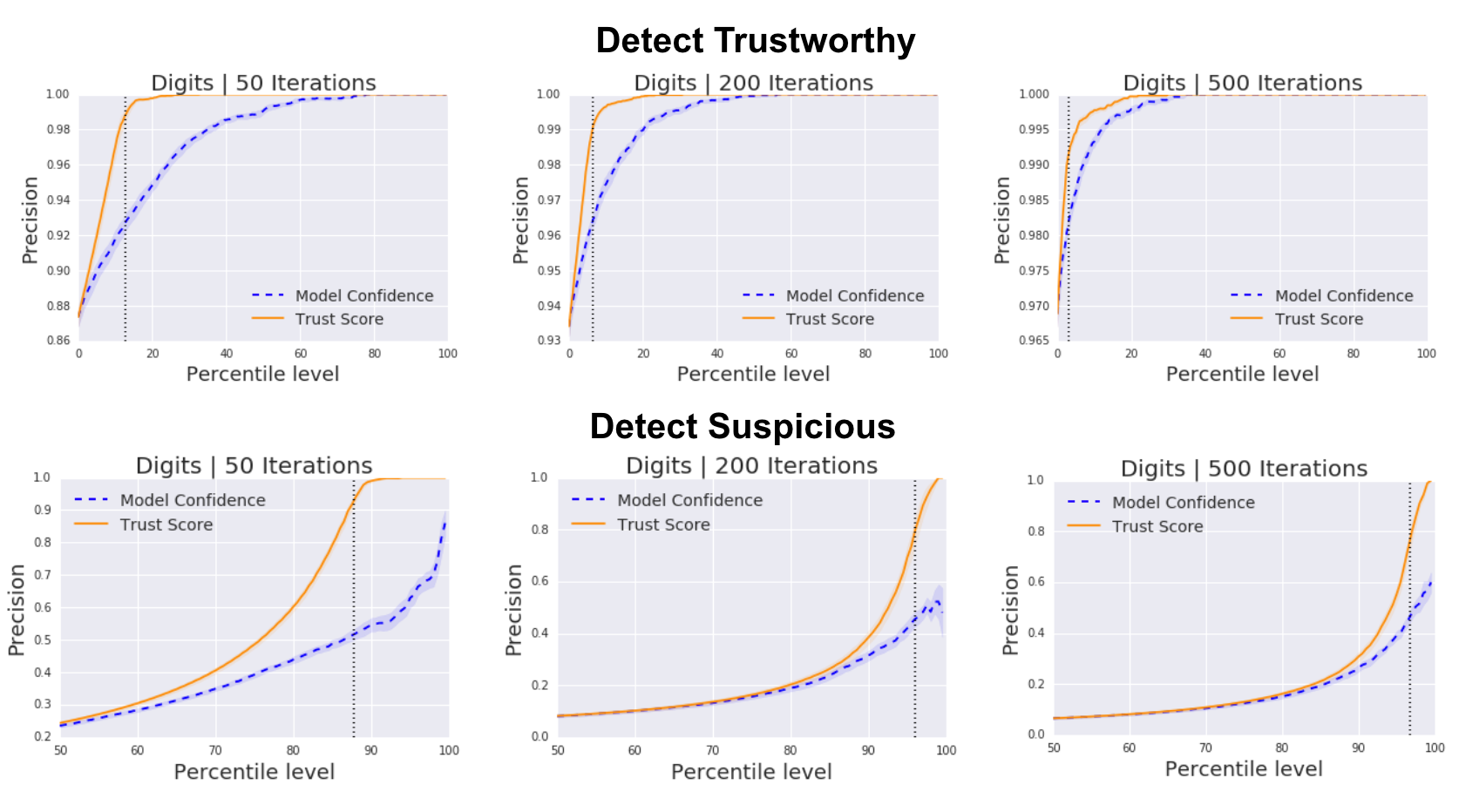}
 \end{center}
\caption{We show the performance of trust score on the Digits dataset for a neural network as we increase the accuracy. As we go from left to right, we train the network with more iterations (each with batch size $50$) thus increasing the accuracy indicated by the dotted vertical lines. 
%We see that the advantage of trust score over model confidence diminishes as model accuracy increases.
While the trust score still performs better than model confidence, the amount of improvement diminishes.
}\label{fig:accuracy}
\end{figure}

\subsection{Performance on MNIST, SVHN, CIFAR-10 and CIFAR-100 Datasets}\label{sec:cifar}

The MNIST handwritten digit dataset~\cite{lecun1998mnist} consists of 60,000 28$\times$28-pixel training images and 10,000 testing images in 10 classes. The SVHN dataset~\cite{netzer2011reading} consists of 73,257 32$\times$32-pixel colour training images and 26,032 testing images and also has 10 classes. The CIFAR-10 and CIFAR-100 datasets~\cite{krizhevsky2009learning} both consist of 60,000 32$\times$32-pixel colour images, with 50,000 training images and 10,000 test images. The CIFAR-10 and CIFAR-100 datasets are split evenly between 10 classes and 100 classes respectively.
 
We used a pretrained\footnote{https://github.com/geifmany/cifar-vgg} VGG-16~\cite{simonyan2014very} architecture with adaptation to the CIFAR datasets based on~\cite{Liu2015VeryDC}. The CIFAR-10 VGG-16 network achieves a test accuracy of 93.56\% while the CIFAR-100 network achieves a test accuracy of 70.48\%. We used pretrained, smaller CNNs for MNIST\footnote{https://github.com/EN10/KerasMNIST} and SVHN\footnote{https://github.com/tohinz/SVHN-Classifier}. The MNIST network achieves a test accuracy of 99.07\% and the SVHN network achieves a test accuracy of 95.45\%. All architectures were implemented in Keras~\cite{chollet2015keras}.

One simple generalization of our method is to use intermediate layers of a neural network as an input instead of the raw $x$. Many prior work suggests that a neural network may learn different representations of $x$ at each layer. As input to the trust score, we tried using 1) the logit layer, 2) the preceding fully connected layer with ReLU activation, 3) this fully connected layer, which has 128 dimensions in the MNIST network and 512 dimensions in the other networks, reduced down to 20 dimensions from applying PCA. 

The trust score results on various layers are shown in Figure~\ref{fig:highdim}. They suggest that for high dimensional datasets, the trust score may only provide little or no improvement over the model confidence at detecting trustworthy and suspicious examples. All plots were made using $\alpha=0$; using cross-validation to select a different $\alpha$ did not improve trust score performance. We also did not see much difference from using different layers. % \\

%We use complex image classification network .
%We observe higher layer returns better performance in general [which makes sense].

% our mehod + Logit layer is better than model confidence - somethings are lost by normalization which makes sense since confidence cannot be properly interpretated as probability.

%\subsection{White-box method: trust score performance on high dimensional NN using activation activations in layers instead of input data sets}
%\input{ToTrustResults}

\begin{figure}
\centering

\subfigure[MNIST] {
\includegraphics[width=0.315\linewidth]{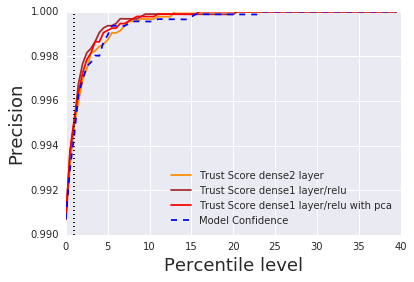}
}
\subfigure[SVHN] {
\includegraphics[width=0.315\linewidth]{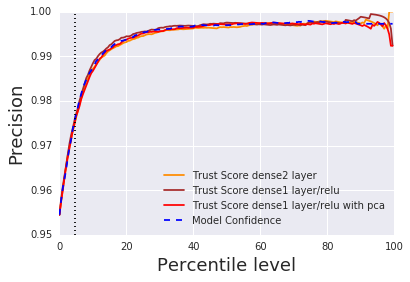}
}
\subfigure[CIFAR-10] {
\includegraphics[width=0.315\linewidth]{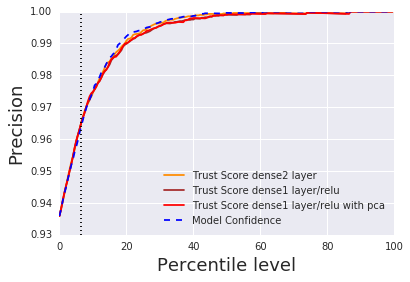}
}
%\subfigure[CIFAR-100] {
%\includegraphics[width=0.231\linewidth]{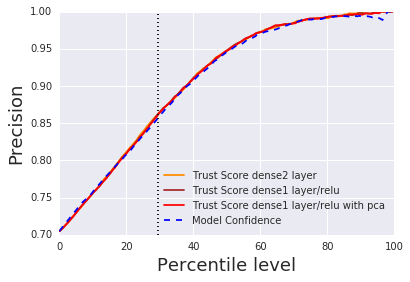}
%}
\subfigure[MNIST] {
\includegraphics[width=0.315\linewidth]{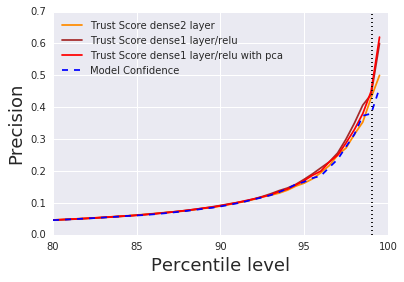}
}
\subfigure[SVHN] {
\includegraphics[width=0.315\linewidth]{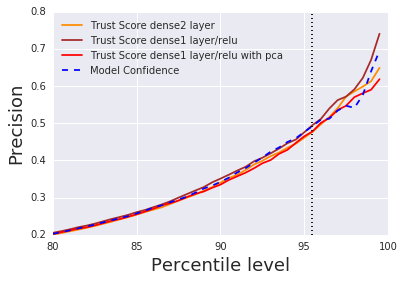}
}
\subfigure[CIFAR-10] {
\includegraphics[width=0.315\linewidth]{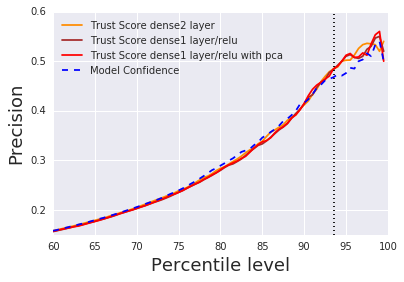}
}
%\subfigure[CIFAR-100] {
%\includegraphics[width=0.231\linewidth]{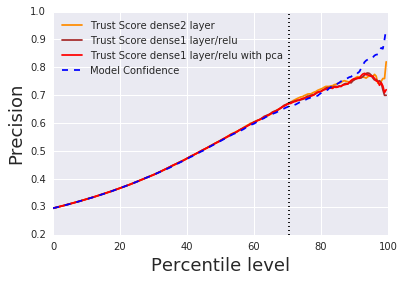}
%}
 \caption{Trust score results using convolutional neural networks on MNIST, SVHN, and CIFAR-10  datasets. Top row is detecting trustworthy; bottom row is detecting suspicious. Full chart with CIFAR-100 (which was essentially a negative result) is shown in the Appendix.}
\label{fig:highdim}
\end{figure}

{\bf \large Conclusion}:

In this paper, we provide the {\it trust score}: a new, simple, and effective way to judge if one should trust the prediction from a classifier. The trust score provides information about the relative positions of the datapoints, which may be lost in common approaches such as the model confidence when the model is trained using SGD.
We show high-probability non-asymptotic statistical guarantees that high (low) trust scores correspond to agreement (disagreement) with the Bayes-optimal classifier under various nonparametric settings, which build on recent results in topological data analysis. 
Our empirical results across many datasets, classifiers, and representations of the data show that our method consistently outperforms the classifier's own reported confidence in identifying trustworthy and suspicious examples in low to mid dimensional datasets. The theoretical and empirical results suggest that this approach may have important practical implications in low to mid dimension settings.

\newpage
{
\bibliography{paper}
\bibliographystyle{unsrt}
}

\newpage

{\Large \bf Appendix}
\appendix

\section{Supporting results for Theorem 1 Proof}

We need the following result giving guarantees on the empirical balls.
\begin{lemma}[Uniform convergence of balls \citep{chaudhuri2010rates}] \label{lemma:ball_bounds} 
Let $\mathcal{F}$ be the distribution corresponding to $f$ and $\mathcal{F}_n$ be the empirical distribution corresponding to the sample $X$.
Pick $0 < \delta < 1$. Assume that $k \ge D \log n$. Then with probability at least $1 - \delta$, for every ball $B \subset \mathbb{R}^D$ we have
\begin{align*}
\mathcal{F}(B) \ge C_{\delta, n} \frac{\sqrt{D \log n}}{n} &\Rightarrow \mathcal{F}_n(B) > 0\\
\mathcal{F}(B) \ge \frac{k}{n} + C_{\delta, n} \frac{\sqrt{k}}{n} &\Rightarrow \mathcal{F}_n(B) \ge \frac{k}{n} \\
\mathcal{F}(B) \le \frac{k}{n} - C_{\delta, n}\frac{\sqrt{k}}{n} &\Rightarrow \mathcal{F}_n(B) < \frac{k}{n},
\end{align*}
where $C_{\delta, n} = 16 \log(2/\delta) \sqrt{D \log n}$
\end{lemma}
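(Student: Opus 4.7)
\medskip
\noindent\textbf{Proof plan.}

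The plan is to reduce the statement to a standard relative VC concentration inequality applied to the class of closed Euclidean balls in $\mathbb{R}^D$. Let $\mathcal{B} := \{B(x,r) : x \in \mathbb{R}^D,\ r \ge 0\}$. A classical shattering argument shows that $\mathcal{B}$ has VC dimension at most $D+1$, so its shatter coefficient satisfies $s(\mathcal{B}, m) \le (m+1)^{D+1}$. This is the only geometric input; everything else is empirical-process machinery.

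Next, I would invoke the one-sided relative Vapnik--Chervonenkis inequality: for any class $\mathcal{A}$ of VC dimension $v$, with probability at least $1 - \delta$, simultaneously for every $A \in \mathcal{A}$,
\begin{align*}
\mathcal{F}(A) - \mathcal{F}_n(A) &\le \epsilon_n \sqrt{\mathcal{F}(A)}, \\
\mathcal{F}_n(A) - \mathcal{F}(A) &\le \epsilon_n \sqrt{\mathcal{F}(A)},
\end{align*}
where $\epsilon_n$ is of order $\sqrt{(v \log n + \log(1/\delta))/n}$. Plugging in $v \le D+1$ and tracking constants (taking the standard bound $4 s(\mathcal{B}, 2n) \exp(-n\epsilon_n^2/4) \le \delta/2$ for each side and union-bounding), one obtains $\epsilon_n \le C \log(2/\delta)\sqrt{D \log n / n}$ for a small absolute constant, which is exactly the shape of $C_{\delta,n}/\sqrt{n}$ appearing in the statement. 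This is the master bound from which all three implications will drop out.

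Having the master bound, the three implications are straightforward algebraic consequences. For the first, take the contrapositive: if $\mathcal{F}_n(B) = 0$, then $\mathcal{F}(B) \le \epsilon_n \sqrt{\mathcal{F}(B)}$, hence $\mathcal{F}(B) \le \epsilon_n^2 \lesssim D \log n \cdot \log^2(2/\delta)/n$, which (after matching constants) contradicts the assumed lower bound $\mathcal{F}(B) \ge C_{\delta,n}\sqrt{D \log n}/n$. For the second, suppose $\mathcal{F}(B) \ge k/n + C_{\delta,n}\sqrt{k}/n$; then using $\mathcal{F}_n(B) \ge \mathcal{F}(B) - \epsilon_n\sqrt{\mathcal{F}(B)}$ and the elementary inequality $\mathcal{F}(B) - \epsilon_n\sqrt{\mathcal{F}(B)} \ge k/n$ whenever $\mathcal{F}(B) \ge k/n + \epsilon_n\sqrt{k/n}$ (verified by moving terms and squaring), we get $\mathcal{F}_n(B) \ge k/n$. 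The third implication is symmetric, using the upper deviation bound.

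The main obstacle I expect is purely bookkeeping: reconciling the constant $C_{\delta,n} = 16\log(2/\delta)\sqrt{D \log n}$ with whichever form of the relative VC bound one cites. The $\sqrt{D \log n}$ factor inside $C_{\delta,n}$ is absorbing both the $\sqrt{v}$ from the VC dimension and a $\sqrt{\log n}$ from the shatter coefficient $(2n)^{v}$, while the $\log(2/\delta)$ factor comes from inverting the exponential tail and the union bound over the two one-sided events; the factor of $16$ is a conservative absolute constant. Once the master bound is stated with matching constants, the three implications require no further probabilistic argument, only the elementary algebraic manipulations sketched above.
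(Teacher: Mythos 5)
The paper does not reproduce a proof of this lemma; it is imported verbatim from Chaudhuri and Dasgupta (2010), and your sketch reconstructs exactly the argument underlying that source: bound the VC dimension of Euclidean balls in $\mathbb{R}^D$ by $D{+}1$, invoke the one-sided relative VC deviation inequalities, and translate them into the three threshold implications. So you are on the same route as the cited reference, not a different one.

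Two small bookkeeping points to tighten before this is airtight. First, the ``elementary inequality'' you invoke for the second implication is not quite right as stated: $\mathcal{F}(B) \ge k/n + \epsilon_n\sqrt{k/n}$ does \emph{not} imply $\mathcal{F}(B) - \epsilon_n\sqrt{\mathcal{F}(B)} \ge k/n$ (plug in equality and the left side is $k/n + \epsilon_n\sqrt{k/n} - \epsilon_n\sqrt{k/n + \epsilon_n\sqrt{k/n}} < k/n$). The correct sufficient condition is $\mathcal{F}(B) \ge \left(\sqrt{k/n} + \epsilon_n\right)^2 = k/n + 2\epsilon_n\sqrt{k/n} + \epsilon_n^2$. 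Second, the standard one-sided relative deviation bound controls $\mathcal{F}_n(A) - \mathcal{F}(A)$ normalized by $\sqrt{\mathcal{F}_n(A)}$, not $\sqrt{\mathcal{F}(A)}$ as you wrote; converting to the $\sqrt{\mathcal{F}(A)}$-normalized form costs an additive $\epsilon_n^2$. Neither of these is a genuine gap: the assumption $k \ge D\log n$ ensures $\epsilon_n^2 \lesssim \epsilon_n\sqrt{k/n}$, and the factor $16$ in $C_{\delta,n}$ is deliberately loose to absorb exactly this kind of slack, so the three implications do follow once the constants are tracked carefully.
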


\begin{remark}
For the rest of the paper, many results are qualified to hold with probability at least $1 - \delta$. This is 
precisely the event in which Lemma~\ref{lemma:ball_bounds} holds.
\end{remark}
\begin{remark}
If $\delta = 1/n$, then $C_{\delta, n} = O((\log n)^{3/2})$.
\end{remark}

To analyze Algorithm~\ref{alg:alpha-level-set}, we use the $k$-NN density estimator\cite{devroye1994strong}, defined below.
\begin{definition}
\label{def:knn}
Define the {$k$-NN radius} of $x\in \mathbb{R}^D$ as 
\begin{align*}
r_k(x) := \inf\{r > 0: |X \cap B(x, r)| \ge k \},
\end{align*} 
\end{definition}
\begin{definition} [k-NN Density Estimator] \label{def:kNNdensity}
\begin{align*}
f_k(x) := \frac{k}{n\cdot v_D\cdot r_k(x)^D},
\end{align*}
where $v_D$ is the volume of a unit ball in $\mathbb{R}^D$. 
\end{definition}

We will use bounds on the $k$-NN density estimator from \cite{dasgupta2014optimal}, which are repeated here.

Define the following one-sided modulus of continuity which characterizes how much the density increases locally:
\begin{align*}
\hat{r}(\epsilon, x) &:=\sup\left\{r : \sup_{x' \in B(x, r)} f(x') - f(x) \le \epsilon \right\}.
\end{align*}
\begin{lemma} [Lemma 3 of \cite{dasgupta2014optimal}]
\label{lemma:fk_upperbound}
Suppose that $k \ge 4 C_{\delta, n}^2$. Then with probability at least $1-\delta$, the following holds for all $x\in \mathbb{R}^D$ and $\epsilon > 0$.
\begin{align*}
f_k(x) < \left(1 + 2 \frac{C_{\delta, n}}{\sqrt{k}} \right) (f(x) + \epsilon),
\end{align*}
provided $k$ satisfies $v_D \cdot \hat{r}(x, \epsilon)^D \cdot (f(x) + \epsilon) \ge \frac{k}{n} + C_{\delta, n}\frac{\sqrt{k}}{n}$.
\end{lemma}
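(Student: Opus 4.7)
The plan is to bound $r_k(x)$ from below using the uniform ball-mass deviation bounds of Lemma~\ref{lemma:ball_bounds}, and then translate that lower bound on the $k$-NN radius into an upper bound on $f_k(x) = k/(n v_D r_k(x)^D)$ via its definition.

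First, I would observe that by the definition of $r_k(x)$, if a radius $r > 0$ satisfies $\mathcal{F}_n(B(x,r)) < k/n$, then automatically $r_k(x) > r$. The third implication of Lemma~\ref{lemma:ball_bounds} gives precisely such a guarantee provided $\mathcal{F}(B(x,r)) \le k/n - C_{\delta,n}\sqrt{k}/n$. So the task reduces to finding the largest radius $r$ on which this population mass bound can be enforced.

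Second, I would use the one-sided modulus $\hat{r}(\epsilon, x)$ to control the density locally: as long as $r \le \hat{r}(\epsilon, x)$, every $x' \in B(x,r)$ satisfies $f(x') \le f(x) + \epsilon$, and consequently $\mathcal{F}(B(x,r)) \le v_D r^D (f(x) + \epsilon)$. I would choose $r$ so that $v_D r^D (f(x) + \epsilon) = k/n - C_{\delta,n}\sqrt{k}/n$; the auxiliary hypothesis $v_D \hat{r}(\epsilon, x)^D (f(x) + \epsilon) \ge k/n + C_{\delta,n}\sqrt{k}/n$ is exactly what places this chosen $r$ within $\hat{r}(\epsilon, x)$, validating the modulus bound. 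The conclusion $r_k(x) > r$ then gives $r_k(x)^D > \frac{k/n - C_{\delta,n}\sqrt{k}/n}{v_D(f(x)+\epsilon)}$, and hence $f_k(x) < \frac{f(x)+\epsilon}{1 - C_{\delta,n}/\sqrt{k}}$.

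Finally, I would invoke the elementary inequality $1/(1-t) \le 1 + 2t$, valid for $t \in [0, 1/2]$, applied with $t = C_{\delta,n}/\sqrt{k}$. The assumption $k \ge 4 C_{\delta,n}^2$ is exactly $t \le 1/2$, yielding $f_k(x) < (1 + 2C_{\delta,n}/\sqrt{k})(f(x) + \epsilon)$ as claimed. The main obstacle is simply the bookkeeping: ensuring that the chosen radius $r$ sits inside the range where the modulus $\hat{r}(\epsilon, x)$ controls the density, which is what dictates the precise form of the auxiliary hypothesis involving $k$, $f(x)$, $\epsilon$, and $\hat{r}(\epsilon, x)$. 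Once that alignment is in place, the algebra converting a radius lower bound into a density upper bound is routine, and the uniformity over $x$ and $\epsilon$ follows immediately from the uniformity already supplied by Lemma~\ref{lemma:ball_bounds}.
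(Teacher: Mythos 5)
The paper does not prove this lemma; it imports it verbatim from Dasgupta and Kpotufe (2014), so there is no in-paper proof to compare against. Your reconstruction is correct and is in fact the standard argument from that reference: set $r$ so that $v_D r^D (f(x)+\epsilon) = k/n - C_{\delta,n}\sqrt{k}/n$, use the modulus $\hat{r}(\epsilon,x)$ to ensure $\mathcal{F}(B(x,r))$ sits below that level, invoke the third implication of the ball-mass lemma to conclude $r_k(x) > r$, and then convert to a bound on $f_k$ together with $1/(1-t)\le 1+2t$ for $t\le 1/2$. One small wording quibble: your derivation only requires $v_D\,\hat{r}(\epsilon,x)^D(f(x)+\epsilon) \ge k/n - C_{\delta,n}\sqrt{k}/n$ to place $r$ inside the modulus range, whereas the lemma's hypothesis demands $\ge k/n + C_{\delta,n}\sqrt{k}/n$; you describe the stated hypothesis as ``exactly'' what is needed, when it is in fact a (harmless) strengthening used in the original paper for symmetry with the lower-bound lemma. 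Also worth noting that the choice of $r$ is well-defined since $k\ge 4C_{\delta,n}^2 > C_{\delta,n}^2$ makes $k/n - C_{\delta,n}\sqrt{k}/n$ strictly positive.
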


Analogously, define the following which characterizes how much the density decreases locally:
\begin{align*}
\check{r}(\epsilon, x) &:=\sup\left\{r : \sup_{x' \in B(x, r)} f(x) - f(x') \le \epsilon \right\}.
\end{align*}
\begin{lemma} [Lemma 4 of \cite{dasgupta2014optimal}] \label{lemma:fk_lowerbound}
Suppose that $k \ge 4 C_{\delta, n}^2$. Then with probability at least $1-\delta$, the following holds for all $x\in \mathbb{R}^D$ and $\epsilon > 0$.
\begin{align*}
f_k(x) \ge \left(1 -2\frac{C_{\delta, n}}{\sqrt{k}} \right) (f(x) - \epsilon),
\end{align*}
provided $k$ satisfies $v_D \cdot \check{r}(x, \epsilon)^D \cdot (f(x) - \epsilon) \ge \frac{k}{n} - C_{\delta, n}\frac{\sqrt{k}}{n}$.
\end{lemma}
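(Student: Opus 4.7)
The strategy is to convert the desired lower bound on $f_k(x)$ into an upper bound on the $k$-NN radius $r_k(x)$ via the identity $f_k(x) = k/(n v_D r_k(x)^D)$, and then to produce such an upper bound by exhibiting a ball whose true mass is large enough that Lemma~\ref{lemma:ball_bounds} forces its empirical mass to be at least $k/n$.

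The first step is to identify a candidate radius $r^*$. The natural choice is the $r^*$ for which $v_D (r^*)^D (f(x) - \epsilon) = k/n + C_{\delta,n}\sqrt{k}/n$, because this is precisely the threshold triggering the second implication in Lemma~\ref{lemma:ball_bounds}. I would use the hypothesis $v_D \check{r}(x,\epsilon)^D(f(x) - \epsilon) \ge k/n - C_{\delta,n}\sqrt{k}/n$, combined with $k \ge 4 C_{\delta,n}^2$ (so that $C_{\delta,n}/\sqrt{k} \le 1/2$), to verify that $r^* \le \check{r}(x,\epsilon)$. With $r^* \le \check{r}(x,\epsilon)$ established, the definition of $\check{r}$ gives $f(y) \ge f(x) - \epsilon$ for every $y \in B(x, r^*)$, so
\begin{align*}
\mathcal{F}(B(x, r^*)) \;\ge\; v_D (r^*)^D (f(x) - \epsilon) \;=\; \frac{k}{n} + C_{\delta,n}\frac{\sqrt{k}}{n}.
\end{align*}

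Next, applying the second implication of Lemma~\ref{lemma:ball_bounds} (which holds on the high-probability event throughout the appendix) yields $\mathcal{F}_n(B(x, r^*)) \ge k/n$, i.e., $B(x, r^*)$ contains at least $k$ sample points. By the definition of $r_k(x)$, this forces $r_k(x) \le r^*$. Plugging into the $k$-NN density identity,
\begin{align*}
f_k(x) \;=\; \frac{k}{n v_D r_k(x)^D} \;\ge\; \frac{k}{n v_D (r^*)^D} \;=\; \frac{k/n}{k/n + C_{\delta,n}\sqrt{k}/n}\,(f(x) - \epsilon) \;=\; \frac{f(x) - \epsilon}{1 + C_{\delta,n}/\sqrt{k}}.
\end{align*}
Finishing with the elementary inequality $1/(1+u) \ge 1 - 2u$ for $u \in [0, 1/2]$, valid since $C_{\delta,n}/\sqrt{k} \le 1/2$, recovers the claimed bound $f_k(x) \ge (1 - 2 C_{\delta,n}/\sqrt{k})(f(x) - \epsilon)$.

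The main obstacle is the first step: reconciling the candidate radius $r^*$, which is pinned down by the threshold $k/n + C_{\delta,n}\sqrt{k}/n$, with the hypothesis, which is phrased in terms of $k/n - C_{\delta,n}\sqrt{k}/n$. The two thresholds differ by a multiplicative factor of $(1 + C_{\delta,n}/\sqrt{k})/(1 - C_{\delta,n}/\sqrt{k})$, and absorbing this factor is exactly what the assumption $k \ge 4 C_{\delta,n}^2$ is designed for; careful bookkeeping of this slack is what produces the factor of $2$ in the final bound. Once the first step is settled, the remaining manipulations are routine applications of the uniform ball bound and the density estimator identity.
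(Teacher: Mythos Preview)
The paper does not prove this lemma; it is quoted as Lemma~4 of \cite{dasgupta2014optimal} and invoked as a black box, so there is no in-paper argument to compare against. Your outline is the standard route to such a $k$-NN density lower bound, and steps two through four are correct.

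The gap is in your final paragraph. Write $u = C_{\delta,n}/\sqrt{k}$. Your $r^*$ is defined by $v_D(r^*)^D(f(x)-\epsilon) = (k/n)(1+u)$, and you need $r^*\le \check r(x,\epsilon)$ so that $f\ge f(x)-\epsilon$ on all of $B(x,r^*)$. The hypothesis as printed only gives $v_D\,\check r(x,\epsilon)^D(f(x)-\epsilon)\ge (k/n)(1-u)$, which is strictly weaker than $(k/n)(1+u)$. There is no way to ``absorb'' the ratio $(1+u)/(1-u)$ at this point, because the step in question is a hard containment $r^*\le \check r$, not an inequality into which multiplicative slack can be pushed downstream. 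Concretely, if $r^* > \check r(x,\epsilon)$ then the pointwise bound $f\ge f(x)-\epsilon$ fails on $B(x,r^*)\setminus B(x,\check r)$, and the best mass estimate available is $\mathcal F(B(x,r^*))\ge \mathcal F(B(x,\check r))\ge (k/n)(1-u)$, which sits below the $(k/n)(1+u)$ threshold needed to invoke Lemma~\ref{lemma:ball_bounds}. The assumption $k\ge 4C_{\delta,n}^2$ only enforces $u\le 1/2$; it does not bridge $(1-u)$ to $(1+u)$.

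The likely explanation is that the minus sign in the transcribed hypothesis is a slip: the companion upper-bound Lemma~\ref{lemma:fk_upperbound} carries the plus sign, and the manifold analogue (Lemma~\ref{lemma:manifold_fk_lower_bound}) states the condition as $\ge \tfrac{4}{3}\bigl(k/n + C_{\delta,n}\sqrt{k}/n\bigr)$. With the hypothesis read as $\ge k/n + C_{\delta,n}\sqrt{k}/n$, your argument goes through verbatim---indeed with the sharper factor $1 - C_{\delta,n}/\sqrt{k}$ in place of $1 - 2C_{\delta,n}/\sqrt{k}$, since $1/(1+u)\ge 1-u$---and the ``main obstacle'' you flag simply disappears.
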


\section{Proof of Theorem~\ref{theo:levelset}}

In this section, we assume the conditions of Theorem~\ref{theo:levelset}.
We first show that $\lambda_\alpha$, that is the density level corresponding to the $\alpha$-high-density-set, is smooth in $\alpha$.
\begin{lemma}\label{lemma:lambdalapha_smoothness}
There exists constants $C_1, r_1 > 0$ depending on $f$ such that the following holds for all $0 < \epsilon < r_1$ such that
\begin{align*}
     0 < \lambda_{\alpha } - \lambda_{\alpha - \epsilon} \le C_1 \epsilon^{\beta/D} \text{ and } 
    0 < \lambda_{\alpha + \epsilon} - \lambda_{\alpha} \le C_1 \epsilon^{\beta/D}.
\end{align*}
\end{lemma}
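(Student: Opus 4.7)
The plan is to recast the lemma as a two-sided growth estimate for the CDF of the random variable $f(X)$ with $X \sim f$. Define $G(\lambda) := \int_{\mathcal{X}} 1[f(x) \le \lambda] f(x) \, dx$, so that $G$ is continuous and nondecreasing with $G(\lambda_\alpha) = \alpha$; hence $\lambda_\alpha$ is the $\alpha$-quantile of $f(X)$. The claimed bounds are equivalent to the local growth estimates
\begin{equation*}
G(\lambda_\alpha + \tau) - G(\lambda_\alpha) \;\ge\; c \cdot \tau^{D/\beta}, \qquad G(\lambda_\alpha) - G(\lambda_\alpha - \tau) \;\ge\; c \cdot \tau^{D/\beta},
\end{equation*}
for small $\tau > 0$, after which setting $\tau := |\lambda_{\alpha \pm \epsilon} - \lambda_\alpha|$ and using $G(\lambda_{\alpha \pm \epsilon}) - G(\lambda_\alpha) = \pm \epsilon$ delivers the stated Hölder exponent $\beta/D$.

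For the upper-side estimate I would work with the inner shell $S^+_\tau := \{x : \lambda_\alpha \le f(x) \le \lambda_\alpha + \tau\}$. Fix any boundary point $x_0 \in \partial H_\alpha(f)$ and set $r := (\tau / \hat{C}_\beta)^{1/\beta}$. Assumption~\ref{assumption:levelset}.2, read as $\text{Vol}(B(x_0, r) \cap H_\alpha(f)) \ge \rho r^D$ (the only interpretation consistent with its stated intent that $H_\alpha(f)$ not get arbitrarily thin), produces a full-dimensional piece of $H_\alpha(f)$ of volume at least $\rho r^D$ inside $B(x_0, r)$. Every point $y$ in this piece has $d(y, \partial H_\alpha(f)) \le r$, so the upper half of Assumption~\ref{assumption:levelset}.1, interpreted in its standard level-set form as a bound in terms of distance to $\partial H_\alpha(f)$, gives $f(y) \le \lambda_\alpha + \hat{C}_\beta r^\beta = \lambda_\alpha + \tau$. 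Hence this piece lies in $S^+_\tau$, so $\text{Vol}(S^+_\tau) \ge \rho r^D$, and integrating against $f \ge \lambda_\alpha$ gives $G(\lambda_\alpha + \tau) - G(\lambda_\alpha) \ge \lambda_\alpha \rho r^D = (\lambda_\alpha \rho / \hat{C}_\beta^{D/\beta}) \tau^{D/\beta}$, as desired.

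The lower-side estimate is symmetric. With the outer shell $S^-_\tau := \{x : \lambda_\alpha - \tau \le f(x) < \lambda_\alpha\}$, the upper half of Assumption~\ref{assumption:levelset}.1 places the outer $r$-tube $\{x \notin H_\alpha(f) : d(x, \partial H_\alpha(f)) \le r\}$ inside $S^-_\tau$ for $r = (\tau/\hat{C}_\beta)^{1/\beta}$. A matching $\ge c' r^D$ lower bound on the tube's volume follows from the analogue of Assumption~\ref{assumption:levelset}.2 on the complement side (equivalently, from applying that assumption at $x_0 \in \partial H_\alpha(f)$ to the slightly larger level set $H_{\alpha-\epsilon}(f)$, which contains $\partial H_\alpha(f)$ by continuity of $f$, and subtracting the inside-$H_\alpha$ contribution). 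The same computation with $f \ge \lambda_\alpha - \tau \ge \lambda_\alpha/2$ closes the argument.

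The main obstacle will be reconciling Assumption~\ref{assumption:levelset}.1 with the two-sided (inner/outer) use of the shell: taken literally, $d(x, H_\alpha(f))$ vanishes on $H_\alpha(f)$ and the inequality is uninformative there, whereas the upper-side argument needs control on $f$ just inside $H_\alpha(f)$. The standard and only nontrivial reading is $d(x, H_\alpha(f)) = d(x, \partial H_\alpha(f))$, matching the conventional formulation in the level-set estimation literature. The conceptually essential step is the pairing of the shell's thickness $r \sim \tau^{1/\beta}$ (from the density-growth exponent $\beta$) with its volume lower bound $\sim r^D$ (from the fat-set Assumption~\ref{assumption:levelset}.2): together they convert Hölder exponent $\beta$ for $f$ near the level into Hölder exponent $\beta/D$ for the quantile $\lambda_\alpha$ in $\alpha$.
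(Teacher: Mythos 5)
Your proposal follows essentially the same route as the paper's proof: lower-bound the mass of the shell $\{x : \lambda_{\alpha-\epsilon} < f(x) \le \lambda_\alpha\}$ (which equals $\epsilon$) by the density times the volume of a tube around $\partial H_\alpha(f)$ of thickness $r \sim (\lambda_\alpha - \lambda_{\alpha-\epsilon})^{1/\beta}$, using the upper half of Assumption~\ref{assumption:levelset}.1 to show the tube sits inside the shell, and a $\gtrsim r^D$ volume bound on the tube to finish. Recasting this as a local growth estimate on the CDF $G(\lambda) = \int 1[f\le\lambda]f$ is a clean way to organize the same computation. Two small comments. First, for the lower-side ($\lambda_\alpha - \lambda_{\alpha-\epsilon}$) direction, the paper uses the \emph{outer} tube $(H_\alpha(f) + B(0,r))\setminus H_\alpha(f)$, whose volume lower bound $\ge C'r^D$ the paper simply asserts; your suggested justifications (a ``complement-side analogue'' of Assumption~\ref{assumption:levelset}.2, or subtracting the inside contribution from the ball-volume bound applied to $H_{\alpha-\epsilon}(f)$) do not actually close this, since the subtraction can vanish. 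Fortunately the asserted bound is true for any closed set with nonempty complement by an elementary geometric argument (place a ball of radius $r/4$ centered at a point at distance exactly $r/2$ from $H_\alpha(f)$ along a normal direction), so nothing beyond compactness is needed. Second, your upper-side ($\lambda_{\alpha+\epsilon}-\lambda_\alpha$) argument via the inner tube and the fat-set reading $\mathrm{Vol}(B(x_0,r)\cap H_\alpha(f))\ge\rho r^D$ is exactly the ``analogous'' argument the paper elides, and your observation that Assumption~\ref{assumption:levelset}.1 must be read with $d(x,\partial H_\alpha(f))$ in place of $d(x,H_\alpha(f))$ for it to say anything on the inside of the level set is correct and worth flagging.
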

\begin{proof}
We have
\begin{align*}
\epsilon &= \int_{\mathcal{X}} 1[\lambda_{\alpha - \epsilon} < f(x) \le \lambda_{\alpha} ] \cdot f(x) dx \ge 
\lambda_{\alpha - \epsilon} \int_{\mathcal{X}} 1[\lambda_{\alpha-\epsilon} < f(x) \le \lambda_{\alpha} ]  dx,
\end{align*}
where the first equality holds by definition.
Choosing $\epsilon$ sufficiently small such that Assumption~\ref{assumption:levelset} holds, we have  
\begin{align*}
&\lambda_{\alpha -\epsilon} \int_{\mathcal{X}} 1[\lambda_{\alpha - \epsilon} < f(x) \le \lambda_{\alpha} ]  dx \\
&\ge \lambda_{\alpha -\epsilon} \cdot \text{Vol}\left(\left(H_\alpha(x) + B\left(0,((\lambda_{\alpha - \epsilon} - \lambda_{\alpha})/\widehat{C}_\beta)^{1/\beta} \right)\right) \backslash H_{\alpha}(f) \right)\\
&\ge \lambda_{\alpha -\epsilon} \cdot C' ((\lambda_{\alpha - \epsilon} - \lambda_{\alpha})/\widehat{C}_\beta)^{D/\beta},
\end{align*}
where the last inequality holds for some constant $C'$ depending on $f$ and $\text{Vol}$ is the volume w.r.t. to the Lebesgue measure in $\mathbb{R}^D$.
It then follows that 
\begin{align*}
\lambda_{\alpha - \epsilon} - \lambda_{\alpha} \le \widehat{C}_\beta \left(\frac{\epsilon}{\lambda_{\alpha-\epsilon} \cdot C'}\right)^{\beta/D},
\end{align*}
and the result for the first part follows by taking $C_1 \le \widehat{C}_\beta\cdot (\lambda_{\alpha - r_1} \cdot C')^{-\beta/D}$ and $r_1 < \alpha$.
Showing that $0 < \lambda_{\alpha + \epsilon} - \lambda_{\alpha} \le C_1 \epsilon^{\beta/D}$ can be done analogously and is omitted here.
\end{proof}

The next result gets a handle on the density level corresponding to $\alpha$ returned by Algorithm~\ref{alg:alpha-level-set}.
\begin{lemma}\label{lemma:level_bound}
Let $0 < \delta < 1$. Let $\widehat{\varepsilon}$ be the $\varepsilon$ setting chosen by Algorithm~\ref{alg:alpha-level-set}. Define 
\begin{align*}
    \widehat{\lambda_\alpha} := \frac{k}{v_D \cdot n \cdot \widehat{\varepsilon}^D}.
\end{align*}
Then, with probability at least $1 - \delta$, we have there exist constant $C_1 > 0$ depending on $f$ such that for $n$ sufficiently large depending on $f$, we have
\begin{align*}
       |\widehat{\lambda}_\alpha - \lambda_\alpha| \le C_1 \left(\left(\sqrt{\frac{\log(1 / \delta)}{n}}\right)^{\beta/D} + \frac{\log(1/\delta)\sqrt{\log n}}{\sqrt{k}}\right).
\end{align*}
\end{lemma}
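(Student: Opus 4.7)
The plan is to exploit the bijection between the algorithm's threshold $\widehat\varepsilon$ and the empirical density level $\widehat\lambda_\alpha$: since $f_k(x) = k/(v_D n\, r_k(x)^D)$, the condition $r_k(x) > \widehat\varepsilon$ is equivalent to $f_k(x) < \widehat\lambda_\alpha$. Hence Algorithm~\ref{alg:alpha-level-set}'s definition of $\widehat\varepsilon$ is really choosing $\widehat\lambda_\alpha$ so that the empirical fraction of sample points on which the $k$-NN density estimate falls below $\widehat\lambda_\alpha$ equals $\alpha$ up to an additive $1/n$. The strategy is then (i) to transfer this empirical statement about $f_k$ into an empirical statement about $f$ using the uniform $k$-NN density bounds of Lemmas~\ref{lemma:fk_upperbound}--\ref{lemma:fk_lowerbound}, (ii) to transfer the empirical statement about $f$ into one about the true mass $\int 1[f<t]f\,dx$ using the concentration of Lemma~\ref{lemma:ball_bounds}, and (iii) to invert the relation between $\alpha$ and $\lambda_\alpha$ using the Hölder-type smoothness in Lemma~\ref{lemma:lambdalapha_smoothness}.

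More concretely, first I would apply Lemmas~\ref{lemma:fk_upperbound} and~\ref{lemma:fk_lowerbound} with $\epsilon$ chosen using the $\beta$-regularity from Assumption~\ref{assumption:levelset} near $\partial H_\alpha(f)$. The condition $C_l \log n \le k \le C_u (\log n)^{D(2\beta+D)} n^{2\beta/(2\beta+D)}$ assumed in Theorem~\ref{theo:levelset} guarantees that the required lower bound $v_D \hat r(x,\epsilon)^D (f(x)\pm\epsilon) \gtrsim k/n$ can be satisfied for points in a neighborhood of $\partial H_\alpha(f)$, giving a uniform multiplicative error
$$\bigl|f_k(x) - f(x)\bigr| \;\lesssim\; \frac{C_{\delta,n}}{\sqrt{k}} \cdot f(x) \;+\; \epsilon \;=:\; \Delta_k,$$
with $\Delta_k = O(\log(1/\delta)\sqrt{\log n}/\sqrt{k})$ after absorbing the smoothness error (which is of lower order in the allowed range of $k$). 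This step is the main obstacle, since it requires the level-set regularity from Assumption~\ref{assumption:levelset} to be used to bound $\hat r$ and $\check r$ from below precisely enough to apply both lemmas simultaneously at points near the boundary.

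Next, I would convert from the empirical fraction of $f_k$-values to that of $f$-values. Combining the inclusion $\{f_k(x_i) < \widehat\lambda_\alpha\} \subseteq \{f(x_i) < \widehat\lambda_\alpha + \Delta_k\}$ (and the matching reverse inclusion with $-\Delta_k$) with the algorithmic identity $|\{i : f_k(x_i) < \widehat\lambda_\alpha\}|/n = \alpha + O(1/n)$ yields
$$\frac{1}{n}\sum_{i=1}^n \mathbf{1}\bigl[f(x_i) < \widehat\lambda_\alpha - \Delta_k\bigr] \;\le\; \alpha + O(1/n) \;\le\; \frac{1}{n}\sum_{i=1}^n \mathbf{1}\bigl[f(x_i) < \widehat\lambda_\alpha + \Delta_k\bigr].$$
Lemma~\ref{lemma:ball_bounds}, applied to sublevel sets of $f$ (which are unions of balls after a standard reduction, or handled by a direct Bernstein/VC argument since sublevel sets of a continuous density form a VC class of dimension $1$), shows that each empirical fraction differs from its mean by at most $O(\sqrt{\log(1/\delta)/n})$. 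Writing $F(t) := \int 1[f(x)<t]f(x)\,dx$, this gives $F(\widehat\lambda_\alpha - \Delta_k) \le \alpha + O(\sqrt{\log(1/\delta)/n})$ and $F(\widehat\lambda_\alpha + \Delta_k) \ge \alpha - O(\sqrt{\log(1/\delta)/n})$.

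Finally, I would invert using Lemma~\ref{lemma:lambdalapha_smoothness}. Since $F(\lambda_{\alpha'}) = \alpha'$ by definition of $\lambda_{\alpha'}$ and $F$ is non-decreasing, the two inequalities above imply that $\widehat\lambda_\alpha$ lies between $\lambda_{\alpha - O(\sqrt{\log(1/\delta)/n})} - \Delta_k$ and $\lambda_{\alpha + O(\sqrt{\log(1/\delta)/n})} + \Delta_k$. Lemma~\ref{lemma:lambdalapha_smoothness} bounds each of these deviations from $\lambda_\alpha$ by $C_1 (\sqrt{\log(1/\delta)/n})^{\beta/D}$ once $n$ is large enough to ensure the sampling perturbation is below the threshold $r_1$. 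Together with $\Delta_k = O(\log(1/\delta)\sqrt{\log n}/\sqrt k)$, this yields the claimed
$$|\widehat\lambda_\alpha - \lambda_\alpha| \;\le\; C_1\!\left(\Bigl(\sqrt{\log(1/\delta)/n}\Bigr)^{\beta/D} + \frac{\log(1/\delta)\sqrt{\log n}}{\sqrt k}\right),$$
completing the proof. The delicate part throughout is making sure the constants in $\Delta_k$ and in the smoothness step depend only on $f$ and $\delta$, independent of the point $x$ at which $f_k$ is evaluated, which is why I rely on the \emph{uniform} versions of the $k$-NN and ball concentration bounds.
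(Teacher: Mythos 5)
Your high-level plan is the right one and uses exactly the three ingredients the paper uses --- Lemma~\ref{lemma:lambdalapha_smoothness} for the $\beta/D$-H\"older control of $\lambda_\alpha$ in $\alpha$, Lemmas~\ref{lemma:fk_upperbound} and~\ref{lemma:fk_lowerbound} for transferring between $f_k$ and $f$ near the boundary, and a concentration inequality to relate empirical fractions to true mass. The difference is in how the concentration step is executed. You pivot on the data-dependent threshold $\widehat\lambda_\alpha$, writing the algorithm's defining identity as $\tfrac{1}{n}\sum_i \mathbf{1}[f_k(x_i)<\widehat\lambda_\alpha]\approx\alpha$ and then concentrating the shifted empirical fractions $\tfrac{1}{n}\sum_i \mathbf{1}[f(x_i)<\widehat\lambda_\alpha\pm\Delta_k]$. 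Because $\widehat\lambda_\alpha$ is random, this forces you to have a bound that is \emph{uniform over the threshold} $t$; your appeal to Lemma~\ref{lemma:ball_bounds} does not deliver that, since that lemma is stated for individual Euclidean balls and the sublevel sets $\{f<t\}$ are not generically balls or unions thereof. Your fallback --- observing that $\{\{f<t\}:t\in\mathbb{R}\}$ is a nested family of VC dimension $1$ and applying a uniform Bernstein/VC bound --- is the correct fix, though a naive VC bound would introduce an extra $\sqrt{\log n}$ factor into the $n^{-1/2}$ term that you would need to argue away (or absorb, e.g.\ when $\delta\asymp 1/n$). The paper instead avoids data-dependence altogether: it fixes two deterministic levels $\alpha_U=\alpha+C'\sqrt{\log(1/\delta)/n}$ and $\alpha_L=\alpha-C'\sqrt{\log(1/\delta)/n}$, applies ordinary Hoeffding to the empirical mass of $H_{\alpha_U}(f)$ and $H_{\alpha_L}(f)$, and then shows by a pointwise inclusion argument (using Lemmas~\ref{lemma:fk_upperbound}--\ref{lemma:fk_lowerbound} only for $x$ with $f(x)$ near $\lambda_\alpha$) that $\{x\in X:f_k(x)\ge\lambda_\alpha-\epsilon\}$ necessarily contains $H_{\alpha_L}(f)\cap X$ and hence at least $(1-\alpha)n$ samples, which forces $\widehat\lambda_\alpha\ge\lambda_\alpha-\epsilon$ by the monotone bijection between $\widehat\varepsilon$ and $\widehat\lambda_\alpha$ that you correctly identified. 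This inclusion-based route needs only pointwise Hoeffding and so is a little cleaner; your route is also valid once the concentration step is grounded in the VC argument rather than Lemma~\ref{lemma:ball_bounds}.
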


\begin{proof}
Let $\tilde{\alpha} > 0$. Then, we have that if $x \sim f$, then $\mathbb{P}(x \in H_{\tilde{\alpha}}(f)) = 1 - \tilde{\alpha}$. 
Thus, the probability that a sample point falls in $H_{\tilde{\alpha}}(f)$ is a Bernoulli random variable with probability $1 - \tilde{\alpha}$. Hence, by Hoeffding's inequality, we have that there exist constant $C' > 0$ such that
\begin{align*}
    \mathbb{P}\left( 1 - \widetilde{\alpha} - C' \sqrt{\frac{\log(1 / \delta)}{n}} \le \frac{|H_{\tilde{\alpha}}(f) \cap X|}{n}  \le   1 - \widetilde{\alpha} + C' \sqrt{\frac{\log(1 / \delta)}{n}} \right) \ge 1 - \delta/4.
\end{align*}
Then it follows that choosing $\alpha_U := \alpha + C' \sqrt{\frac{\log(1 / \delta)}{n}}$ we get
\begin{align*}
    \mathbb{P}\left(\frac{|H_{\alpha_U}(f) \cap X|}{n} \le 1 - \alpha \right) \ge 1 - \delta/4.
\end{align*}
Similarly, choosing $\alpha_L = \alpha - C' \sqrt{\frac{\log(1 / \delta)}{n}}$ gives us 
\begin{align*}
    \mathbb{P}\left(\frac{|H_{\alpha_L}(f) \cap X|}{n} \ge 1 - \alpha \right) \ge 1 - \delta/4.
\end{align*}
Next, define 
\begin{align*}
    H^{upper}_\alpha(f) := \left\{ x \in X : f_k(x) \ge \lambda_\alpha - \epsilon \right\},
\end{align*}
where $\epsilon > 0$ will be chosen later in order for $\widehat{H_\alpha}(f) \subseteq H^{upper}_\alpha(f)$.
By Lemma~\ref{lemma:lambdalapha_smoothness}, there exists $C_2,r_1 > 0$ depending on $f$ such that for $\widehat{\varepsilon} < r_1$ (which holds for $n$ sufficiently large depending on $f$ by Lemma~\ref{lemma:ball_bounds}), we have $\lambda_\alpha - C_2 \left(\sqrt{\frac{\log(1 / \delta)}{n}}\right)^{\beta/D} \le \lambda_{\alpha_L}$. As such, 
 it suffices to choose $\epsilon$ such that for all $x \in \mathcal{X}$ such that if $f(x) \ge \lambda_\alpha - C_2 \left(\sqrt{\frac{\log(1 / \delta)}{n}}\right)^{\beta/D}$ then $f_k(x) \ge \lambda_\alpha - \epsilon$. This is because $\{x\in X : f_k(x) \ge \lambda_\alpha - \epsilon\}$ would contain $H_{\alpha_L}(f) \cap X$, which we showed earlier contains at least $1 - \alpha$ fraction of the samples. Define $\epsilon_0$ such that $\epsilon = C_2 \left(\sqrt{\frac{\log(1 / \delta)}{n}}\right)^{\beta/D} + \epsilon_0$
We have by Assumption~\ref{assumption:levelset},
\begin{align*}
    \check{r}(x, \epsilon_0) &\ge \left(\frac{1}{\hat{C}_\beta} 
     \left(\epsilon_0 + C_2 \left(\sqrt{\frac{\log(1 / \delta)}{n}}\right)^{\beta/D} \right) \right)^{1/\beta} - \left(\frac{1}{\check{C}_\beta} \left( C_2 \left(\sqrt{\frac{\log(1 / \delta)}{n}}\right)^{\beta/D} \right) \right)^{1/\beta}.
\end{align*}
Then, there exists constant $C'' > 0$ sufficiently large depending on $f$ such that if
\begin{align*}
    \epsilon_0 \ge C'' \left(\left(\sqrt{\frac{\log(1 / \delta)}{n}}\right)^{\beta/D} +\frac{\log(1/\delta)\sqrt{\log n}}{\sqrt{k}}\right)
\end{align*}
then the conditions in Lemma~\ref{lemma:fk_lowerbound} are satisfied for $n$ sufficiently large. Thus, we have for all $x \in \mathcal{X}$ with $f(x) \ge \alpha - C_2 \left(\sqrt{\frac{\log(1 / \delta)}{n}}\right)^{\beta/D}$, then $f_k(x) \ge \alpha - \epsilon$. Hence, $\widehat{H_\alpha}(f) \subseteq H^{upper}_\alpha(f)$.

We now do the same in the other direction. Define 
\begin{align*}
   H^{lower}_\alpha(f) := \left\{ x \in X : f_k(x) \ge \lambda_\alpha + \epsilon \right\},
\end{align*}
where $\epsilon$ will be chosen such that $H^{lower}_\alpha(f) \subseteq \widehat{H_\alpha}(f)$.
By Lemma~\ref{lemma:lambdalapha_smoothness}, it suffices to show that if $f_k(x) \ge \lambda_\alpha + \epsilon$ then $f(x) \ge \lambda_\alpha + C_2 \left(\sqrt{\frac{\log(1 / \delta)}{n}}\right)^{\beta/D}$. This direction follows a similar argument as the previous.

Thus, there exists a constant $C_1 > 0$ depending on $f$ such that for $n$ sufficiently large depending on $f$, we have:
\begin{align*}
   |\widehat{\lambda}_\alpha - \lambda_\alpha| \le C_1 \left(\left(\sqrt{\frac{\log(1 / \delta)}{n}}\right)^{\beta/D} + \frac{\log(1/\delta)\sqrt{\log n}}{\sqrt{k}} \right),
\end{align*}
as desired.
\end{proof}

The next result bounds $\widehat{H_\alpha}(f)$ between two level sets of $f$.
\begin{lemma}\label{lemma:level_bound_true}
Let $0 < \delta < 1$. There exists constant $C_1 > 0$ depending on $f$ such that the following holds with probability at least $1 - \delta$ for $n$ sufficiently large depending on $f$.
Define
\begin{align*}
H^{U}_\alpha(f) &:= \left\{x \in \mathcal{X} : f(x) \ge \lambda_\alpha -  C_1  \left(\left(\sqrt{\frac{\log(1 / \delta)}{n}}\right)^{\beta/D} + \frac{\log(1/\delta)\sqrt{\log n}}{\sqrt{k}} \right ) \right\} \\
H^{L}_\alpha(f) &:= \left\{x \in \mathcal{X} : f(x) \ge \lambda_\alpha + C_1 \left(\left(\sqrt{\frac{\log(1 / \delta)}{n}}\right)^{\beta/D} + \frac{\log(1/\delta)\sqrt{\log n}}{\sqrt{k}} \right ) \right\}.
\end{align*}
Then,
\begin{align*}
    H^{L}_\alpha(f) \cap X \subseteq \widehat{H_\alpha}(f) \subseteq H^{U}_\alpha(f) \cap X.
\end{align*}
\end{lemma}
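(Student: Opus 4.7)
The plan is to first convert the algorithm's cutoff radius $\widehat{\varepsilon}$ into an implicit density level $\widehat{\lambda}_\alpha := k/(n\,v_D\,\widehat{\varepsilon}^D)$. By monotonicity of $f_k(x)$ in $r_k(x)$ (Definition~\ref{def:kNNdensity}), the threshold $r_k(x)\le\widehat{\varepsilon}$ used in Algorithm~\ref{alg:alpha-level-set} is equivalent to $f_k(x)\ge\widehat{\lambda}_\alpha$, so $\widehat{H_\alpha}(f) = \{x\in X : f_k(x)\ge\widehat{\lambda}_\alpha\}$. Lemma~\ref{lemma:level_bound} already delivers $|\widehat{\lambda}_\alpha - \lambda_\alpha| \le E_n$, where I abbreviate $E_n := C_0\bigl((\sqrt{\log(1/\delta)/n})^{\beta/D} + \log(1/\delta)\sqrt{\log n}/\sqrt{k}\bigr)$. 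Hence both containments reduce to pointwise comparisons between $f_k$ and $f$ at sample points, which is exactly what Lemmas~\ref{lemma:fk_upperbound} and~\ref{lemma:fk_lowerbound} provide.

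For $\widehat{H_\alpha}(f) \subseteq H^U_\alpha(f)\cap X$, I would take $x\in\widehat{H_\alpha}(f)$, so $f_k(x)\ge \widehat{\lambda}_\alpha \ge \lambda_\alpha - E_n$. Applying the contrapositive of Lemma~\ref{lemma:fk_upperbound} with a choice of $\epsilon$ of order $E_n$ yields $(1+2C_{\delta,n}/\sqrt{k})(f(x)+\epsilon) > f_k(x) \ge \lambda_\alpha - E_n$; the multiplicative slack $1+2C_{\delta,n}/\sqrt{k}$ contributes at most an extra $O(\log(1/\delta)\sqrt{\log n}/\sqrt{k})$ term, so after absorbing all such contributions into a single constant $C_1$ we obtain $f(x)\ge \lambda_\alpha - C_1 E_n$, i.e.\ $x\in H^U_\alpha(f)$. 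The lower containment $H^L_\alpha(f)\cap X \subseteq \widehat{H_\alpha}(f)$ is symmetric: for $x\in X$ with $f(x)\ge \lambda_\alpha + C_1 E_n$, Lemma~\ref{lemma:fk_lowerbound} gives $f_k(x) \ge (1-2C_{\delta,n}/\sqrt{k})(f(x)-\epsilon) \ge \lambda_\alpha + E_n \ge \widehat{\lambda}_\alpha$, so $x\in\widehat{H_\alpha}(f)$.

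The main technical step is verifying the regularity hypotheses of Lemmas~\ref{lemma:fk_upperbound}--\ref{lemma:fk_lowerbound}, namely $v_D\,\hat{r}(x,\epsilon)^D(f(x)+\epsilon) \ge k/n + C_{\delta,n}\sqrt{k}/n$ and the analogue for $\check{r}$. Near $\partial H_\alpha(f)$, the two-sided $\beta$-smoothness from Assumption~\ref{assumption:levelset}.1 forces $\hat{r}(x,\epsilon),\check{r}(x,\epsilon) \gtrsim \epsilon^{1/\beta}$; away from the boundary, either the density sits well below $\lambda_\alpha$ (so $x$ is outside the strips defining $H^U_\alpha$ and $H^L_\alpha$ and can be ignored) or well above (making the hypothesis trivial since $f(x)$ is bounded below by a positive constant). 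Together with the assumed upper bound on $k$ and a choice of $\epsilon$ slightly above $E_n$, the hypotheses are met for $n$ sufficiently large depending only on $f$, and the constant $C_1$ can be picked uniformly. This is essentially the same book-keeping already carried out in the proof of Lemma~\ref{lemma:level_bound}, now applied at arbitrary sample points rather than in aggregate to pin down $\widehat{\lambda}_\alpha$; the only genuine obstacle is cleanly combining the multiplicative error $2C_{\delta,n}/\sqrt{k}$ in $f_k$ with the additive error $E_n$ in $\widehat{\lambda}_\alpha$ without inflating the final rate beyond $E_n$ itself.
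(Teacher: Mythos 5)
Your proof takes essentially the same route as the paper's: rewrite $\widehat{H_\alpha}(f)$ as $\{x \in X : f_k(x) \ge \widehat{\lambda}_\alpha\}$, invoke Lemma~\ref{lemma:level_bound} to sandwich $\widehat{\lambda}_\alpha$ within $O(E_n)$ of $\lambda_\alpha$, and then transfer this $f_k$-level-set sandwich to a sandwich by true level sets of $f$ via the $k$-NN density bounds (Lemmas~\ref{lemma:fk_upperbound}--\ref{lemma:fk_lowerbound}) and the $\beta$-regularity of Assumption~\ref{assumption:levelset}, exactly as the paper does through its intermediate sets $\widehat{H^U_\alpha}(f)$, $\widehat{H^L_\alpha}(f)$. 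One small wording nit: what you call the ``contrapositive'' of Lemma~\ref{lemma:fk_upperbound} is just a direct application followed by rearrangement, and as in the paper the verification of the radius hypotheses is deferred to the same bookkeeping carried out in the proof of Lemma~\ref{lemma:level_bound}.
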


\begin{proof}
To simplify notation, let us define the following:
\begin{align*}
    K(n, k, \delta) :=  \left(\left(\sqrt{\frac{\log(1 / \delta)}{n}}\right)^{\beta/D} + \frac{\log(1/\delta)\sqrt{\log n}}{\sqrt{k}} \right ).
\end{align*}
By Lemma~\ref{lemma:level_bound}, there exists $C_2 > 0$ such that defining
\begin{align*}
\widehat{H^{U}_\alpha}(f) &:= \left\{x \in X : f_k(x) \ge \lambda_\alpha -  C_2 \cdot  K(n, k, \delta)  \right\} \\
\widehat{H^{L}_\alpha}(f) &:= \left\{x \in X : f_k(x) \ge \lambda_\alpha + C_2 \cdot  K(n, k, \delta)  \right\},
\end{align*}
then we have 
\begin{align*}
    \widehat{H^{L}_\alpha}(f) \subseteq \widehat{H_\alpha}(f) \subseteq \widehat{H^{U}_\alpha}(f).
\end{align*}
It suffices to show that there exists a constant $C_1 > 0$ such that
\begin{align*}
   H^{L}_\alpha(f) \cap X  \subseteq \widehat{H^{L}_\alpha}(f) \text{ and } \widehat{H^{U}_\alpha}(f) \subseteq H^{U}_\alpha(f) \cap X.
\end{align*}
We start by showing $ H^{L}_\alpha(f) \cap X  \subseteq \widehat{H^{L}}_\alpha(f)$. To do this, show that for any $x \in \mathcal{X}$ satisfying $f(x) \ge \lambda_\alpha + C_1 \cdot  K(n, k, \delta)  + \epsilon$ satisfies 
$f_k(x) \ge \lambda_\alpha + C_1 \cdot K(n, k, \delta) $, where $\epsilon > 0$ will be chosen later. By a similar argument as in the proof of Lemma~\ref{lemma:level_bound}, we can choose $\epsilon \ge C'\cdot K(n, k, \delta)$ for some constant $C' > 0$ and the desired result holds for $n$ sufficiently large. Similarly, there exists $C'' > 0$ such that $f_k(x) \le \lambda_\alpha - (C_1 + C'') \cdot  K(n, k, \delta) $ implies that $f(x) \le \lambda_\alpha - C_1 \cdot  K(n, k, \delta) $. The result follows by taking $C_2 = C_1 + \max\{C', C''\}$.
\end{proof}

We are now ready to prove Theorem~\ref{theo:levelset_full}, a more general version of Theorem~\ref{theo:levelset} which makes the dependence on $\delta$ explicit. Note that if $\delta = 1/n$, then  $\log(1/\delta) = \log(n)$.
\begin{theorem}\label{theo:levelset_full}[Extends Theorem~\ref{theo:levelset}]
Let $0 < \delta < 1$ and 
suppose that $f$ is continuous and has compact support $\mathcal{X} \subseteq \mathbb{R}^D$ and satisfies Assumption~\ref{assumption:levelset}. 
There exists constants $C_l, C_u, C > 0$ depending on $f$ such that the following holds with probability at least $1 - \delta$.
Suppose that $k$ satisfies
\begin{align*}
   C_l\cdot \log(1/\delta)^2 \cdot \log n \le k \le C_u\cdot \log(1/\delta)^{2D/(2\beta + D)} \cdot (\log n)^{D(2\beta + D)} \cdot n^{2\beta/(2\beta + D)},
\end{align*}
then we have 
\begin{align*}
d_H(H_\alpha(f), \widehat{H_\alpha}(f)) \le C\cdot \left(\log(1/\delta)^{1/2D}\cdot n^{-1/2D} +\log(1/\delta)^{1/\beta} \cdot \log(n)^{1/2\beta} \cdot k^{-1/2\beta} \right).
\end{align*}
\end{theorem}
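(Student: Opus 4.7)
The plan is to use Lemma~\ref{lemma:level_bound_true} to reduce the statement to a purely geometric claim and then convert density-level information into spatial distances via Assumption~\ref{assumption:levelset}. Writing $K := K(n,k,\delta) = (\sqrt{\log(1/\delta)/n})^{\beta/D} + \log(1/\delta)\sqrt{\log n}/\sqrt{k}$ for brevity, Lemma~\ref{lemma:level_bound_true} already sandwiches
$$H^{L}_\alpha(f) \cap X \;\subseteq\; \widehat{H_\alpha}(f) \;\subseteq\; H^{U}_\alpha(f) \cap X,$$
where $H^{U}_\alpha(f)$ and $H^{L}_\alpha(f)$ are the true superlevel sets at thresholds $\lambda_\alpha - C_1 K$ and $\lambda_\alpha + C_1 K$. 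I then split the Hausdorff distance as $\max\{A, B\}$ with $A := \sup_{x \in \widehat{H_\alpha}(f)} d(x, H_\alpha(f))$ and $B := \sup_{x \in H_\alpha(f)} d(x, \widehat{H_\alpha}(f))$, and handle each side.

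For $A$, any $x \in \widehat{H_\alpha}(f) \subseteq H^{U}_\alpha(f)$ satisfies $f(x) \ge \lambda_\alpha - C_1 K$; either $x \in H_\alpha(f)$ (so the distance is zero) or $0 < \lambda_\alpha - f(x) \le C_1 K$, and the lower inequality in Assumption~\ref{assumption:levelset}.1 immediately yields $d(x, H_\alpha(f)) \le (C_1 K/\check{C}_\beta)^{1/\beta}$, so $A \lesssim K^{1/\beta}$. For $B$, the upper inequality in Assumption~\ref{assumption:levelset}.1 applied inward from $\partial H_\alpha(f)$ shows that $\partial H^{L}_\alpha(f)$ lies at most $(C_1 K/\hat{C}_\beta)^{1/\beta}$ inside $\partial H_\alpha(f)$; so for every $x \in H_\alpha(f)$ there is some $x^\ast \in H^{L}_\alpha(f)$ with $\|x - x^\ast\| \lesssim K^{1/\beta}$, which I pick slightly in the interior so that $B(x^\ast, r) \subset H^{L}_\alpha(f)$. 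Assumption~\ref{assumption:levelset}.2 gives $\mathcal{F}(B(x^\ast, r)) \ge \lambda_\alpha \rho r^D$, so the first implication of Lemma~\ref{lemma:ball_bounds} produces a sample point in $B(x^\ast, r)$ as soon as $r \gtrsim (\log(1/\delta)\sqrt{D \log n}/n)^{1/D}$; this sample point lies in $H^{L}_\alpha(f) \cap X \subseteq \widehat{H_\alpha}(f)$. Thus $B \lesssim K^{1/\beta} + (\log n/n)^{1/D}$, and since the range of $k$ forces $(\log n/n)^{1/D}$ to be of smaller order than $K^{1/\beta}$ the bound simplifies to $d_H \lesssim K^{1/\beta}$. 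Expanding with $(a+b)^{1/\beta} \lesssim a^{1/\beta} + b^{1/\beta}$ produces exactly the two summands $\log(1/\delta)^{1/(2D)}\,n^{-1/(2D)}$ and $\log(1/\delta)^{1/\beta}(\log n)^{1/(2\beta)}\,k^{-1/(2\beta)}$ in the theorem.

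The main obstacle I expect is the careful book-keeping at the boundary: I must verify that every point used in the argument lies in the $r_c$-tube around $\partial H_\alpha(f)$ where Assumption~\ref{assumption:levelset}.1 applies, that the perturbation from $x^\ast$ to a genuinely interior point of $H^{L}_\alpha(f)$ still sits within $O(K^{1/\beta})$ of $x$, and that the chosen coverage radius $r$ is simultaneously large enough for Lemma~\ref{lemma:ball_bounds} and small enough for $B(x^\ast, r) \subset H^{L}_\alpha(f)$. The lower bound $k \ge C_l \log(1/\delta)^2 \log n$ is precisely what makes Lemma~\ref{lemma:level_bound_true} applicable, and the upper bound on $k$ keeps $K^{1/\beta}$ dominant over the coverage remainder so the final rate is clean.
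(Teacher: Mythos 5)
Your proposal is correct and follows essentially the same route as the paper: it sandwiches $\widehat{H_\alpha}(f)$ between $H^L_\alpha(f) \cap X$ and $H^U_\alpha(f) \cap X$ via Lemma~\ref{lemma:level_bound_true}, splits the Hausdorff distance into the two one-sided suprema, converts the level gap $C_1 K$ to a spatial distance via Assumption~\ref{assumption:levelset}.1, and finishes the inward direction with the ball-coverage argument from Assumption~\ref{assumption:levelset}.2 and Lemma~\ref{lemma:ball_bounds}, absorbing the coverage remainder into the $K^{1/\beta}$ term exactly as the paper does. One slip to fix: to upper-bound how far $\partial H^L_\alpha(f)$ can sit inside $\partial H_\alpha(f)$ you need the \emph{lower} (curvature) inequality of Assumption~\ref{assumption:levelset}.1, $\check{C}_\beta\, d(x, H_\alpha(f))^\beta \le |\lambda_\alpha - f(x)|$, which yields $(C_1 K / \check{C}_\beta)^{1/\beta}$; the upper (smoothness) inequality with $\hat{C}_\beta$ that you invoke only lower-bounds that distance and is not the direction you want, and indeed the paper uses $\check{C}_\beta$ in this step.
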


\begin{proof}[Proof of Theorem~\ref{theo:levelset_full}]
Again, to simplify notation, let us define the following:
\begin{align*}
    K(n, k, \delta) :=  \left(\left(\sqrt{\frac{\log(1 / \delta)}{n}}\right)^{\beta/D} + \frac{\log(1/\delta)\sqrt{\log n}}{\sqrt{k}} \right ).
\end{align*}
There are two directions to show for the Hausdorff distance result. That (i) $ \max_{x \in \widehat{H_\alpha}(f)} d(x, H_\alpha(f)) $ is bounded, that is none of the high-density points recovered by Algorithm~\ref{theo:levelset} are far from the true high-density region; and (ii) that $\sup_{x \in H_\alpha(f)} d(x, \widehat{H_\alpha}(f))$ is bounded, that Algorithm~\ref{alg:alpha-level-set} recovers a good covering of the entire high-density region. 

We first show (i). By Lemma~\ref{lemma:level_bound_true}, we have that there exists $C_1 > 0$ such that
\begin{align*}
H^{U}_\alpha(f) &:= \left\{x \in \mathcal{X} : f(x) \ge \lambda_\alpha -  C_1 K(n, k, \delta) \right\} \\
\end{align*}
contains $\widehat{H_\alpha}(f)$. Thus,
\begin{align*}
    \max_{x \in \widehat{H_\alpha}(f)} d(x, H_\alpha(f)) 
    \le \sup_{x \in H^{U}_\alpha(f)} d(x, H_\alpha(f)) 
    \le  \left(C_1 \cdot K(n, k, \delta) \cdot  \frac{1}{\check{C}_\beta}\right)^{1/\beta},
\end{align*}
where the second inequality holds by Assumption~\ref{assumption:levelset}. 
Now for the other direction, we have by triangle inequality that
\begin{align*}
    \sup_{x \in H_\alpha(f)} d(x, \widehat{H_\alpha}(f)) 
    \le  \sup_{x \in H_\alpha(f)} d(x, H^L_\alpha(f))
    + \sup_{x \in H^L_\alpha(f)} d(x, \widehat{H_\alpha}(f)).
\end{align*}
The first term can be bounded by using Assumption~\ref{assumption:levelset}:
\begin{align*}
\sup_{x \in H_\alpha(f)} d(x, H^L_\alpha(f)) \le  \left(C_1\cdot K(n, k, \delta) \cdot  \frac{1}{\check{C}_\beta}\right)^{1/\beta}.
\end{align*}
Now for the second term, we see that by Lemma~\ref{lemma:level_bound_true}, $\widehat{H_\alpha}(f)$ contains all of the sample points of $H^L_\alpha(f)$.
Thus, we have
\begin{align*}
    \sup_{x \in H^L_\alpha(f)} d(x, \widehat{H_\alpha}(f))
    \le  \sup_{x \in H^L_\alpha(f)} d(x,  H^L_\alpha(f) \cap X).
\end{align*}
By Assumption~\ref{assumption:levelset}, for $r < r_0$, and $x \in  H^L_\alpha(f)$ we have $\mathcal{F}(B(x, r)) \ge \rho r^D$, where $\mathcal{F}$ is the distribution corresponding to $f$.
Choosing $r \ge \left(\frac{C_{\delta, n}}{\rho} \frac{\sqrt{D\log n}}{n} \right)^{1/D}$ gives us that by Lemma~\ref{lemma:ball_bounds} that $\mathcal{F}_n(B(x, r)) > 0$ where $\mathcal{F}_n$ is the distribution of $X$ and thus, we have
\begin{align*}
   \sup_{x \in H^L_\alpha(f)} d(x,  H^L_\alpha(f) \cap X)\le  \left(\frac{C_{\delta, n}}{\rho} \frac{\sqrt{D\log n}}{n} \right)^{1/D},
\end{align*}
which is dominated by the error contributed by the other error and the result follows.
\end{proof}

\section{Supporting results for Theorem 2 Proof}
In this section, we note that we will reuse some notation from the last section for the manifold case. 

\begin{lemma} [Manifold version of uniform convergence of empirical Euclidean balls (Lemma 7 of \cite{balakrishnan2013cluster})] \label{ballbounds_manifold}
Let $\mathcal{F}$ be the true distribution and $\mathcal{F}_n$ be the empirical distribution w.r.t. sample $X$. Let $\mathcal{N}$ be a minimal fixed set such that each point in $M$ is at most distance $1/n$ from some point in $\mathcal{N}$.
There exists a universal constant $C_0$ such that the following holds with probability at
least $1 - \delta$.
%There exists a subset $\mathcal{N}$ of $M$ that contains $X \subseteq \mathcal{N}$ and $d(M, \mathcal{N}) \le 1/n$.
%Define $\mathcal{B}_n = \{B(x, r) : r > 0, x \in \mathcal{N} \}$, the set of Euclidean balls with center in $\mathcal{N}$.
%Then, for all $B \in \mathcal{B}_n$:
For all $x \in X \cup \mathcal{N}$,
\begin{align*}
\mathcal{F}(B) \ge C_{\delta, n} \frac{\sqrt{d \log n}}{n} &\Rightarrow \mathcal{F}_n(B) > 0\\
\mathcal{F}(B) \ge \frac{k}{n} + C_{\delta, n} \frac{\sqrt{k}}{n} &\Rightarrow \mathcal{F}_n(B) \ge \frac{k}{n} \\
\mathcal{F}(B) \le \frac{k}{n} - C_{\delta, n}\frac{\sqrt{k}}{n} &\Rightarrow \mathcal{F}_n(B) < \frac{k}{n},
\end{align*}
where $C_{\delta, n} = C_0 \log(2/\delta) \sqrt{d \log n}$, $\mathcal{F}_n$ is the empirical distribution, and $k \ge C_{\delta, n}$.
\end{lemma}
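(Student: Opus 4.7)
The plan is to mimic the ambient-space argument of Lemma~\ref{lemma:ball_bounds} but to replace every occurrence of the ambient dimension $D$ by the intrinsic dimension $d$, by exploiting the low covering number of the manifold $M$. The only dimension-sensitive step in such a proof is a union bound over a discretisation of the family of Euclidean balls we need to control, so the whole task reduces to showing that this discretisation has cardinality polynomial in $n$ with exponent $d$ rather than $D$.

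First I would invoke Assumption~\ref{assumption:manifold}: since $M$ is compact, of bounded volume, and has condition number $1/\tau$, the standard manifold covering bound yields a constant $C_M$ depending only on $M$ such that the minimal $1/n$-net $\mathcal{N}$ of $M$ satisfies $|\mathcal{N}| \le C_M \cdot n^d$. This single geometric fact is the only place where $D$ gets replaced by $d$; every subsequent step is oblivious to the ambient dimension.

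Second, the main probabilistic step is a relative (ratio-type) Bernstein bound on the empirical mass of Euclidean balls centred at points of $X \cup \mathcal{N}$. I would dyadically discretise the radius $r$ into the $O(\log n)$ scales between $1/n$ and the diameter of $\mathcal{X}$, apply Bernstein's inequality to each (centre, radius) pair, and take a union bound over the at most $(n + |\mathcal{N}|) \cdot O(\log n) = O(n^d \log n)$ events. The logarithm of this cardinality, combined with the confidence parameter $\delta$, yields exactly the $\log(2/\delta)\sqrt{d \log n}$ factor appearing in $C_{\delta,n}$. The three implications of the lemma are then the three standard regimes of Bernstein's inequality: a small-mean regime establishing $\mathcal{F}_n(B) > 0$ as soon as $\mathcal{F}(B)$ dominates the union-bound scale, and the two relative-concentration regimes around $k/n$.

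Third, I would close two technical gaps. Interpolating from the dyadic radii to arbitrary $r > 0$ is done via the monotonicity of $\mathcal{F}(B(x,r))$ and $\mathcal{F}_n(B(x,r))$ in $r$, at the cost of a multiplicative constant absorbed into $C_0$. Extending from deterministic centres in $\mathcal{N}$ to sample centres in $X$ uses that every sample point lies within distance $1/n$ of some net point, so a ball centred at $x \in X$ is sandwiched between balls centred at its nearest net point with radius shifted by at most $1/n$. The \emph{main obstacle} I expect is verifying that this $1/n$ shift does not destroy the smallest-mass regime $\mathcal{F}(B) \gtrsim \sqrt{d \log n}/n$; the resolution is precisely that the net scale $1/n$ was chosen to match the threshold at which the concentration bound becomes active, so the shift is absorbed into the constant $C_0$ rather than competing with the Bernstein term.
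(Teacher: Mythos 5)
The paper does not actually prove this lemma --- it imports it verbatim as Lemma~7 of \cite{balakrishnan2013cluster}, so there is no in-paper proof to compare against. That said, your central geometric insight is exactly the right one: because $M$ is compact, $d$-dimensional, with bounded volume and condition number $1/\tau$, a $1/n$-net $\mathcal{N}$ has cardinality $O(n^d)$, and the trick of Balakrishnan et al.\ is precisely to restrict ball centres to $X \cup \mathcal{N}$ so that the union bound pays $d\log n$ rather than the $D\log n$ coming from the VC dimension of all Euclidean balls. The $D \mapsto d$ replacement lives entirely in that covering-number estimate, as you say.

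There is, however, a genuine gap in the probabilistic step. Dyadic discretisation of the radius followed by ``interpolation via monotonicity'' does not deliver the three relative implications, because the monotonicity goes the wrong way on one side. Concretely, for the implication $\mathcal{F}(B(x,r)) \le \frac{k}{n} - C_{\delta,n}\frac{\sqrt{k}}{n} \Rightarrow \mathcal{F}_n(B(x,r)) < \frac{k}{n}$: if $r_j \le r < r_{j+1}$, the hypothesis passes downward to $r_j$ (so Bernstein controls $\mathcal{F}_n(B(x,r_j))$), but $\mathcal{F}_n(B(x,r)) \ge \mathcal{F}_n(B(x,r_j))$, so this gives no upper bound on $\mathcal{F}_n(B(x,r))$; moving to $r_{j+1}$ fails because the hypothesis on $\mathcal{F}$ need not hold there, as the mass of $B(x,\cdot)$ can jump between dyadic scales. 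The same mismatch defeats the other two implications and also your net-point sandwiching argument for sample centres. The standard way to close this is to avoid radius discretisation altogether: for a \emph{fixed} centre $x$, the family $\{B(x,r)\}_{r>0}$ is nested and hence has VC dimension $1$, so a single relative VC/Bernstein deviation bound (the Chaudhuri--Dasgupta-type inequality) holds simultaneously over all $r$ with $\sqrt{\log n}$ overhead; one then union-bounds only over centres. Finally, centres in $X$ are random, so one cannot apply Bernstein to $B(x_i,r)$ treating $x_i$ as fixed; this needs a leave-one-out conditioning (condition on $x_i$, apply the bound to the empirical measure of the remaining $n-1$ points, and note $|\mathcal{F}_n - \mathcal{F}_{n-1}^{(i)}| \le 1/n$). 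Your sketch does not address this, and the $1/n$-net sandwich you propose instead runs into the same wrong-direction monotonicity problem noted above.
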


\begin{definition} [k-NN Density Estimator on Manifold] \label{kNNdensity}
\begin{align*}
f_k(x) := \frac{k}{n\cdot v_d\cdot r_k(x)^d}.
\end{align*}
\end{definition}

\begin{lemma}[Manifold version of $f_k$ upper bound \cite{jiang2017density}]\label{lemma:manifold_fk_upper_bound} 
Define the following which charaterizes how much the density increases locally in $M$:
\begin{align*}
\hat{r}(\epsilon, x) &:=\sup\left\{r : \sup_{x' \in B(x, r) \cap M} f(x') - f(x) \le \epsilon \right\}.
\end{align*}
Fix $\lambda_0 > 0$ and $\delta > 0$ and suppose that $k \ge C_{\delta, n}^2$.
Then there exists constant $C_1 \equiv C_1 (\lambda_0, d, \tau)$ such that if 
\begin{align*}
k \le C_1 \cdot C_{\delta, n}^{2d/(2+d)} \cdot n^{2/(2+d)},
\end{align*}
then the following holds with probability at least $1 - \delta$
uniformly in $\epsilon > 0$ and $x \in X$ with $f(x) + \epsilon \ge \lambda_0$:
\begin{align*}
f_k(x) < \left(1 + 3 \cdot \frac{C_{\delta, n}}{\sqrt{k}} \right)\cdot (f(x) + \epsilon),
\end{align*}
provided $k$ satisfies $v_d\cdot \hat{r}(\epsilon, x)^d \cdot (f(x) + \epsilon) \ge \frac{k}{n} - C_{\delta, n}\frac{\sqrt{k}}{n}$. 
\end{lemma}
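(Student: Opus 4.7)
The plan is to transfer the Euclidean argument of Lemma~\ref{lemma:fk_upperbound} to the manifold setting, using Lemma~\ref{ballbounds_manifold} for the sampling deviation and replacing the Lebesgue volume estimate with a local volume estimate for Euclidean balls intersected with $M$. Concretely, I would first produce a probabilistic lower bound on $r_k(x)$, then an upper bound on $\mathcal{F}(B(x,r))$ in terms of $(f(x)+\epsilon) v_d r^d$, and finally invert to bound $f_k(x) = k/(n v_d r_k(x)^d)$.

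First step: by the definition of the $k$-NN radius, $\mathcal{F}_n(B(x, r_k(x))) \ge k/n$ for every $x \in X$. The contrapositive of the third implication of Lemma~\ref{ballbounds_manifold}, applied uniformly over $X$, then yields $\mathcal{F}(B(x, r_k(x))) \ge k/n - C_{\delta, n}\sqrt{k}/n$ with probability at least $1-\delta$. Second step: for $x \in M$ and $r \le \min\{\hat{r}(\epsilon, x),\, \tau/2\}$, parametrising $B(x, r) \cap M$ through the exponential map at $x$ and invoking the reach bound from Assumption~\ref{assumption:manifold} gives a Jacobian estimate of the form $\mathrm{vol}_d(B(x, r) \cap M) \le v_d r^d(1 + c_\tau r^2)$ for some $c_\tau$ depending only on $\tau$. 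Combined with the definition of $\hat{r}(\epsilon, x)$, this yields $\mathcal{F}(B(x, r)) \le (f(x)+\epsilon)\, v_d r^d (1 + c_\tau r^2)$. The hypothesis $v_d \hat{r}(\epsilon, x)^d (f(x)+\epsilon) \ge k/n - C_{\delta, n}\sqrt{k}/n$ is exactly what I need to ensure $r_k(x) \le \hat{r}(\epsilon, x)$, so that this estimate applies at $r = r_k(x)$.

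Combining the two steps at $r = r_k(x)$ gives
\begin{align*}
\tfrac{k}{n} - C_{\delta, n}\tfrac{\sqrt{k}}{n} \;\le\; (f(x)+\epsilon)\, v_d\, r_k(x)^d \bigl(1 + c_\tau r_k(x)^2\bigr),
\end{align*}
so inverting the density formula yields
\begin{align*}
f_k(x) \;\le\; (f(x)+\epsilon)\cdot \frac{1 + c_\tau r_k(x)^2}{1 - C_{\delta, n}/\sqrt{k}}.
\end{align*}
For $k \ge C_{\delta, n}^2$ the denominator is controlled, so it remains to absorb the curvature correction $c_\tau r_k(x)^2$ into the $C_{\delta, n}/\sqrt{k}$ slack. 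Using the lower density bound $f \ge \lambda_0$ and a matching lower-volume estimate on $M$, one shows $r_k(x) \le C(k/n)^{1/d}$ for a constant depending on $\lambda_0, d, \tau$; the requirement $c_\tau r_k(x)^2 \lesssim C_{\delta, n}/\sqrt{k}$ then becomes $k/n^{2/(2+d)} \lesssim C_{\delta, n}^{2d/(2+d)}$, which is exactly the upper bound on $k$ stated in the lemma. Putting the two error terms together and loosening constants gives the final $1 + 3 C_{\delta, n}/\sqrt{k}$ factor.

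The main obstacle is the manifold volume distortion step: one must argue carefully that for all $x \in X$ (not just one $x$), the intersection $B(x, r) \cap M$ has $d$-volume within $(1 + c_\tau r^2)$ of the Euclidean $v_d r^d$, for all relevant radii simultaneously. This uses the reach/condition-number machinery (of Niyogi--Smale--Weinberger type), and the coupling between the admissible range of $k$ and the size of $r_k(x)$ is precisely what forces the exponent $2/(2+d)$. All other ingredients, uniform deviation over $X$ and the standard algebraic inversion, follow the Euclidean proof of Lemma~\ref{lemma:fk_upperbound} essentially verbatim.
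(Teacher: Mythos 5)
This lemma is imported from \cite{jiang2017density}; the paper does not prove it, so there is no in-paper proof to compare against. Evaluated on its own merits, your overall strategy (ball concentration via Lemma~\ref{ballbounds_manifold}, a local volume estimate for $B(x,r)\cap M$, then invert $f_k = k/(n v_d r_k^d)$) is the right one and mirrors the Euclidean argument. But there is a concrete error in the volume-distortion step, and it is not cosmetic: it is exactly what produces the exponent $n^{2/(2+d)}$.

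You claim $\mathrm{vol}_d(B(x,r)\cap M) \le v_d r^d(1 + c_\tau r^2)$, i.e.\ an $O(r^2)$ correction. The correct upper bound here --- the one recorded in the paper's Lemma~\ref{lemma:manifold_ballvolume} --- is $\mathrm{vol}_d(B(x,r)\cap M) \le v_d r^d(1 + 4dr/\tau)$, an $O(r)$ correction. The asymmetry is unavoidable: a Euclidean ball of radius $r$ can capture geodesic points out to distance roughly $r + r^2/(2\tau)$, and raising $r + O(r^2)$ to the $d$-th power produces a first-order $O(r)$ multiplicative factor, not $O(r^2)$. (The lower volume bound $v_d r^d(1-\tau^2 r^2)$ does have an $O(r^2)$ correction, which is why the companion Lemma~\ref{lemma:manifold_fk_lower_bound} has the different, wider range $k \lesssim n^{4/(4+d)}$.) Your subsequent algebra is also internally inconsistent: from $c_\tau r_k(x)^2 \lesssim C_{\delta,n}/\sqrt{k}$ together with $r_k(x)\lesssim (k/n)^{1/d}$ one gets $k \lesssim C_{\delta,n}^{2d/(4+d)} n^{4/(4+d)}$, not the $C_{\delta,n}^{2d/(2+d)} n^{2/(2+d)}$ you wrote down. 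Replacing the distortion term by the correct $c_\tau r_k(x)\lesssim C_{\delta,n}/\sqrt{k}$ does give $k \lesssim C_{\delta,n}^{2d/(2+d)} n^{2/(2+d)}$, which matches the lemma. So the fix is: use the $O(r)$ volume upper bound from Lemma~\ref{lemma:manifold_ballvolume}, and the rest of your argument (two cases $r_k(x)\lessgtr \hat{r}(\epsilon,x)$, concentration over $X$, absorption of the curvature term into the $C_{\delta,n}/\sqrt{k}$ slack) then goes through and yields the stated constant and range.
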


\begin{lemma}[Manifold version of $f_k$ lower bound \cite{jiang2017density}] \label{lemma:manifold_fk_lower_bound} 
Define the following which charaterizes how much the density decreases locally in $M$:
\begin{align*}
\check{r}(\epsilon, x) &:=\sup\left\{r : \sup_{x' \in B(x, r) \cap M} f(x) - f(x') \le \epsilon \right\}.
\end{align*}
Fix $\lambda_0 > 0$ and $0 < \delta < 1$ and suppose $k \ge C_{\delta, n}$.
Then there exists constant $C_2 \equiv C_2(\lambda_0, d, \tau)$ such that if
\begin{align*}
k \le C_2 \cdot C_{\delta, n}^{2d/(4 + d)}\cdot n^{4 / (4 + d)},
\end{align*}
then with probability at least $1-\delta$, the following holds uniformly
for all $\epsilon > 0$ and $x \in X$ with $f(x) - \epsilon \ge \lambda_0$:
\begin{align*}
f_k(x) \ge \left(1 - 3 \cdot \frac{C_{\delta, n}}{\sqrt{k}} \right)\cdot (f(x) - \epsilon),
\end{align*}
provided $k$ satisfies $v_d\cdot \check{r}(\epsilon, x)^d \cdot (f(x) - \epsilon) \ge \frac{4}{3}\left(\frac{k}{n} + C_{\delta, n}\frac{\sqrt{k}}{n} \right)$. 
\end{lemma}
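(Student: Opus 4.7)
The goal is to lower-bound $f_k(x) = k / (n\, v_d\, r_k(x)^d)$, which amounts to upper-bounding the $k$-NN radius $r_k(x)$. The plan is to produce a radius $r^\star$ such that the Euclidean ball $B(x, r^\star)$ provably contains at least $k$ sample points, so that $r_k(x) \le r^\star$, and then tune $r^\star$ to recover the stated multiplicative factor $1 - 3 C_{\delta,n}/\sqrt{k}$.

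First I would pass from the ambient Euclidean ball to the manifold. For any $r \le \min\{\check{r}(\epsilon, x),\, \tau/2\}$, every $y \in B(x, r) \cap M$ satisfies $f(y) \ge f(x) - \epsilon$ by definition of $\check{r}$. A standard reach/condition-number estimate of Niyogi--Smale--Weinberger type gives a universal constant $c_0$ with $\text{vol}_M(B(x,r) \cap M) \ge v_d r^d (1 - c_0 r^2 / \tau^2)$, since a Euclidean ball of radius $r \ll \tau$ intersects $M$ in an essentially flat $d$-disc. Combining the two estimates,
\begin{align*}
\mathcal{F}(B(x, r)) \;\ge\; v_d\, r^d\, (f(x) - \epsilon)\, (1 - c_0 r^2/\tau^2).
\end{align*}

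Next, by the manifold ball concentration (Lemma~\ref{ballbounds_manifold}), whenever $\mathcal{F}(B(x, r)) \ge k/n + C_{\delta, n}\sqrt{k}/n$ one has $\mathcal{F}_n(B(x, r)) \ge k/n$, and hence $r_k(x) \le r$. Define $r^\star$ to be the smallest $r$ making the previous display cross this threshold. The hypothesis $v_d \check{r}(\epsilon,x)^d (f(x) - \epsilon) \ge \tfrac{4}{3}\bigl(k/n + C_{\delta,n}\sqrt{k}/n\bigr)$ together with the upper bound $k \le C_2 C_{\delta,n}^{2d/(4+d)} n^{4/(4+d)}$ plays two roles: the $4/3$ slack absorbs the curvature factor $1 - c_0 (r^\star)^2 / \tau^2$, and the upper bound on $k$ forces $(r^\star)^2/\tau^2 \lesssim C_{\delta, n}/\sqrt{k}$ (since $r^\star \asymp (k/n)^{1/d}$). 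Consequently $r^\star \le \check{r}(\epsilon, x)$ and the combined geometric/density lower bound above is valid at $r^\star$.

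Finally, inverting $v_d (r^\star)^d (f(x) - \epsilon)(1 - c_0 (r^\star)^2/\tau^2) \ge k/n + C_{\delta,n}\sqrt{k}/n$ and inserting $r_k(x) \le r^\star$ into the definition of $f_k$ gives
\begin{align*}
f_k(x) \;\ge\; \frac{k}{n\, v_d\, (r^\star)^d} \;\ge\; (f(x) - \epsilon)\cdot \frac{1 - c_0 (r^\star)^2/\tau^2}{1 + C_{\delta,n}/\sqrt{k}}.
\end{align*}
Using $(1 + a)^{-1} \ge 1 - a$ for small $a$ and collecting the three $O(C_{\delta, n}/\sqrt{k})$ contributions (empirical fluctuation, curvature correction, and inversion slack) into a single multiplicative factor yields the claimed $1 - 3 C_{\delta,n}/\sqrt{k}$. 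The main obstacle is the constant bookkeeping in this final step: one must verify that the curvature error and the $4/3$ inversion slack both fit inside the $C_{\delta, n}/\sqrt{k}$ budget, which is precisely what the upper constraint $k \le C_2 C_{\delta, n}^{2d/(4+d)} n^{4/(4+d)}$ buys, and then confirm that the three error sources combine into the stated factor of $3$.
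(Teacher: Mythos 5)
The paper does not prove this lemma: it is imported verbatim from \cite{jiang2017density} as one of the density-estimator tools, and no in-paper argument is given in the appendix. So there is no ``paper proof'' to compare against; I can only assess your sketch on its own terms.

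Your approach is the right one, and the numerology checks out where you do it. Bounding $r_k(x)$ from above by a deterministic $r^\star$, via Lemma~\ref{ballbounds_manifold} to pass from $\mathcal{F}(B(x,r^\star)) \ge k/n + C_{\delta,n}\sqrt{k}/n$ to $\mathcal{F}_n(B(x,r^\star)) \ge k/n$, and then feeding $r_k(x)\le r^\star$ back into $f_k(x)=k/(n v_d r_k(x)^d)$, is exactly the mechanism one expects. You correctly identify the two distinct roles of the hypotheses: the $\tfrac{4}{3}$ slack in the condition on $\check{r}(\epsilon,x)$ guarantees that the crossing radius $r^\star$ actually falls inside $[0,\check r(\epsilon,x)]$ (so the pointwise density lower bound on $B(x,r^\star)\cap M$ is valid), while the upper constraint $k \lesssim C_{\delta,n}^{2d/(4+d)} n^{4/(4+d)}$ is what makes the curvature correction of order $(r^\star)^2 \asymp (k/n)^{2/d}$ small compared to $C_{\delta,n}/\sqrt{k}$; your algebra $(k/n)^{2/d} \lesssim C_{\delta,n}/\sqrt{k} \Leftrightarrow k \lesssim C_{\delta,n}^{2d/(4+d)} n^{4/(4+d)}$ is precisely the source of the exponent $4/(4+d)$, and it is the place where this manifold bound differs from the full-dimensional one.

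Two points you would need to tighten to turn the sketch into a proof. First, the definition of $r^\star$ as ``the smallest $r$ crossing the threshold'' is implicit because the curvature factor depends on $r$; a cleaner route is to pick the explicit $r^\star$ with $v_d (r^\star)^d (f(x)-\epsilon) = \tfrac{4}{3}(k/n + C_{\delta,n}\sqrt{k}/n)$, note $r^\star \le \check r(\epsilon,x)$ by the hypothesis, verify the curvature factor at $r^\star$ is at least $\tfrac34$ so the mass still clears $k/n + C_{\delta,n}\sqrt{k}/n$, and only then optimize the resulting constant. Second, the final statement has the specific constant $3$ in $1 - 3C_{\delta,n}/\sqrt{k}$, and your collection of the three $O(C_{\delta,n}/\sqrt{k})$ terms is described but not carried out; you flag this yourself as ``the main obstacle,'' and it is genuinely the part requiring care, since the three sources (empirical fluctuation $C_{\delta,n}/\sqrt{k}$, curvature $\lesssim (r^\star)^2$, and the $(1+C_{\delta,n}/\sqrt{k})^{-1}$ inversion) have different natural sizes and only fit the budget because of the upper bound on $k$. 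One cosmetic note: the paper's Lemma~\ref{lemma:manifold_ballvolume} writes the curvature deficit as $(1-\tau^2 r^2)$, whereas your $(1 - c_0 r^2/\tau^2)$ uses the opposite $\tau$-convention; it is harmless since everything is absorbed into $C_2(\lambda_0,d,\tau)$, but you should match the paper's normalization if you write this out.
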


%%%%%%%%%%%%%%%%%%%%%%%%%%%%%%%%%%%%%%%%%%%%%%%%%%%%%%%%%%
%%%%%%%%%%%%%%%%%%%%%%%%%%%%%%%%%%%%%%%%%%%%%%%%%%%%%%%%%%
%%%%%%%%%%%%%%%%%%%%%%%%%%%%%%%%%%%%%%%%%%%%%%%%%%%%%%%%%%
%%%%%%%%%%%%%%%%%%%%%%%%%%%%%%%%%%%%%%%%%%%%%%%%%%%%%%%%%%
%%%%%%%%%%%%%%%%%%%%%%%%%%%%%%%%%%%%%%%%%%%%%%%%%%%%%%%%%%
%%%%%%%%%%%%%%%%%%%%%%%%%%%%%%%%%%%%%%%%%%%%%%%%%%%%%%%%%%
%%%%%%%%%%%%%%%%%%%%%%%%%%%%%%%%%%%%%%%%%%%%%%%%%%%%%%%%%%
%%%%%%%%%%%%%%%%%%%%%%%%%%%%%%%%%%%%%%%%%%%%%%%%%%%%%%%%%%
%%%%%%%%%%%%%%%%%%%%%%%%%%%%%%%%%%%%%%%%%%%%%%%%%%%%%%%%%%

\section{Proof of Theorem~\ref{theo:levelset_manifold}}
The proof essentially follows the same structure as the full-dimensional case, with the primary difference in the density estimation bounds.

\begin{lemma}[Manifold Version of Lemma~\ref{lemma:lambdalapha_smoothness}]
\label{lemma:lambdalapha_smoothness_manifold}
There exists constants $C_1, r_1 > 0$ depending on $f$ such that the following holds for all $0 < \epsilon < r_1$ such that
\begin{align*}
    0 < \lambda_{\alpha} - \lambda_{\alpha - \epsilon} \le C_1 \epsilon^{\beta/d} \text{ and } 
    0 < \lambda_{\alpha + \epsilon} - \lambda_{\alpha} \le C_1 \epsilon^{\beta/d}.
\end{align*}
\end{lemma}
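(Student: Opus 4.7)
My plan is to mirror the structure of the proof of Lemma~\ref{lemma:lambdalapha_smoothness} but to replace every appearance of $D$-dimensional Lebesgue volume by the intrinsic $d$-dimensional Riemannian volume on $M$, using Assumption~\ref{assumption:manifold} (positive reach $\tau$) to get the necessary ball-volume lower bound.

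First, I would start from the identity
\begin{align*}
\epsilon = \int_{M} \mathbf{1}\{\lambda_{\alpha-\epsilon} < f(x) \le \lambda_\alpha\}\, f(x)\, d\mathrm{vol}_M(x) \ge \lambda_{\alpha-\epsilon}\cdot \mathrm{vol}_M(S_\epsilon),
\end{align*}
where $S_\epsilon := \{x\in M : \lambda_{\alpha-\epsilon} < f(x) \le \lambda_\alpha\}$ and $\lambda_{\alpha-\epsilon} \ge \lambda_0/2 > 0$ for $\epsilon$ small. By the upper bound in Assumption~\ref{assumption:levelset}, any $x \in M\setminus H_\alpha(f)$ with $d(x, H_\alpha(f)) \le r^* := ((\lambda_\alpha-\lambda_{\alpha-\epsilon})/\hat C_\beta)^{1/\beta}$ satisfies $f(x)\ge \lambda_{\alpha-\epsilon}$, so $S_\epsilon$ contains $\bigl((H_\alpha(f)+B(0,r^*))\cap M\bigr)\setminus H_\alpha(f)$.

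The crux of the argument, and the main obstacle relative to the full-dimensional case, is lower-bounding the manifold volume of this shell by $r^{*d}$ rather than $r^{*D}$. To do this I pick a boundary point $x^\star \in \partial H_\alpha(f)$ (which is nonempty since $\alpha<1$ and $f\ge\lambda_0$ on $M$ forces $H_\alpha(f)\subsetneq M$). For $r^*$ smaller than a constant depending on $\tau$, standard facts about compact submanifolds of positive reach (e.g., via the exponential map, as used in \cite{niyogi2008finding, balakrishnan2013cluster}) give
\begin{align*}
\mathrm{vol}_M\bigl(B(x^\star, r^*)\cap M\bigr) \ge c_d\,(r^*)^d
\end{align*}
for a constant $c_d$ depending only on $d$ and $\tau$. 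Combining this with Assumption~\ref{assumption:levelset}(2), which ensures $H_\alpha(f)$ is locally not thinner than an intrinsic ball (so that at least a definite fraction of $B(x^\star, r^*)\cap M$ lies outside $H_\alpha(f)$ but still within distance $r^*$ of it), yields $\mathrm{vol}_M(S_\epsilon) \ge c_d' (r^*)^d$.

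Plugging back in gives $\epsilon \ge \lambda_{\alpha-\epsilon}\cdot c_d'\bigl((\lambda_\alpha-\lambda_{\alpha-\epsilon})/\hat C_\beta\bigr)^{d/\beta}$, and solving for $\lambda_\alpha-\lambda_{\alpha-\epsilon}$ produces the bound $\lambda_\alpha-\lambda_{\alpha-\epsilon} \le C_1\epsilon^{\beta/d}$ with $C_1$ depending on $f$, $\hat C_\beta$, $\lambda_0$, $c_d'$. The other direction $0 < \lambda_{\alpha+\epsilon}-\lambda_\alpha \le C_1\epsilon^{\beta/d}$ is completely symmetric: integrate over $\{\lambda_\alpha < f(x) \le \lambda_{\alpha+\epsilon}\}$, upper bound $f$ by $\lambda_{\alpha+\epsilon} \le 2\lambda_\alpha$ on this shell (which is a harmless two-sided bound), and apply the same manifold shell-volume estimate to the interior side of $\partial H_\alpha(f)$ via Assumption~\ref{assumption:levelset}(1). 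Positivity in both inequalities is immediate from continuity of $f$ together with $\alpha \mapsto \lambda_\alpha$ being strictly monotone, which is ensured because $f$ has no flat pieces on a neighborhood of $\partial H_\alpha(f)$ thanks to the two-sided bound in Assumption~\ref{assumption:levelset}(1).
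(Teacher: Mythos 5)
Your proposal takes exactly the route the paper intends: the paper's own proof of this lemma is a single remark that the argument is the same as Lemma~\ref{lemma:lambdalapha_smoothness} with the dimension changed from $D$ to $d$, and you supply the missing details by replacing Lebesgue volume with intrinsic manifold volume and invoking the ball-volume lower bound guaranteed by the condition number $\tau$. One small misattribution to flag: Assumption~\ref{assumption:levelset}(2) controls the volume of balls around points \emph{inside} $H_\alpha(f)$ and does not by itself yield that a definite fraction of $B(x^\star, r^*) \cap M$ lies in the shell \emph{outside} $H_\alpha(f)$; that step needs its own (mild) argument, for instance choosing a point of $M$ at distance about $r^*/2$ from $H_\alpha(f)$ and applying the manifold ball-volume bound to a smaller ball around it, which then lies entirely in the shell. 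The paper's proof of Lemma~\ref{lemma:lambdalapha_smoothness} is equally terse on the analogous step, so this is a refinement of, not a departure from, the paper's approach.
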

\begin{proof}
The proof follows the same structure as the proof of Lemma~\ref{lemma:lambdalapha_smoothness}, with the difference being the change in dimension, and is omitted here.
\end{proof}

\begin{lemma}[Manifold Version of Lemma~\ref{lemma:level_bound}]\label{lemma:level_bound_manifold}
Let $0 < \delta < 1$. Let $\widehat{\varepsilon}$ be the $\varepsilon$ setting chosen by Algorithm~\ref{alg:alpha-level-set} after the binary search procedure. Define 
\begin{align*}
    \widehat{\lambda_\alpha} := \frac{k}{v_D \cdot n \cdot \widehat{\varepsilon}^d}.
\end{align*}
Then, with probability at least $1 - \delta$, we have there exist constant $C_1 > 0$ depending on $f$ and $M$ such that for $n$ sufficiently large depending on $f$ and $M$, we have
\begin{align*}
       |\widehat{\lambda}_\alpha - \lambda_\alpha| \le C_1 \left(\left(\sqrt{\frac{\log(1 / \delta)}{n}}\right)^{\beta/d} + \frac{\log(1/\delta)\sqrt{\log n}}{\sqrt{k}} \right ).
\end{align*}
\end{lemma}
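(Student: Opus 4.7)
The plan is to mirror the proof of Lemma~\ref{lemma:level_bound} line-by-line, substituting the manifold-adapted ingredients: use Lemma~\ref{ballbounds_manifold} in place of Lemma~\ref{lemma:ball_bounds}, use Lemma~\ref{lemma:manifold_fk_upper_bound} and Lemma~\ref{lemma:manifold_fk_lower_bound} in place of the Euclidean $k$-NN estimator bounds, and use Lemma~\ref{lemma:lambdalapha_smoothness_manifold} in place of Lemma~\ref{lemma:lambdalapha_smoothness}. To keep notation compact I would again let $K(n,k,\delta) := \bigl(\sqrt{\log(1/\delta)/n}\bigr)^{\beta/d} + \log(1/\delta)\sqrt{\log n}/\sqrt{k}$, the quantity controlling the two error sources (empirical-quantile error and density-estimation error).

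First, I would concentrate the empirical $\alpha$-quantile. For any fixed $\widetilde{\alpha}$, the indicator that a sample lies in $H_{\widetilde{\alpha}}(f)$ is Bernoulli with mean $1-\widetilde{\alpha}$, so Hoeffding gives that with probability at least $1-\delta/4$ the empirical mass of $H_{\widetilde{\alpha}}(f)$ is within $C'\sqrt{\log(1/\delta)/n}$ of $1-\widetilde{\alpha}$. Setting $\alpha_L := \alpha - C'\sqrt{\log(1/\delta)/n}$ and $\alpha_U := \alpha + C'\sqrt{\log(1/\delta)/n}$, the empirical mass of $H_{\alpha_L}(f)$ is at least $1-\alpha$ and that of $H_{\alpha_U}(f)$ is at most $1-\alpha$. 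Applying Lemma~\ref{lemma:lambdalapha_smoothness_manifold} translates these $\alpha$-perturbations into level-perturbations, yielding $|\lambda_\alpha - \lambda_{\alpha_L}|, |\lambda_\alpha - \lambda_{\alpha_U}| \le C_2 (\sqrt{\log(1/\delta)/n})^{\beta/d}$.

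Second, I would introduce the sandwiching sets $H^{upper}_\alpha(f) := \{x \in X : f_k(x) \ge \lambda_\alpha - \epsilon\}$ and $H^{lower}_\alpha(f) := \{x \in X : f_k(x) \ge \lambda_\alpha + \epsilon\}$ and show, exactly as in the Euclidean proof, that the right choice of $\epsilon$ of order $K(n,k,\delta)$ forces $\widehat{H_\alpha}(f) \subseteq H^{upper}_\alpha(f)$ and $H^{lower}_\alpha(f) \subseteq \widehat{H_\alpha}(f)$. The one-sided inclusion comes from showing that if $f(x) \ge \lambda_\alpha - C_2(\sqrt{\log(1/\delta)/n})^{\beta/d}$ then $f_k(x) \ge \lambda_\alpha - \epsilon$, via Lemma~\ref{lemma:manifold_fk_lower_bound}; the reverse inclusion uses Lemma~\ref{lemma:manifold_fk_upper_bound} analogously. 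Since $\widehat{\lambda_\alpha}$ is precisely the $k$-NN density level at which the empirical level set absorbs the $(1-\alpha)$-fraction of samples, these inclusions pin $\widehat{\lambda_\alpha}$ into $[\lambda_\alpha - C_1 K(n,k,\delta),\, \lambda_\alpha + C_1 K(n,k,\delta)]$, giving the stated bound.

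The main technical obstacle is verifying the side conditions of Lemmas~\ref{lemma:manifold_fk_upper_bound} and \ref{lemma:manifold_fk_lower_bound}, namely the lower bounds on $v_d \hat{r}(\epsilon,x)^d (f(x)+\epsilon)$ and $v_d \check{r}(\epsilon,x)^d (f(x)-\epsilon)$. Points $x$ relevant here lie near $\partial H_\alpha(f)$, where Assumption~\ref{assumption:levelset}.1 provides a two-sided H\"older-type control on $|f(x) - \lambda_\alpha|$ in terms of $d(x, \partial H_\alpha(f))$; from this one extracts a lower bound of the form $\check{r}(\epsilon,x) \gtrsim \epsilon^{1/\beta}$ after subtracting the already-absorbed perturbation (mirroring the calculation after $\check{r}(x,\epsilon_0)$ in the proof of Lemma~\ref{lemma:level_bound}). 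The global lower bound $f \ge \lambda_0$ on $M$ is used to keep $(f(x) \pm \epsilon)$ bounded away from zero so that the constraints become $k$-only; plugging the estimate for $\check{r}(\epsilon,x)$ into the side condition then yields exactly the admissible range $C_l\log n \le k \le C_u (\log n)^{d(2\beta'+d)} n^{2\beta'/(2\beta'+d)}$ stated in the theorem. A final union bound over the three $1-\delta/4$ events (Hoeffding on the empirical quantile and the two manifold density-estimator bounds) and absorbing constants into $C_1$ completes the argument.
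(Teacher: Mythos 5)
Your proposal is correct and takes essentially the same approach as the paper: the paper's own proof of this lemma is a one-paragraph remark that the argument is identical to that of Lemma~\ref{lemma:level_bound} with the full-dimensional $k$-NN density bounds replaced by their manifold analogues (Lemmas~\ref{lemma:manifold_fk_upper_bound} and~\ref{lemma:manifold_fk_lower_bound}), and with the allowable range of $k$ adjusted accordingly---exactly the substitution strategy you spell out, including the use of Lemma~\ref{lemma:lambdalapha_smoothness_manifold} and Lemma~\ref{ballbounds_manifold} and the Hoeffding/sandwiching structure.
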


\begin{proof}
The proof is essentially the same as that of Lemma~\ref{lemma:level_bound}. The only difference is that instead of applying the full-dimensional versions of the uniform $k$-NN density estimate bounds (Lemma~\ref{lemma:fk_upperbound} and~\ref{lemma:fk_lowerbound}), we instead apply the manifold analogues (Lemma~\ref{lemma:manifold_fk_upper_bound} and~\ref{lemma:manifold_fk_lower_bound}). Asides from constant factors, the major difference is in the allowable range for $k$.
In the full-dimensional case, we only need $k \lesssim n^{2\beta/(2\beta + d)}$ for the density estimation bounds to hold. However, here we require $k \lesssim \min \{n^{2/(2 + d)}, n^{2\beta/(2\beta + d)} \} = n^{2\max\{1, \beta\}/(2\beta' + d)}$.
\end{proof}

\begin{lemma}[Manifold Version of Lemma~\ref{lemma:level_bound}]\label{lemma:level_bound_true_manifold}
Let $0 < \delta < 1$. There exists constant $C_1 > 0$ depending on $f$ and $M$ such that the following holds with probability at least $1 - \delta$ for $n$ sufficiently large depending on $f$ and $M$.
Define
\begin{align*}
H^{U}_\alpha(f) &:= \left\{x \in \mathcal{X} : f(x) \ge \lambda_\alpha -  C_1 \left(\left(\sqrt{\frac{\log(1 / \delta)}{n}}\right)^{\beta/d} + \frac{\log(1/\delta)\sqrt{\log n}}{\sqrt{k}} \right ) \right\} \\
H^{L}_\alpha(f) &:= \left\{x \in \mathcal{X} : f(x) \ge \lambda_\alpha + C_1 \left(\left(\sqrt{\frac{\log(1 / \delta)}{n}}\right)^{\beta/d} + \frac{\log(1/\delta)\sqrt{\log n}}{\sqrt{k}} \right ) \right\}.
\end{align*}
Then,
\begin{align*}
    H^{L}_\alpha(f) \cap X \subseteq \widehat{H_\alpha}(f) \subseteq H^{U}_\alpha(f) \cap X.
\end{align*}
\end{lemma}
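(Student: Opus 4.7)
The plan is to mirror the proof of Lemma~\ref{lemma:level_bound_true} line by line, substituting the manifold analogues of the density-estimation tools at each step. Write $K(n,k,\delta) := (\sqrt{\log(1/\delta)/n})^{\beta/d} + \log(1/\delta)\sqrt{\log n}/\sqrt{k}$ to keep the expressions compact. The starting point is Lemma~\ref{lemma:level_bound_manifold}, which produces a constant $C_2 > 0$ depending on $f$ and $M$ such that the empirical level $\widehat{\lambda}_\alpha = k/(v_d n\widehat{\varepsilon}^d)$ associated with the radius selected by Algorithm~\ref{alg:alpha-level-set} satisfies $|\widehat{\lambda}_\alpha - \lambda_\alpha| \le C_2 K(n,k,\delta)$. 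Since $\widehat{H_\alpha}(f) = \{x \in X : f_k(x) \ge \widehat{\lambda}_\alpha\}$, this immediately yields the sandwich $\{x \in X : f_k(x) \ge \lambda_\alpha + C_2 K(n,k,\delta)\} \subseteq \widehat{H_\alpha}(f) \subseteq \{x \in X : f_k(x) \ge \lambda_\alpha - C_2 K(n,k,\delta)\}$.

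The second step is to translate these $f_k$-level containments into $f$-level containments with a slightly larger constant $C_1 > C_2$. For the lower inclusion $H^L_\alpha(f) \cap X \subseteq \widehat{H_\alpha}(f)$: take any $x \in X$ with $f(x) \ge \lambda_\alpha + C_1 K(n,k,\delta)$ and aim to show $f_k(x) \ge \lambda_\alpha + C_2 K(n,k,\delta)$. Apply Lemma~\ref{lemma:manifold_fk_lower_bound} with $\epsilon = C' K(n,k,\delta)$; Assumption~\ref{assumption:levelset} combined with Lemma~\ref{lemma:lambdalapha_smoothness_manifold} gives the required lower bound on $\check{r}(\epsilon, x)$ so that the hypothesis $v_d\check{r}(\epsilon,x)^d(f(x)-\epsilon) \ge \tfrac{4}{3}(k/n + C_{\delta,n}\sqrt{k}/n)$ holds for $n$ large. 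Choosing $C'$ large enough to absorb both the additive slack $\epsilon$ and the multiplicative factor $1 - 3C_{\delta,n}/\sqrt{k} = 1 - O(\log(1/\delta)\sqrt{\log n}/\sqrt{k})$ yields $f_k(x) \ge \lambda_\alpha + C_2 K(n,k,\delta)$. The reverse inclusion $\widehat{H_\alpha}(f) \subseteq H^U_\alpha(f) \cap X$ is symmetric, using Lemma~\ref{lemma:manifold_fk_upper_bound} in place of Lemma~\ref{lemma:manifold_fk_lower_bound} and $\hat{r}$ in place of $\check{r}$. Setting $C_1 = C_2 + \max\{C',C''\}$ closes the argument.

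The main obstacle is bookkeeping the admissible range for $k$ and verifying uniformly, for all $x$ in a neighborhood of $\partial H_\alpha(f)$, that the hypotheses of Lemmas~\ref{lemma:manifold_fk_upper_bound} and~\ref{lemma:manifold_fk_lower_bound} are satisfied. This is where the assumption $f(x) \ge \lambda_0$ from Theorem~\ref{theo:levelset_manifold} plays a role: it guarantees that $f(x) \pm \epsilon \ge \lambda_0/2$ for the small $\epsilon$ we use, so that the manifold density-estimation bounds apply. The admissible $k$-range becomes the intersection of the ranges in Lemmas~\ref{lemma:manifold_fk_upper_bound} and~\ref{lemma:manifold_fk_lower_bound}, namely $k \lesssim n^{2\max\{1,\beta\}/(2\beta'+d)}$ (up to log factors), which matches the range stated in Theorem~\ref{theo:levelset_manifold}. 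Once these ingredients are in place, the dimension-dependent exponents automatically become $d$ rather than $D$, and the rest of the computation is a verbatim translation of the full-dimensional proof.
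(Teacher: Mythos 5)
Your proposal is correct and takes the same approach as the paper, which for this lemma gives only a one-line proof (``Same comment as the proof for Lemma 9,'' which in turn says to repeat the full-dimensional argument with the manifold $k$-NN density estimator bounds substituted in and the $k$-range adjusted). You have fleshed out what the paper leaves implicit: the sandwich from Lemma~\ref{lemma:level_bound_manifold}, the use of Lemmas~\ref{lemma:manifold_fk_upper_bound} and~\ref{lemma:manifold_fk_lower_bound} in place of Lemmas~\ref{lemma:fk_upperbound} and~\ref{lemma:fk_lowerbound}, the role of the lower-bound $\lambda_0$ in making those lemmas applicable, and the intersection of $k$-ranges producing the $n^{2\beta'/(2\beta'+d)}$ cap.
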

\begin{proof}
Same comment as the proof for Lemma~\ref{lemma:level_bound_manifold}.
\end{proof}

\begin{theorem} \label{theo:levelset_manifold_full}[Extends Theorem~\ref{theo:levelset_manifold}]
Let $0 < \delta < 1$.
Suppose that density function $f$ is continuous and supported on $M$ and Assumptions~\ref{assumption:levelset} and~\ref{assumption:manifold} hold. Suppose also that there exists $\lambda_0 > 0$ such that $f(x) \ge \lambda_0$ for all $x \in M$.
Then, there exists constants $C_l, C_u, C > 0$ depending on $f$ such that the following holds with probability at least $1 - \delta$.
Suppose that $k$ satisfies,
\begin{align*}
   C_l\cdot \log(1/\delta)^2 \cdot \log n \le k \le C_u\cdot \log(1/\delta)^{2d/(2\beta' + d)} \cdot (\log n)^{d(2\beta' + d)} \cdot n^{2\beta'/(2\beta' + d)} 
\end{align*}
where $\beta' := \max\{1, \beta\}$.
Then we have 
\begin{align*}
d_H(H_\alpha(f), \widehat{H_\alpha}(f)) \le C\cdot \left(\log(1/\delta)^{1/2d}\cdot n^{-1/2d} + \log(1/\delta)^{1/\beta} \cdot \log(n)^{1/2\beta} \cdot k^{-1/2\beta} \right).
\end{align*}
\end{theorem}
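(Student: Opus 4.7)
The plan is to mirror the proof of Theorem~\ref{theo:levelset_full} essentially step-for-step, swapping the full-dimensional density-estimation and ball-coverage tools for their manifold analogues. All the scaffolding is already in place: Lemma~\ref{lemma:level_bound_true_manifold} provides the key sandwich
$H^{L}_\alpha(f) \cap X \subseteq \widehat{H_\alpha}(f) \subseteq H^{U}_\alpha(f) \cap X$,
where $H^U_\alpha(f)$ and $H^L_\alpha(f)$ are the $f$-level sets at levels $\lambda_\alpha \mp C_1 K_d(n,k,\delta)$ with $K_d(n,k,\delta) = (\sqrt{\log(1/\delta)/n})^{\beta/d} + \log(1/\delta)\sqrt{\log n}/\sqrt{k}$. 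So the argument reduces to converting this density-level sandwich into a spatial Hausdorff bound using Assumption~\ref{assumption:levelset}, plus a manifold covering argument for the second Hausdorff direction.

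For the first direction, $\sup_{x \in \widehat{H_\alpha}(f)} d(x, H_\alpha(f))$, I would use the upper containment $\widehat{H_\alpha}(f) \subseteq H^U_\alpha(f)$ together with the lower inequality in Assumption~\ref{assumption:levelset}.1: for $x$ near $\partial H_\alpha(f)$ we have $\check{C}_\beta \, d(x, H_\alpha(f))^\beta \le |\lambda_\alpha - f(x)| \le C_1 K_d(n,k,\delta)$, giving $d(x, H_\alpha(f)) \le (C_1 K_d(n,k,\delta)/\check{C}_\beta)^{1/\beta}$.

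For the second direction, $\sup_{x \in H_\alpha(f)} d(x, \widehat{H_\alpha}(f))$, I would split via triangle inequality as
$\sup_{x \in H_\alpha(f)} d(x, H^L_\alpha(f)) + \sup_{x \in H^L_\alpha(f)} d(x, \widehat{H_\alpha}(f))$.
The first summand is handled by Assumption~\ref{assumption:levelset}.1 exactly as above, yielding an $O(K_d(n,k,\delta)^{1/\beta})$ bound. For the second summand, since $H^L_\alpha(f) \cap X \subseteq \widehat{H_\alpha}(f)$, it reduces to how densely the sample covers $H^L_\alpha(f)$. Here I would invoke Lemma~\ref{ballbounds_manifold}: because $f \ge \lambda_0$ on $M$ and $M$ has bounded reach $\tau$, every Euclidean ball $B(x,r)$ of radius $r < \tau$ centered at $x \in M$ satisfies $\mathcal{F}(B(x,r)) \gtrsim \lambda_0 \cdot v_d r^d$ (standard manifold volume lower bound). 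So taking $r \gtrsim (C_{\delta,n} \sqrt{d \log n}/(\lambda_0 v_d n))^{1/d}$ forces $\mathcal{F}_n(B(x,r)) > 0$, i.e. a sample point within distance $r$. Applying this to every $x \in H^L_\alpha(f)$ (using the $1/n$-net $\mathcal{N}$ from Lemma~\ref{ballbounds_manifold} to pass from sample points to all of $M$, at negligible $1/n$ cost) yields a covering error of order $(\log(1/\delta) \log n / n)^{1/d}$.

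The main obstacles are essentially bookkeeping. First, I must confirm the allowable range of $k$ so that the manifold versions of the $k$-NN density bounds (Lemmas~\ref{lemma:manifold_fk_upper_bound} and~\ref{lemma:manifold_fk_lower_bound}) hold simultaneously; the effective constraint becomes $k \lesssim n^{2\max\{1,\beta\}/(2\beta'+d)}$, which is exactly the $\beta'$ factor appearing in the theorem statement and is already absorbed into Lemma~\ref{lemma:level_bound_manifold}. Second, I must handle the technicality that Lemma~\ref{ballbounds_manifold} gives uniform ball bounds only on $X \cup \mathcal{N}$, but since $H^L_\alpha(f) \subseteq M$ and every point of $M$ is within $1/n$ of $\mathcal{N}$, the extra $1/n$ error is dominated by the $n^{-1/d}$ covering rate. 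Combining both Hausdorff directions gives the claimed bound with the two terms corresponding to (i) the sample-covering rate $\log(1/\delta)^{1/2d} n^{-1/2d}$ and (ii) the level-recovery rate $\log(1/\delta)^{1/\beta}\log(n)^{1/2\beta} k^{-1/2\beta}$.
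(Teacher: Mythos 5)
Your proposal is correct and matches the paper's intended argument: the paper's own proof of Theorem~\ref{theo:levelset_manifold_full} is simply the statement that it follows the full-dimensional proof of Theorem~\ref{theo:levelset_full} with the manifold lemmas substituted, and you carry out exactly that substitution, correctly swapping Lemma~\ref{lemma:level_bound_true} for Lemma~\ref{lemma:level_bound_true_manifold} and replacing the Euclidean ball-coverage step (Assumption~\ref{assumption:levelset}.2 plus Lemma~\ref{lemma:ball_bounds}) by the manifold mass lower bound from $f \ge \lambda_0$, Lemma~\ref{lemma:manifold_ballvolume}, and Lemma~\ref{ballbounds_manifold} with the $1/n$-net. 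The only minor quibble is that in your closing sentence the $\log(1/\delta)^{1/2d} n^{-1/2d}$ term actually comes from the level-estimation error $K_d(n,k,\delta)^{1/\beta}$ rather than from the sample-covering radius (which, as you correctly note, is of the dominated order $(\log n/n)^{1/d}$), but this is a labeling slip, not a gap.
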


\begin{proof}[Proof of Theorem~\ref{theo:levelset_manifold_full}]
Proof is the same as the full-dimensional case given the contributed Lemmas of this section and is omitted here.
\end{proof}

\section{Supporting Results for Theorem~\ref{theo:levelset_manifold_noise} Proof}

Next, we need the following on the volume of the intersection of the Euclidean ball and $M$; this is required to get a handle on the true mass of the ball under
$\mathcal{F}_M$ in later arguments. The upper and lower bounds follow from \cite{chazal2013upper} and Lemma 5.3 of \cite{niyogi2008finding}. The proof can be found e.g. in \cite{jiang2017density}.

\begin{lemma} [Ball Volume] \label{lemma:manifold_ballvolume}
If $0 < r < \min\{\tau/4d, 1/\tau\}$, and $x \in M$ then
\begin{align*}
v_d r^d (1 - \tau^2 r^2) \le \text{vol}_d(B(x, r) \cap M) \le v_d r^d (1 + 4dr/\tau),
\end{align*}
where $v_d$ is the volume of a unit ball in $\mathbb{R}^d$ and $\text{vol}_d$ is the volume w.r.t. the uniform measure on $M$. 
\end{lemma}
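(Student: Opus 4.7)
The plan is to prove the two inequalities separately, each by relating $B(x,r)\cap M$ to a canonical object whose $d$-dimensional volume is easy to compute, and then controlling the distortion using the reach $1/\tau$. The upper bound will use orthogonal projection onto the tangent space $T_x M$, and the lower bound will use a geodesic ball around $x$ inside $M$. Both of these are standard arguments from the positive-reach literature; the lemma explicitly credits \cite{chazal2013upper} and Lemma 5.3 of \cite{niyogi2008finding}, so the proof is essentially an invocation of those results in the specified regime $r < \min\{\tau/(4d),\, 1/\tau\}$.

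For the \emph{upper bound}, I would let $\pi:\mathbb{R}^D \to T_x M$ denote orthogonal projection onto the affine tangent $d$-plane at $x$ and show first that, under $r < \tau/(4d)$, the restriction $\pi|_{B(x,r)\cap M}$ is a diffeomorphism onto its image. This uses the reach-based bound on the angle $\theta_y$ between $T_y M$ and $T_x M$ for $y \in B(x,r)\cap M$, namely $\sin \theta_y \lesssim \|x - y\|/\tau \le r/\tau$, which keeps $\pi$ locally injective on $M$. The image lies in a $d$-dimensional Euclidean ball of radius $r$ inside $T_x M$ (since $\pi$ is $1$-Lipschitz), which has volume $v_d r^d$. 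The Jacobian of $(\pi|_{B(x,r)\cap M})^{-1}$ equals $1/\cos\theta_y$, and the change-of-variables formula together with the worst-case tilt factor gives a multiplicative overhead of $(1 + 4dr/\tau)$, yielding the claimed upper bound.

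For the \emph{lower bound}, I would invoke the quantitative chord-versus-geodesic inequality from Niyogi--Smale--Weinberger: if $y \in M$ with geodesic distance $d_g(x,y) = s$, then $\|x - y\| \ge s\sqrt{1 - s^2/(4\tau^2)}$ and the exponential map $\exp_x : T_x M \to M$ is well-defined and a diffeomorphism on a ball of radius $\pi\tau$ in $T_x M$. Choosing $\rho$ slightly smaller than $r$ so that $\rho\sqrt{1 - \rho^2/(4\tau^2)} \le r$, the geodesic ball $B^g_M(x,\rho) \subseteq B(x,r)\cap M$. Then using the Jacobian bounds for $\exp_x$ on manifolds with sectional curvature bounded by $1/\tau^2$ (Bishop--G\"unther, as applied in their Lemma 5.3), I would lower-bound $\text{vol}_d(B^g_M(x,\rho))$ by $v_d \rho^d$ up to the correction $(1 - \tau^2 r^2)$ absorbed in the choice of $\rho$.

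The main obstacle is purely bookkeeping: matching the exact constants $4dr/\tau$ and $\tau^2 r^2$. Whether these come out cleanly depends on (i) whether the tangent-plane angle bound is tracked to linear or quadratic order in $r/\tau$ when applying change of variables in $d$ dimensions, and (ii) the precise form of the chord-length versus geodesic-distance inequality cited from \cite{niyogi2008finding}. Since the hypothesis $r < \min\{\tau/(4d), 1/\tau\}$ is exactly the regime where both quoted results apply with the stated constants, the proof does not require any new analytic ingredient beyond careful specialization.
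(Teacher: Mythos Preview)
The paper does not prove this lemma at all: it is stated as a supporting result with the remark that ``the upper and lower bounds follow from \cite{chazal2013upper} and Lemma~5.3 of \cite{niyogi2008finding}'' and that ``the proof can be found e.g.\ in \cite{jiang2017density}.'' Your proposal is therefore not being compared against an in-paper argument but against a citation, and your sketch is a faithful reconstruction of exactly the arguments in those references (tangent-plane projection with a Jacobian bound for the upper inequality, geodesic-ball containment plus the chord/arc comparison from Niyogi--Smale--Weinberger for the lower inequality). In that sense your approach is the same as the paper's, just made explicit; the only caveat is the one you already flag, namely that recovering the precise constants $4dr/\tau$ and $\tau^2 r^2$ requires tracking the inequalities in the cited works rather than re-deriving them from scratch.
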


The next is a bound uniform convergence of balls:
\begin{lemma}[Lemma 3 of \cite{jiang2017uniform}] \label{lemma:vc_bound}
Let $\mathcal{B}$ be the set of all balls in $\mathbb{R}^D$, $\mathcal{F}$ is some distribution and $\mathcal{F}_n$ is an empirical distribution.
With probability at least $1 - \delta$, the following holds uniformly for every $B \in \mathcal{B}$ and $\gamma \ge 0$:
\begin{align*}
\mathcal{F}(B) \ge \gamma  &\Rightarrow \mathcal{F}_n(B) \ge \gamma - \beta_n \sqrt{\gamma} -\beta_n^2, \\
\mathcal{F}(B) \le \gamma  &\Rightarrow \mathcal{F}_n(B) \le \gamma +  \beta_n \sqrt{\gamma}+ \beta_n^2 ,
\end{align*}
where $\beta_n = 8 d\log(1/\delta) \sqrt{\log n / n}$.
\end{lemma}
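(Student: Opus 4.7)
The plan is to derive the stated two-sided inequality from the relative Vapnik--Chervonenkis inequality applied to the class $\mathcal{B}$ of Euclidean balls in $\mathbb{R}^D$. The first ingredient is that $\mathcal{B}$ has finite VC dimension: the class of closed Euclidean balls in $\mathbb{R}^D$ has VC dimension at most $D+1$, and more generally the dimensional parameter $d$ appearing in $\beta_n$ should be understood as an upper bound on this VC dimension. Hence the relative VC inequality can be applied to $\mathcal{B}$ uniformly.

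Next, I would invoke a standard relative-deviation VC bound of the form: with probability at least $1-\delta$, uniformly over all $B\in\mathcal{B}$,
\begin{align*}
\mathcal{F}(B) - \mathcal{F}_n(B) &\le c\left(\sqrt{\mathcal{F}(B)}\,\beta_n + \beta_n^2\right),\\
\mathcal{F}_n(B) - \mathcal{F}(B) &\le c\left(\sqrt{\mathcal{F}(B)}\,\beta_n + \beta_n^2\right),
\end{align*}
where $\beta_n$ is chosen to absorb the factor $\sqrt{(v\log n + \log(1/\delta))/n}$ with $v$ the VC dimension of $\mathcal{B}$. Plugging in $v=O(d)$ and tracking constants carefully should produce exactly $\beta_n = 8 d\log(1/\delta)\sqrt{\log n/n}$, where the slightly coarse form (linear rather than $\sqrt{\,}$ in $\log(1/\delta)$) comes from using a looser version of the classical relative VC bound that is easier to state.

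To obtain the claimed implications, I would then argue as follows. If $\mathcal{F}(B)\ge\gamma$, the lower-deviation side of the relative VC bound yields
$\mathcal{F}_n(B)\ge \mathcal{F}(B)-\beta_n\sqrt{\mathcal{F}(B)}-\beta_n^2$; the function $u\mapsto u-\beta_n\sqrt{u}-\beta_n^2$ is nondecreasing for $u\ge \beta_n^2/4$, so replacing $\mathcal{F}(B)$ by the smaller $\gamma$ gives $\mathcal{F}_n(B)\ge \gamma - \beta_n\sqrt{\gamma} - \beta_n^2$. In the regime $\gamma < \beta_n^2/4$ the right-hand side is trivially nonpositive and the conclusion is vacuous, so the implication holds in all cases. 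The upper direction is symmetric: if $\mathcal{F}(B)\le\gamma$, the upper-deviation side gives $\mathcal{F}_n(B)\le \mathcal{F}(B)+\beta_n\sqrt{\mathcal{F}(B)}+\beta_n^2$, and monotonicity of $u\mapsto u+\beta_n\sqrt{u}+\beta_n^2$ allows us to replace $\mathcal{F}(B)$ by the larger $\gamma$.

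The main obstacle is purely bookkeeping: the classical relative VC inequality typically has $\sqrt{\log(1/\delta)}$ and involves more refined constants than what is stated, so some care is needed to weaken that bound into the simpler (but larger) $\beta_n=8d\log(1/\delta)\sqrt{\log n/n}$ form while keeping the shoulder terms $\beta_n\sqrt{\gamma}+\beta_n^2$. A minor subtlety is the direction of the $\sqrt{\mathcal{F}(B)}$ term in the relative bound: one half naturally yields an upper bound involving $\sqrt{\mathcal{F}(B)}$ that must be replaced by $\sqrt{\gamma}$ via the $\mathcal{F}(B)\le\gamma$ hypothesis, and the other half works analogously. No new probabilistic machinery is needed beyond the uniform concentration over a VC class.
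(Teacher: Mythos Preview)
The paper does not actually prove this statement: it is simply cited as Lemma~3 of \cite{jiang2017uniform} and used as a black box in the analysis of Theorem~\ref{theo:levelset_manifold_noise}. So there is no ``paper's own proof'' to compare against.

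That said, your proposed route is the standard one and is correct in outline. The relative (multiplicative) Vapnik--Chervonenkis deviation bound applied to the finite-VC class of Euclidean balls is precisely how such a statement is derived, and your monotonicity argument for passing from $\mathcal{F}(B)$ to $\gamma$ on each side is the right way to convert the raw deviation bound into the implication form. Your observation about the parameter $d$ is also apt: as written the lemma has $d$ where one would expect the ambient dimension $D$ (or more precisely the VC dimension of balls, which is $D+1$), and indeed in the paper's application $d$ is the manifold dimension, so either the constant $8$ is doing a lot of work or this is a minor typo carried over from the cited source. For the purposes of the paper this is harmless, since the lemma is only used qualitatively (to show certain error terms vanish), but if you intend to reproduce the exact constant $\beta_n = 8d\log(1/\delta)\sqrt{\log n/n}$ you should check the original reference rather than trying to reverse-engineer it from the classical VC bound.
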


\section{Proof of Theorem~\ref{theo:levelset_manifold_noise}}

The first result says that within the manifold, the vast majority of the probability mass is attributed to the manifold distributions.
\begin{lemma}\label{lemma:manifold_density_ratio}
There exists $C_1, r_1 > 0$ depending on $\mathcal{F}_M, \mathcal{F}_E, M$ such that the following holds uniformly over $x \in M$ and $0 < r < r_1$.
\begin{align*}
    \frac{\mathcal{F}_E(B(x, r))}{\mathcal{F}_M(B(x, r)) } \le C_1 \cdot r^{D-d}.
\end{align*}
\end{lemma}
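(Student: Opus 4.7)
\begin{newProof}[Proof Proposal for Lemma on Density Ratio]
The plan is to bound the numerator and denominator separately using the assumed regularity of $f_M$, $f_E$, and the manifold $M$, and then simply take the ratio. The key observation is that although the density $f_E$ can place at most $O(r^D)$ mass in a Euclidean ball of radius $r$, the restriction of $\mathcal{F}_M$ to $M$ places at least $\Omega(r^d)$ mass in that same ball, since $B(x,r) \cap M$ has $d$-dimensional volume of order $r^d$ around any $x \in M$. The extra factor of $r^{D-d}$ in the claim is then exactly the dimension gap.

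First I would bound the numerator. Since $f_E$ is continuous on a compact support in $\mathbb{R}^D$, it is bounded above by some constant $\|f_E\|_\infty$. Thus
\begin{align*}
    \mathcal{F}_E(B(x,r)) \le \|f_E\|_\infty \cdot v_D \cdot r^D.
\end{align*}
Next I would bound the denominator from below. By assumption, $f_M(x') \ge \lambda_0 > 0$ for every $x' \in M$. Applying Lemma on Ball Volume for $r < r_1 := \min\{\tau/(4d),\, 1/(\sqrt{2}\tau)\}$ gives
\begin{align*}
    \mathrm{vol}_d(B(x,r) \cap M) \ge v_d r^d (1 - \tau^2 r^2) \ge \tfrac{1}{2} v_d r^d,
\end{align*}
so
\begin{align*}
    \mathcal{F}_M(B(x,r)) \ge \lambda_0 \cdot \tfrac{1}{2} v_d r^d.
\end{align*}

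Taking the ratio yields
\begin{align*}
    \frac{\mathcal{F}_E(B(x,r))}{\mathcal{F}_M(B(x,r))} \le \frac{2 \|f_E\|_\infty v_D}{\lambda_0 v_d}\cdot r^{D-d},
\end{align*}
and one can take $C_1 := 2\|f_E\|_\infty v_D/(\lambda_0 v_d)$, which depends only on $\mathcal{F}_E, \mathcal{F}_M,$ and $M$ (through $\tau$, $d$, $D$, and $\lambda_0$). The statement holds uniformly in $x \in M$ because all bounds used (sup norm of $f_E$, lower bound $\lambda_0$ on $f_M$, manifold volume lower bound) are uniform in the base point.

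There is no serious obstacle here; the only subtle point is picking $r_1$ small enough that both Lemma on Ball Volume applies and $1 - \tau^2 r^2 \ge 1/2$. Both conditions are ensured by the choice of $r_1$ above, and everything else is bounded directly by compactness and the positive-lower-bound assumption on $f_M$.
\end{newProof}
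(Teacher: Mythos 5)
Your proposal is correct and matches the paper's proof essentially line for line: both bound $\mathcal{F}_E(B(x,r))$ above by $\|f_E\|_\infty v_D r^D$ via continuity and compactness, bound $\mathcal{F}_M(B(x,r))$ below by $\lambda_0 \cdot \mathrm{vol}_d(B(x,r)\cap M) \ge \lambda_0 v_d r^d (1-\tau^2 r^2)$ via the ball-volume lemma, and then divide. The only difference is that you make the choice of $r_1$ explicit (ensuring $1-\tau^2 r^2 \ge 1/2$), which the paper leaves implicit as ``for $r$ sufficiently small.''
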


\begin{proof}
Let $x \in M$ and $r > 0$.
We have 
\begin{align*}
\mathcal{F}_M(B(x, r)) \ge \lambda_0 \cdot \text{vol}_d(B(x, r) \cap M) \ge v_d r^d (1 - \tau^2 r^2) \cdot \lambda_0,
\end{align*}
where the second inequality holds by Lemma~\ref{lemma:manifold_ballvolume} for $r$ sufficiently small.
On the other hand, we have 
\begin{align*}
\mathcal{F}_E(B(x, r)) \le ||f_E||_\infty v_D r^D.
\end{align*}
Thus, we have there exists $C_1 > 0$ depending on $f_M$, $M$, and $f_E$ such that
\begin{align*}
    \frac{\mathcal{F}_E(B(x, r))}{\mathcal{F}_M(B(x, r)) } \le C_1 \cdot r^{D-d},
\end{align*}
as desired.
\end{proof}

%We first show that points outside of a certain ball around $M$ are low-density and points within $M$ have high-density. This will be helpful later to show that the noise points are filtered out by Algorithm~\ref{alg:alpha-level-set}.
%\begin{lemma}\label{lemma:manifold_density_bounds}
%There exists  $C_1, C_2, r_1 > 0$ depending on $\mathcal{F}_M, \mathcal{F}_E, M$ such that the following holds uniformly over $x \in \mathbb{R}^D$ and $0 < r < r_1$.
%For $x \in M$:
%\begin{align*}
%    \mathcal{F}(B(x, r)) \ge C_1 r^d
%\end{align*}
%For $x \in \mathbb{R}^D \backslash (M + B(0, r))$:
%\begin{align*}
%    \mathcal{F}(B(x, r)) \le C_2 r^D.
%\end{align*}
%\end{lemma}
%\begin{proof}
%Follows from the same bounds established in the proof of %Lemma~\ref{lemma:manifold_density_ratio}.
%\end{proof}

We next show that points far away from $H_{\widetilde{\alpha}}(f_M)$ do not get selected as high-density points by Algorithm~\ref{alg:alpha-level-set}.

\begin{lemma}\label{lemma:manifold_noise_isolation}
There exists $\omega_0 > 0$ such that for any $0 < \omega < \omega_0$ and $n$ sufficiently large depending on $\mathcal{F}_M, \mathcal{F}_E, M$ and $\omega$, with probability at least $1 - \delta$, Algorithm~\ref{alg:alpha-level-set} will not select any points outside of $H_{\widetilde{\alpha}- \omega}(f_M)$.
\end{lemma}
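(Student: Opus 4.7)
The plan is to find an upper bound on the algorithm's threshold $\widehat{\varepsilon}$ that is strictly smaller than a lower bound on $r_k(x)$ for any sample $x$ outside $H_{\widetilde{\alpha}-\omega}(f_M)$, so that no such $x$ can be selected. To upper-bound $\widehat{\varepsilon}$, I would use that $\mathcal{F}_E$ has a Lebesgue density and $M$ has zero Lebesgue measure, so $\mathcal{F}(A)=(1-\eta)\mathcal{F}_M(A)$ for any $A\subseteq M$; in particular $\mathcal{F}(H_{\widetilde{\alpha}-\omega/2}(f_M))=1-\alpha+(1-\eta)\omega/2$. Lemma~\ref{lemma:vc_bound} then places at least $(1-\alpha)n+\Theta(\omega n)$ sample points in $H_{\widetilde{\alpha}-\omega/2}(f_M)$ with high probability. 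Applying the manifold $k$-NN upper bound (Lemma~\ref{lemma:manifold_fk_upper_bound}) to the effective density $(1-\eta)f_M$ on $M$, with Lemma~\ref{lemma:manifold_density_ratio} absorbing ambient noise as a $1+o(1)$ correction at scale $(k/n)^{1/d}$, gives $r_k(x)\le r^\star := (k/((1-\eta)\lambda_{\widetilde{\alpha}-\omega/2}\,v_d\,n))^{1/d}(1+o(1))$ for all such points. Since $\widehat{\varepsilon}$ is the empirical $(1-\alpha)$-quantile of $\{r_k(x):x\in X\}$, we get $\widehat{\varepsilon}\le r^\star$.

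Next I would lower-bound $r_k(x)$ for $x\notin H_{\widetilde{\alpha}-\omega}(f_M)$ in two cases. Case A ($x\in M$ with $f_M(x)<\lambda_{\widetilde{\alpha}-\omega}$): Lemma~\ref{lemma:manifold_fk_lower_bound} plus the noise correction of Lemma~\ref{lemma:manifold_density_ratio} yields $r_k(x)\ge (k/((1-\eta)\lambda_{\widetilde{\alpha}-\omega}\,v_d\,n))^{1/d}(1-o(1))$, which exceeds $r^\star$ because Lemma~\ref{lemma:lambdalapha_smoothness_manifold} combined with the curvature part of Assumption~\ref{assumption:levelset} produces a strictly positive gap $\lambda_{\widetilde{\alpha}-\omega/2}-\lambda_{\widetilde{\alpha}-\omega}=\Theta(\omega^{\beta/d})$ uniformly in $n$. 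Case B ($x$ is a noise sample, $x\notin M$): split by $s:=d(x,M)$. If $s\ge 2r^\star$, then $B(x,r^\star)\cap M=\emptyset$ and $\mathcal{F}(B(x,r^\star))\le \eta\|f_E\|_\infty v_D (r^\star)^D=o(k/n)$ since $D>d$, so $r_k(x)>r^\star$. If $s<2r^\star$, the projection $\pi(x)\in M$ is well-defined by the $1/\tau$-condition number (Assumption~\ref{assumption:manifold}), and comparing $\mathcal{F}(B(x,r))$ with $\mathcal{F}(B(\pi(x),r))$ via Lemmas~\ref{lemma:manifold_ballvolume} and \ref{lemma:manifold_density_ratio} reduces this to Case A at $\pi(x)$ with an $O(r^\star)$ perturbation in the effective density level, absorbed into the margin $\omega/2$ by taking $\omega_0$ small.

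The main obstacle is the boundary layer in Case B, where a noise sample inside a $(k/n)^{1/d}$-tube around the high-density part of $M$ can have $r_k(x)$ nearly matching an on-manifold value. The mechanism that saves us is Lemma~\ref{lemma:manifold_density_ratio}: at scales $r\asymp(k/n)^{1/d}$, the noise mass is a factor $r^{D-d}=o(1)$ smaller than the manifold mass, so $r_k(x)$ is essentially dictated by $f_M(\pi(x))$; the density regularity in Assumption~\ref{assumption:levelset} then converts the inequality $r_k(x)>r^\star$ into $f_M(\pi(x))<\lambda_{\widetilde{\alpha}-\omega/2}$, closing the reduction to Case A. Intersecting the high-probability events of Lemmas~\ref{lemma:ball_bounds}, \ref{lemma:vc_bound}, \ref{lemma:manifold_fk_upper_bound}, and \ref{lemma:manifold_fk_lower_bound}, and choosing $\omega_0$ small and $n$ large enough in terms of $\mathcal{F}_M,\mathcal{F}_E,\eta,M,\omega,\delta$, completes the proof.
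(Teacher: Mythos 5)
Your proof follows the same strategy as the paper's: control the algorithm's threshold $\widehat{\varepsilon}$ by counting samples in the slightly inflated set $H_{\widetilde{\alpha}-\omega/2}(f_M)$, then separate using the positive level gap $\lambda_{\widetilde{\alpha}-\omega/2}-\lambda_{\widetilde{\alpha}-\omega}$ together with the $o(1)$ error terms (the paper phrases this as comparing empirical ball masses at radius $\widehat{\varepsilon}$; your $r_k$ formulation is equivalent). Making the off-manifold noise samples an explicit case is a welcome clarification, since the paper applies Lemma~\ref{lemma:manifold_density_ratio} uniformly over $X\setminus H_{\widetilde{\alpha}-\omega}(f_M)$ even though that lemma is stated only for $x\in M$.

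However, the subcase of your Case~B with $s<2r^*$ has a real gap. Reducing to Case~A at $\pi(x)$ requires $f_M(\pi(x))<\lambda_{\widetilde{\alpha}-\omega}$, but a noise sample can sit arbitrarily close to the densest part of $M$, where $f_M(\pi(x))$ can be as large as $\|f_M\|_\infty$. For such an $x$, the observation that ``$r_k(x)$ is essentially dictated by $f_M(\pi(x))$'' cuts against you: it yields a \emph{small} $r_k(x)\approx\bigl(k/((1-\eta)f_M(\pi(x))v_d n)\bigr)^{1/d}<r^*$, so the algorithm \emph{would} select $x$. Your final step, converting ``$r_k(x)>r^*$'' into ``$f_M(\pi(x))<\lambda_{\widetilde{\alpha}-\omega/2}$'', is circular, since $r_k(x)>r^*$ is exactly the conclusion you need, not a hypothesis you may assume. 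Closing this requires an extra ingredient, e.g.\ showing that with high probability no noise sample lands in the thin tube $\{y : d(y,M)<c\,r^*,\ f_M(\pi(y))\ge\lambda_{\widetilde{\alpha}-\omega/2}\}$, which the paper also does not do. Two smaller notes: to get $r_k(x)\le r^*$ for samples in $H_{\widetilde{\alpha}-\omega/2}(f_M)$ you want Lemma~\ref{ballbounds_manifold} or the $f_k$ \emph{lower} bound Lemma~\ref{lemma:manifold_fk_lower_bound}, not the $f_k$ upper bound you cite; and Lemma~\ref{lemma:lambdalapha_smoothness_manifold} gives only an upper bound plus strict positivity on the level gap, not a $\Theta(\omega^{\beta/d})$ rate (though for fixed $\omega$ strict positivity is all the argument needs).
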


\begin{proof}
By Assumption~\ref{assumption:levelset}, we can choose $\omega$ sufficiently small so that for the density $f_M$, 
$|\lambda_{\widetilde{\alpha} - \omega} - \lambda_{\widetilde{\alpha}}| \le \check{C}_\beta \cdot (r_c/3)^\beta$. Then, at the $(\widetilde{\alpha} - \omega)$-density level, we will be within the area where the regularity assumptions hold. 

Next, by Hoeffding's inequality, we have that there exist constant $C' > 0$ such that for $\bar{\alpha} > 0$:
\begin{align*}
    \mathbb{P}\left( 1 - \bar{\alpha} - C' \sqrt{\frac{\log(1 / \delta)}{n}} \le \frac{|H_{\frac{\bar{\alpha} - \eta}{1 - \eta }}(f_M) \cap X|}{n}  \le   1 - \bar{\alpha} + C' \sqrt{\frac{\log(1 / \delta)}{n}} \right) \ge 1 - \delta/3.
\end{align*}
Choosing $\bar{\alpha} = \alpha - C'\sqrt{\frac{\log(1 / \delta)}{n}}$, then it follows that with probability at least $1 - \delta/3$, 
\begin{align*}
    H_0 := H_{\widetilde{\alpha} - C'\sqrt{\log(1 / \delta)}/(\sqrt{n} \cdot (1 - \eta))}(f_M)
\end{align*}
satisfies $|H_0 \cap X| > (1 - \alpha) \cdot n$.
Next let
\begin{align*}
    H_\omega := H_{\widetilde{\alpha}- \omega}(f_M).
\end{align*}

Let $r$ be the value of $\varepsilon$ used by Algorithm~\ref{alg:alpha-level-set}.
Now, it suffices to show that for $n$ sufficiently large depending on $f_M$:
\begin{align*}
    \max_{x \in X \backslash H_\omega} \mathcal{F}_n(B(x, r))
    < \min_{x \in H_0} \mathcal{F}_n(B(x, r)),
\end{align*}
where $\mathcal{F}_n$ is the empirical distribution.
This is because Algorithm~\ref{alg:alpha-level-set} filters out sample points whose $\varepsilon$-ball has less than $k$ sample points for its final $\varepsilon$ value, which is the value which allows it to filter $\alpha$-fraction of the points.

By Lemma~\ref{lemma:manifold_density_ratio}, 
it suffices to show that
\begin{align*}
        \max_{x \in X \backslash H_\omega} \mathcal{F}_{M, n}(B(x, r)) \left(1 + C_1 r^{D - d} \right)
    < \min_{x \in H_0} \mathcal{F}_{M, n} (B(x, r)) \left(1 - C_1 r^{D - d} \right),
\end{align*}
where $\mathcal{F}_{M, n}(A)$ denote the fraction of samples drawn from $\mathcal{F}_M$ which lie in $A$ w.r.t. our entire sample $X$.

Then, by Lemma~\ref{lemma:vc_bound}, it will be enough to show that
\begin{align*}
    & \max_{x \in X \backslash H_\omega}  (\mathcal{F}_M(B(x, r)) + \beta_n \sqrt{ \mathcal{F}_M(B(x, r))} + \beta_n^2) \left(1 + C_1 r^{D - d} \right)\\
    &< \min_{x \in H_0} (\mathcal{F}_M(B(x, r)) - \beta_n \sqrt{ \mathcal{F}_M(B(x, r))} - \beta_n^2) \left(1 - C_1 r^{D - d} \right),
\end{align*}
where $\beta_n = 8 d\log(1/\delta) \sqrt{\log n / n}$.

To bound the LHS, we have by Lemma~\ref{lemma:manifold_ballvolume}
\begin{align*}
& \max_{x \in X \backslash H_\omega}  (\mathcal{F}_M(B(x, r)) + \beta_n \cdot \sqrt{\mathcal{F}_M(B(x, r))} + \beta_n^2) \left(1 + C_1 r^{D - d} \right)\\
&\le  \max_{x \in X \backslash H_\omega} 
\left\{ \left(\inf_{x' \in B(x, r)} f_M(x')\right) \cdot \left(1 + \beta_n /\sqrt{||f_M||_\infty} + \beta_n^2/||f_M||_\infty \right) 
\left(1 + C_1 r^{D - d} \right)(1 + 4dr/\tau) \right\}\\
&\le \max_{x \in X \backslash H_\omega} 
\left\{ \left(\inf_{x' \in B(x, r)} f_M(x')\right) \left(1 + C_2 \beta_n + C_3 r \right) \right\}\\
&\le (\lambda_{\widetilde{\alpha} - \omega} + \iota(f_M, r)) \left(1 + C_2 \beta_n + C_3 r \right),  \\
\end{align*}
for some $C_2, C_3 > 0$ and $\iota$ is the modulus of continuity, that is $\iota(f_M, r) := \sup_{x, x' \in M : |x - x'| \le r} |f_M(x) - f_M(x')|$ (i.e. $f_M$ is uniformly continuous since it is continuous over a compact support, so $\iota(f_M, r) \rightarrow 0$ as $r \rightarrow 0$). 

Similarly, for the RHS, we can show for some constants $C_4, C_5$ that
\begin{align*}
    &\min_{x \in H_0} (\mathcal{F}_M(B(x, r)) - \beta_n \sqrt{ \mathcal{F}_M(B(x, r))} - \beta_n^2) \left(1 - C_1 r^{D - d} \right) \\
    &\ge (\lambda_{\widetilde{\alpha} - C'\sqrt{\log(1/\delta)}/(\sqrt{n}(1 - \eta))} - \iota(f_M, r)) \left(1 - C_4 \beta_n - C_5 r \right).
\end{align*}
The result follows since $r \rightarrow 0$ as $n \rightarrow \infty$ (since by Lemma~\ref{lemma:ball_bounds}, $r$ is a $k$-NN radius so $r \lesssim (k/n)^{1/D} \rightarrow 0$ given the conditions on $k$ of Theorem~\ref{theo:levelset_manifold_noise}) and the fact that $\lambda_{\widetilde{\alpha} - \omega} < \lambda_{\widetilde{\alpha}- C'\sqrt{\log(1/\delta)}/\sqrt{n}}$ for $n$ sufficiently large. As desired.
\end{proof}

\begin{lemma}[Bounding density estimators w.r.t to entire sample vs w.r.t. samples on manifold]\label{lemma:manifold_noise_density}
For $x \in \mathbb{R}^D$, define the following:
\begin{align*}
    r_k(x) &:= \inf\{\epsilon > 0 :  |B(x, \epsilon) \cap X| \ge k  \} \\
     \widetilde{r_k}(x) &:= \inf\{\epsilon > 0 :  |B(x, \epsilon) \cap X \cap M| \ge k  \} 
\end{align*}
where the former is simply the $k$-NN radius we've been using thus far and the latter is the $k$-NN radius if we were to restrict the samples to only those that came from $M$.
Then likewise, define the analogous density estimators:
\begin{align*}
    f_k(x) := \frac{k}{n \cdot v_d\cdot r_k(x)^d} \hspace{0.2cm} \text{ and } \hspace{0.2cm}
    \widetilde{f_k}(x) := \frac{k}{n\cdot v_d \cdot \widetilde{r_k}(x)^d},
\end{align*}
where again, the former is the usual $k$-NN density estimator on manifolds.
Then, there exists $C_1$ such that the following holds with high probability.
\begin{align*}
    \sup_{x \in M} |f_k(x) - \widetilde{f_k}(x)| \le C_1 \cdot  (k/n)^{D/d-1}.
\end{align*}
\end{lemma}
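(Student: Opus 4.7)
The plan is to factor the difference directly using the definitions and then reduce the problem to bounding $\widetilde{r_k}(x)^d - r_k(x)^d$ uniformly on $M$. Note that restricting to manifold samples can only enlarge the $k$-NN radius, so $\widetilde{r_k}(x) \ge r_k(x)$ and hence $f_k(x) \ge \widetilde{f_k}(x)$. Moreover,
\begin{align*}
f_k(x) - \widetilde{f_k}(x) = \frac{k}{n\, v_d}\left(\frac{1}{r_k(x)^d} - \frac{1}{\widetilde{r_k}(x)^d}\right) = \frac{k}{n\, v_d}\cdot\frac{\widetilde{r_k}(x)^d - r_k(x)^d}{r_k(x)^d\,\widetilde{r_k}(x)^d}.
\end{align*}
So it suffices to show, uniformly over $x \in M$ and with high probability, that (a) $r_k(x)\asymp \widetilde{r_k}(x)\asymp (k/n)^{1/d}$, and (b) $\widetilde{r_k}(x)^d - r_k(x)^d = O((k/n)^{D/d})$; plugging these into the display gives the required $(k/n)^{D/d-1}$ bound with a constant depending on $f_M,f_E,\eta,\tau,\lambda_0$.

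For step (a), I would apply Lemma~\ref{ballbounds_manifold} to two empirical measures simultaneously: the full empirical $\mathcal{F}_n$ and the manifold empirical $\mathcal{F}_{M,n}(B) := |B\cap X\cap M|/n$. Combined with Lemma~\ref{lemma:manifold_ballvolume} and the assumption $f_M \ge \lambda_0$ on $M$, this sandwiches both $\mathcal{F}(B(x,r))$ and $(1-\eta)\mathcal{F}_M(B(x,r))$ between constant multiples of $r^d$ for small $r$. Therefore the defining equations $\mathcal{F}_n(B(x,r_k(x)))\approx k/n$ and $\mathcal{F}_{M,n}(B(x,\widetilde{r_k}(x)))\approx k/n$ each force the radius to satisfy $\Theta(r^d) = k/n$, yielding the claimed order uniformly in $x\in M$ once a $1/n$-net covering argument extends the bound from $X\cup\mathcal{N}$ to all of $M$.

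For step (b), subtract the two defining equations via $\mathcal{F} = (1-\eta)\mathcal{F}_M + \eta\mathcal{F}_E$ to obtain
\begin{align*}
(1-\eta)\bigl[\mathcal{F}_M(B(x,\widetilde{r_k}(x))) - \mathcal{F}_M(B(x,r_k(x)))\bigr] \;\approx\; \eta\,\mathcal{F}_E(B(x,r_k(x))),
\end{align*}
up to the $\beta_n\sqrt{k/n} + \beta_n^2$ slack from Lemma~\ref{lemma:vc_bound}. By Lemma~\ref{lemma:manifold_density_ratio} the right-hand side is at most $C\, r_k(x)^{D-d}\mathcal{F}_M(B(x,r_k(x))) = O(r_k(x)^D) = O((k/n)^{D/d})$, using step~(a). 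By Lemma~\ref{lemma:manifold_ballvolume} applied to the annulus $B(x,\widetilde{r_k}(x))\setminus B(x,r_k(x))$, the left-hand side equals $(1-\eta) f_M(x)\,v_d(\widetilde{r_k}(x)^d - r_k(x)^d)$ up to a $(1+O(r))$ curvature factor. Dividing and using $f_M \ge \lambda_0$ yields (b).

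The main obstacle is controlling the slack $\beta_n\sqrt{k/n}+\beta_n^2$ relative to the target order $(k/n)^{D/d}$: one needs this statistical fluctuation to be lower order than the deterministic noise-contribution $O(r_k^D)$ that drives the bound. This is satisfied under the range of $k$ allowed by Theorem~\ref{theo:levelset_manifold_noise}, but must be tracked carefully so that both the VC-style slack and the $(1-\tau^2 r^2)$ correction from Lemma~\ref{lemma:manifold_ballvolume} are absorbed into the final constant $C_1$ depending only on $f_M,f_E,\eta,\tau,\lambda_0,d,D$. A minor ancillary point is extending the uniform bound on empirical ball masses from the sample-plus-net $X\cup\mathcal{N}$ to arbitrary $x\in M$ via monotonicity of $r \mapsto \mathcal{F}(B(x,r))$ and the $1/n$-separation in $\mathcal{N}$.
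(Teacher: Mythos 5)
Your proposal is correct and follows essentially the same route as the paper's proof: both hinge on Lemma~\ref{lemma:vc_bound} to pin the empirical ball masses near $k/n$, Lemma~\ref{lemma:manifold_density_ratio} to bound $\mathcal{F}_E(B(x,r))$ by $O(r^{D-d})\,\mathcal{F}_M(B(x,r))$, and Lemma~\ref{lemma:manifold_ballvolume} with $f_M\ge\lambda_0$ to equate ball masses with $r^d$ up to constants — you simply package it as a bound on $\widetilde{r_k}(x)^d-r_k(x)^d$, whereas the paper bounds the ratio $\mathcal{F}(B(x,\widetilde{r_k}(x)))/\mathcal{F}(B(x,r_k(x)))$ and leaves the conversion to a density-estimator difference implicit. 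The concern you flag about the VC slack $\beta_n\sqrt{k/n}+\beta_n^2$ is real but applies equally to the paper's proof; both arguments rely on that extra $O(\sqrt{\log n/k})$ contribution being dominated, in the downstream application in Theorem~\ref{theo:levelset_manifold_noise_full}, by the $(\log n)/\sqrt{k}$ density-estimation error already present there.
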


\begin{proof}
By Lemma~\ref{lemma:vc_bound}, there exists $C_2 > 0$ depending on $\mathcal{F}$ and $\mathcal{F}_M$ such that
\begin{align*}
    \mathcal{F}_n(B(x, r_k(x)) = k \Rightarrow |\mathcal{F}(B(x, r_k(x)) - k| \le C_2 \beta_n,
\end{align*}
and
\begin{align*}
    \mathcal{F}_{M, n}(B(x, \widetilde{r_k}(x)) = k \Rightarrow |\mathcal{F}_M (B(x, \widetilde{r_k}(x)) - k| \le C_2 \beta_n,
\end{align*}
where $\mathcal{F}_{M, n}$ is the empirical distribution w.r.t. $X \cap M$. 
Next, by Lemma~\ref{lemma:manifold_density_ratio},  we have for some constant $C_3 > 0$:
\begin{align*}
\mathcal{F}(B(x, \widetilde{r_k}(x)) &\le \mathcal{F}_M(B(x, \widetilde{r_k}(x)) (1 + C_3 \widetilde{r_k}(x)^{D - d})\\
&\le (k + C_2 \beta_n) (1 + C_3\cdot \widetilde{r_k}(x)^{D - d}) \\
&\le (k + C_2 \beta_n) (1 + C_4 \cdot (k/n)^{D/d - 1})\\
&\le k\cdot (1 + C_5 (k/n)^{D/d - 1}),
\end{align*}
where the second last inequality holds for some constant $C_4 > 0$ by Lemma~\ref{ballbounds_manifold} and $C_5 > 0$ is some constant depending on $\mathcal{F}$ and $\mathcal{F}_M$ and $M$.
Then it follows that for some constant $C_6 > 0$, we have
\begin{align*}
 \frac{\mathcal{F}(B(x, \widetilde{r_k}(x))}{\mathcal{F}(B(x, r_k(x))}  \le 1 + C_6 (k/n)^{D/d - 1}.
\end{align*}
In the other direction, we trivially have $\widetilde{r_k}(x) \ge r_k(x)$, so  
\begin{align*}
1 \le \frac{\mathcal{F}(B(x, \widetilde{r_k}(x))}{\mathcal{F}(B(x, r_k(x))}  \le 1 + C_6 (k/n)^{D/d - 1}.
\end{align*}
The result follows.
\end{proof}

\begin{theorem} \label{theo:levelset_manifold_noise_full}[Extends Theorem~\ref{theo:levelset_manifold_noise}] Let $0 < \eta < \alpha < 1$ and $0 < \delta < 1$.
Suppose that distribution $\mathcal{F}$ is a weighted mixture $(1 - \eta) \cdot \mathcal{F}_M + \eta\cdot \mathcal{F}_E$ where $\mathcal{F}_M$ is a distribution with continous density $f_M$ supported on a $d$-dimensional manifold $M$ satisfying Assumption~\ref{assumption:manifold} and $\mathcal{F}_E$ is a (noise) distribution with continuous density $f_E$ with compact support over $\mathbb{R}^D$ with $d < D$. Suppose also that there exists $\lambda_0 > 0$ such that $f_M(x) \ge \lambda_0$ for all $x \in M$ and $H_{\widetilde{\alpha}}(f_M)$ (where $\widetilde{\alpha} := \frac{\alpha - \eta}{1 - \eta}$) satisfies Assumption~\ref{assumption:levelset} for density $f_M$.
Let $\widehat{H}_\alpha$ be the output of Algorithm~\ref{alg:alpha-level-set} on a sample $X$ of size $n$ drawn i.i.d. from $\mathcal{F}$.
Then, there exists constants $C_l, C_u, C > 0$ depending on $f_M$, $f_E$, $\eta$, $M$ such that the following holds with probability at least $1 - \delta$.
Suppose that $k$ satisfies 
\begin{align*}
   C_l\cdot \log(1/\delta)^2 \cdot \log n \le k \le C_u\cdot \log(1/\delta)^{2d/(2\beta' + d)} \cdot (\log n)^{d(2\beta' + d)} \cdot n^{2\beta'/(2\beta' + d)} 
\end{align*}
where $\beta' := \max\{1, \beta\}$. 
Then we have
\begin{align*}
d_H(H_{\widetilde{\alpha}}(f_M), \widehat{H_\alpha}) \le C\cdot \left(\log(1/\delta)^{1/2d}\cdot n^{-1/2d} + \log(1/\delta)^{1/\beta} \cdot \log(n)^{1/2\beta} \cdot k^{-1/2\beta} \right).
\end{align*}
\end{theorem}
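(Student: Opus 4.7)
\begin{newProof}[Proof proposal for Theorem~\ref{theo:levelset_manifold_noise_full}]
The plan is to reduce the mixture setting to the pure manifold setting of Theorem~\ref{theo:levelset_manifold_full}, using the two supporting lemmas established in this section. Intuitively, the $\eta$-fraction of noise points contribute only low empirical density (since for a noise point the $k$-NN radius is on the scale $(k/n)^{1/D}$ rather than $(k/n)^{1/d}$), and hence are filtered out by Algorithm~\ref{alg:alpha-level-set}. Consequently, the algorithm effectively filters all noise points plus a $\widetilde{\alpha}$-fraction of the manifold points, which is the reason the relevant manifold level is $\widetilde{\alpha} = (\alpha - \eta)/(1-\eta)$, since $\eta + (1-\eta)\widetilde{\alpha} = \alpha$.

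First I would invoke Lemma~\ref{lemma:manifold_noise_isolation} to guarantee that, with probability $\ge 1 - \delta/3$, every point returned by Algorithm~\ref{alg:alpha-level-set} lies in $H_{\widetilde{\alpha} - \omega}(f_M)$ for arbitrarily small $\omega > 0$ (in particular, within distance $r_c$ of the manifold and inside the regularity region of Assumption~\ref{assumption:levelset}). This confines the analysis to points on or very near $M$, and in particular shows that the top direction of the Hausdorff bound, $\sup_{x \in \widehat{H_\alpha}} d(x, H_{\widetilde\alpha}(f_M))$, is controlled provided we can upper-bound the density level at which the selection threshold lies.

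Next I would use Lemma~\ref{lemma:manifold_noise_density} to show that on the manifold the full-sample $k$-NN density estimator $f_k$ differs from its manifold-restricted analogue $\widetilde{f_k}$ by only $O((k/n)^{D/d - 1})$, which is of lower order than the statistical error $(k/n)^{1/d}$ contributed to the Hausdorff distance. Hence I can analyze $\widetilde{f_k}$ as if it had been computed on the $(1-\eta) n$ samples truly drawn from $\mathcal{F}_M$. A Hoeffding argument (as in the proof of Lemma~\ref{lemma:level_bound_manifold}) gives that the number of sample points inside $H_{\widetilde{\alpha}}(f_M)$ is $(1 - \alpha)n \pm O(\sqrt{\log(1/\delta)/n})$, so the threshold $\widehat{\varepsilon}$ chosen by the algorithm corresponds to an empirical density level at most a perturbation $O\bigl(\sqrt{\log(1/\delta)/n}\bigr)^{\beta/d} + \log(1/\delta)\sqrt{\log n}/\sqrt{k}$ away from $\lambda_{\widetilde\alpha}$ (under $f_M$), mirroring Lemma~\ref{lemma:level_bound_true_manifold}.

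Having obtained this two-sided sandwich of $\widehat{H_\alpha}$ between $H_{\widetilde{\alpha}}^U(f_M)$ and $H_{\widetilde{\alpha}}^L(f_M) \cap X$, the Hausdorff bound follows by the same two-direction argument used for Theorem~\ref{theo:levelset_manifold_full}: the $\beta$-lower bound of Assumption~\ref{assumption:levelset} converts density perturbations into a distance bound of order $k^{-1/(2\beta)}(\log n)^{1/(2\beta)}$, and a covering argument via Lemma~\ref{ballbounds_manifold} contributes the $n^{-1/(2d)}$ term (using that manifold balls have volume $\gtrsim v_d r^d$). The main obstacle is verifying Lemma~\ref{lemma:manifold_noise_isolation} robustly for the specific $\widehat{\varepsilon}$ picked by the algorithm (its $r$ scales like $(k/n)^{1/d}$, so $r^{D-d} \to 0$ and the noise-to-signal ratio remains negligible for large $n$) and carefully combining the three high-probability events (Lemma~\ref{ballbounds_manifold}, Lemma~\ref{lemma:manifold_noise_isolation}, and the Hoeffding concentration of $|H_{\widetilde{\alpha}}(f_M) \cap X|/n$) via a union bound to obtain the stated $1 - \delta$ bound with constants absorbing $f_M, f_E, \eta, M, \delta$.
\end{newProof}
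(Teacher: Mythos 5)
Your proposal is correct and follows essentially the same route as the paper: reduce the mixture case to the pure-manifold case of Theorem~\ref{theo:levelset_manifold_full} by (i) invoking Lemma~\ref{lemma:manifold_noise_isolation} to confine the algorithm's output to a level set of $f_M$ and thereby filter out the noise points, (ii) invoking Lemma~\ref{lemma:manifold_noise_density} to show the full-sample $k$-NN density estimator differs from its manifold-restricted version by $O((k/n)^{D/d-1})$, which is absorbed by the $\log(n)/\sqrt{k}$ error term under the stated range of $k$, and (iii) repeating the sandwich and Hausdorff-distance argument. The paper presents only a terse sketch along exactly these lines, and your write-up fills in the same skeleton with more detail (the Hoeffding bound on $|H_{\widetilde\alpha}(f_M)\cap X|$, the covering argument via Lemma~\ref{ballbounds_manifold}, and the union bound over the high-probability events), so there is no substantive difference.
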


\begin{proof}[Proof of Theorem~\ref{theo:levelset_manifold_noise_full}]

The proof follows in a similar way as that of Theorem~\ref{theo:levelset_manifold_full}, except with the complexity of having added full-dimensional noise. We will only highlight the difference and provide a sketch of the proof here.

Lemma~\ref{lemma:manifold_noise_isolation} and~\ref{lemma:manifold_noise_density} give us a handle on the additional complexity when having separate noise distribution, compared to the earlier manifold setting of Theorem~\ref{theo:levelset_manifold}. 

Lemma~\ref{lemma:manifold_noise_isolation} guarantees that the points in $\widehat{H_\alpha}$ lie in the inside of $M$ with margin. In particular, that means the noise points are filtered out by the algorithm and thus, we are reduced to reasoning about the $\widetilde{\alpha}$-high-density-set of $f_M$.

Then, Lemma~\ref{lemma:manifold_noise_density} ensures that the $k$-NN density estimator used for our analysis for the entire sample $X$ is actually quite close to the $k$-NN density estimator with respect to $M \cap X$ within $M$. In other words, we can use the $k$-NN density estimator to estimate $f_M$ without knowing which samples of $X$ were in $M$. Lemma~\ref{lemma:manifold_noise_density} shows that the additional error in density estimation we obtain is $\approx (k/n)^{D/d - 1} \lesssim (k/n)^{1/d} \lesssim (\log n)/\sqrt{k}$, where the first inequality holds since $D > d$ and the latter holds from the conditions on $k$. It turns out that this error term can be absorbed as a constant in the previous result of Theorem~\ref{theo:levelset_manifold_full}.
\end{proof}
\section{Proof of Theorem~\ref{theo:distance_ratio}}

\begin{proof}[Proof of Theorem~\ref{theo:distance_ratio}]
For the first inequality, we have
\begin{align*}
\xi(h, x) \ge
\frac{d(x, M_{\widetilde{h}(x)}) - \epsilon_n}{d(x, M_{h(x)}) + \epsilon_n} =
\frac{d(x, M_{\widetilde{h}(x)})}{d(x, M_{h(x)})} - 
\frac{\epsilon_n}{ d(x, M_{h(x)}) + \epsilon_n} \cdot \left(\frac{d(x, M_{\widetilde{h}(x)})}{d(x, M_{h(x)})} + 1 \right),
\end{align*}
where the first inequality holds by Theorem~\ref{theo:levelset_manifold_noise}.
This, along with the condition on $\gamma$ and $\varepsilon(h, x)$ fromo the theorem statement, implies that 
\begin{align*}
    \frac{d(x, M_{\widetilde{h}(x)})}{d(x, M_{h(x)})} < 1 - \gamma,
\end{align*}
which implies that $h(x) \neq h^*(x)$.
For the second inequality, we have
\begin{align*}
\frac{1}{\xi(h, x)} \ge
\frac{d(x, M_{h(x)}) - \epsilon_n}{d(x, M_{\widetilde{h}(x)}) + \epsilon_n} =
\frac{d(x, M_{h(x)})}{d(x, M_{\widetilde{h}(x)})} - 
\frac{\epsilon_n}{ d(x, M_{\widetilde{h}(x)}) + \epsilon_n} \cdot \left(\frac{d(x, M_{h(x)})}{d(x, M_{\widetilde{h}(x)})} + 1 \right),
\end{align*}
where the first inequality holds by Theorem~\ref{theo:levelset_manifold_noise}.
Thus, if the condition of the theorem statement holds, then 
\begin{align*}
    \frac{d(x, M_{h(x)})}{d(x, M_{\widetilde{h}(x)})} < 1 - \gamma \Rightarrow \frac{d(x, M_{h(x)})}{d(x, M_{c})} < 1 - \gamma
\end{align*}
for all $c \neq h(x)$,
which implies that $h(x) = h^*(x)$
\end{proof}

%\section{Experimental details}

%We used 44k training images, and 11k testing images for MNIST data set. For ImageNet, we sampled 20 classes from InceptionV3 model~\cite{szegedy2016rethinking}, and used ~10,000 images to train our methods and tested on ~4000 images. In both MNIST and ImageNet, we reduced dimension of the input data using PCA with 20 components. 
 
%MNIST model contained two sets of a convolutional layer followed by a fully connected layer. The details of InceptionV3 model can be found from ~\cite{szegedy2016rethinking}. We conduct crossvalidation to find $\alpha$ parameter. 

\newpage

\section{Additional UCI Experiments}

\subsection{When to trust: Precision for correct predictions by percentile}

\begin{figure}[H]
\begin{center}
\includegraphics[width=0.32\textwidth]{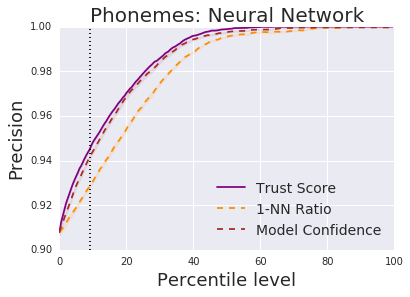}
\includegraphics[width=0.32\textwidth]{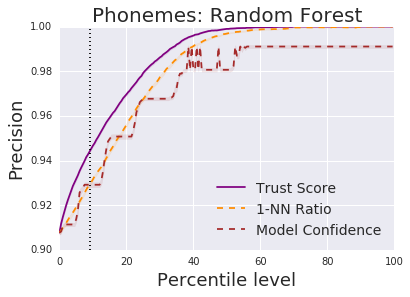}
\includegraphics[width=0.32\textwidth]{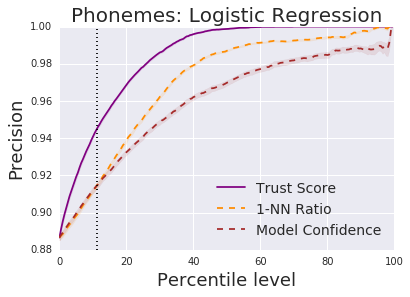}
\includegraphics[width=0.32\textwidth]{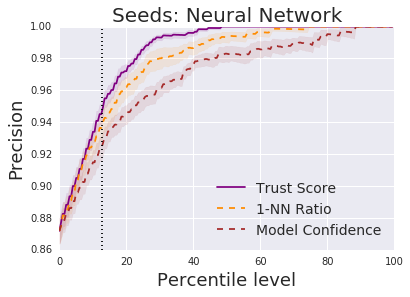}
\includegraphics[width=0.32\textwidth]{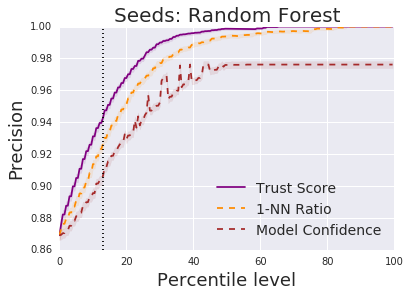}
\includegraphics[width=0.32\textwidth]{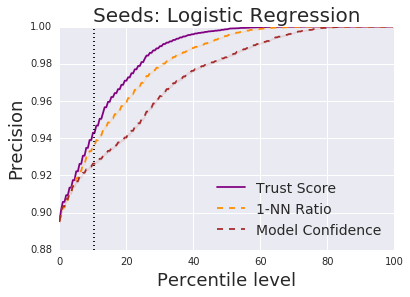}
\includegraphics[width=0.32\textwidth]{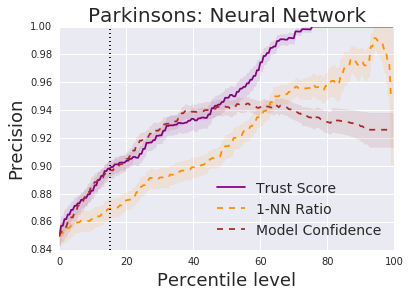}
\includegraphics[width=0.32\textwidth]{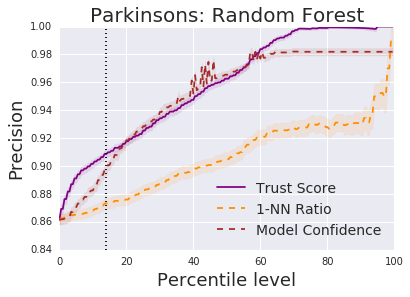}
\includegraphics[width=0.32\textwidth]{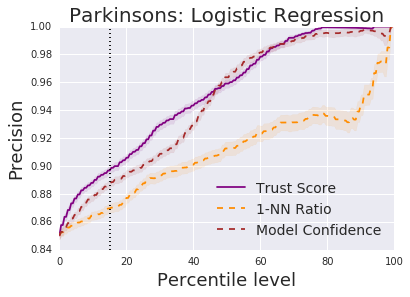}
\includegraphics[width=0.32\textwidth]{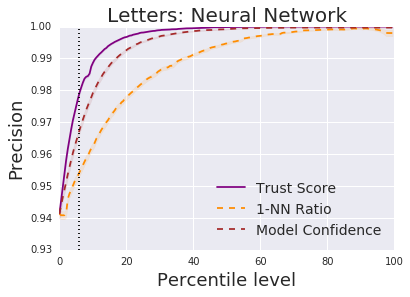}
\includegraphics[width=0.32\textwidth]{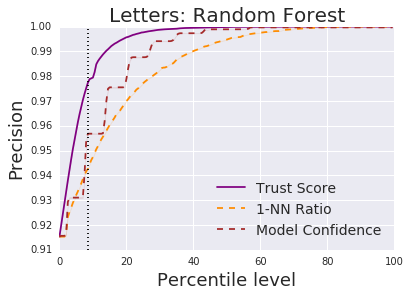}
\includegraphics[width=0.32\textwidth]{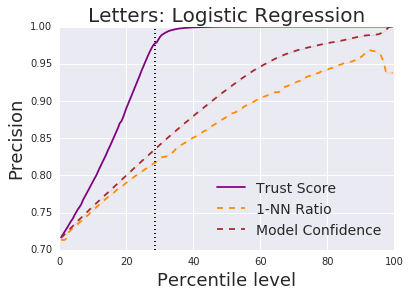}
\includegraphics[width=0.32\textwidth]{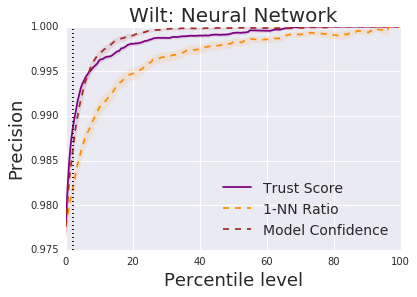}
\includegraphics[width=0.32\textwidth]{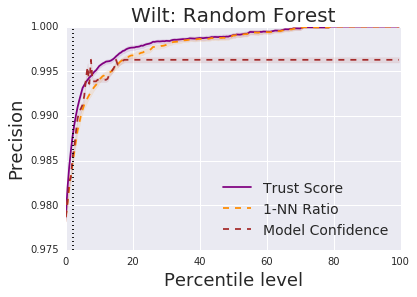}
\includegraphics[width=0.32\textwidth]{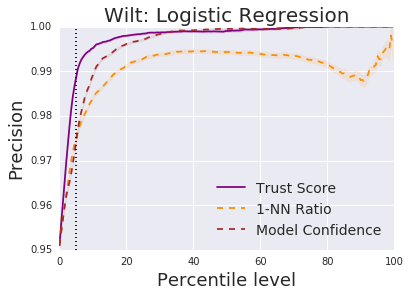}
\includegraphics[width=0.32\textwidth]{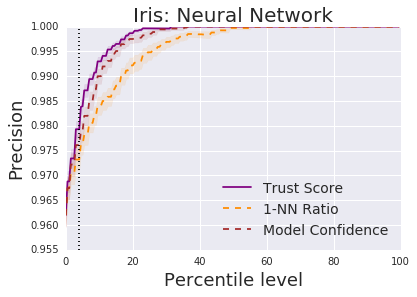}
\includegraphics[width=0.32\textwidth]{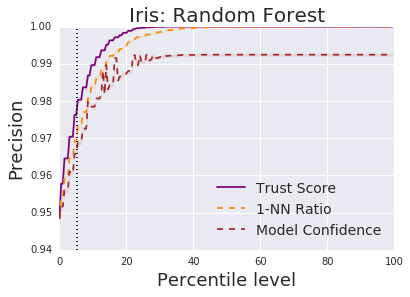}
\includegraphics[width=0.32\textwidth]{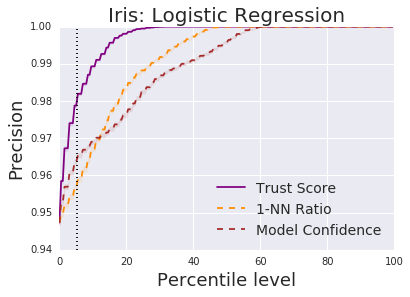}
\end{center}

\caption{UCI data sets and precision on correctness}
\end{figure}

\subsection{When to not trust: Precision for misclassification predictions by percentile}

\begin{figure}[H]
\begin{center}
\includegraphics[width=0.32\textwidth]{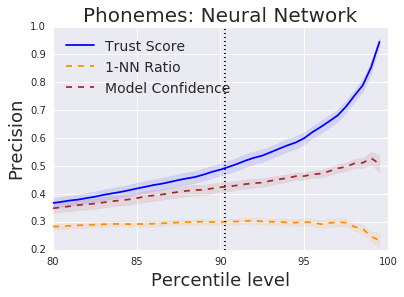}
\includegraphics[width=0.32\textwidth]{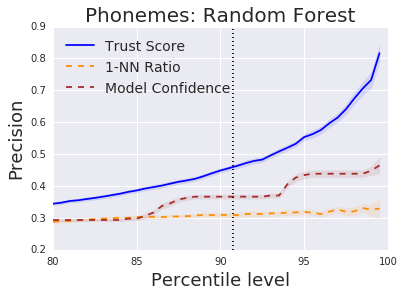}
\includegraphics[width=0.32\textwidth]{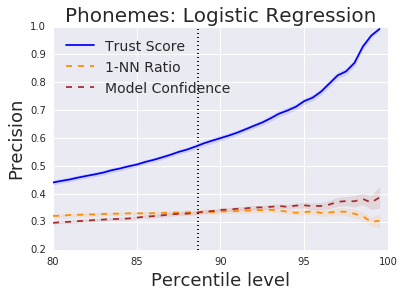}
\includegraphics[width=0.32\textwidth]{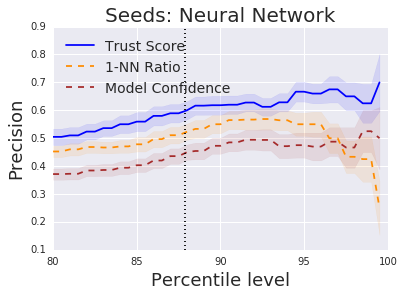}
\includegraphics[width=0.32\textwidth]{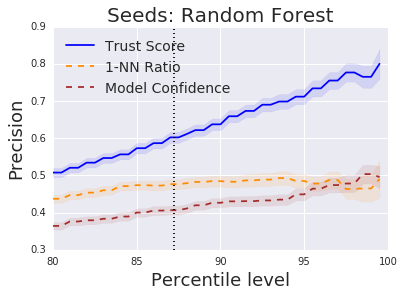}
\includegraphics[width=0.32\textwidth]{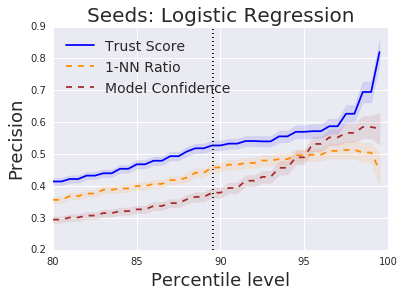}
\includegraphics[width=0.32\textwidth]{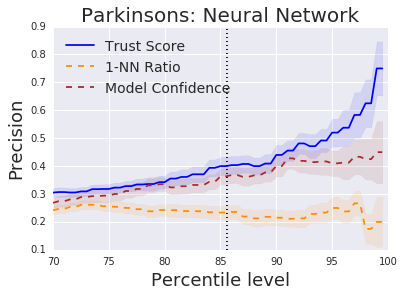}
\includegraphics[width=0.32\textwidth]{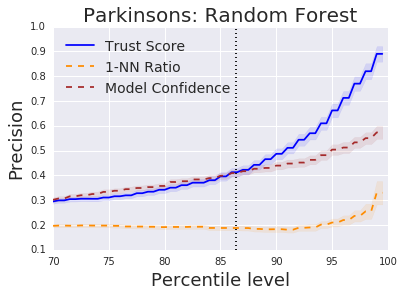}
\includegraphics[width=0.32\textwidth]{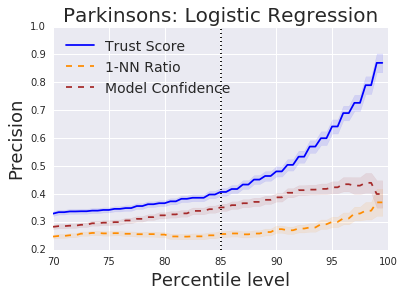}
\includegraphics[width=0.32\textwidth]{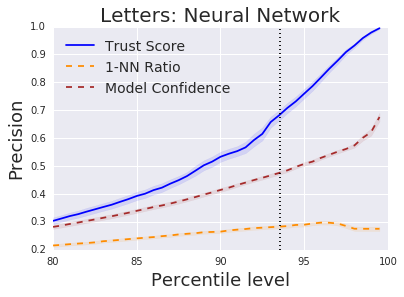}
\includegraphics[width=0.32\textwidth]{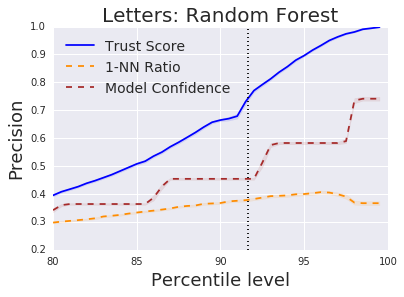}
\includegraphics[width=0.32\textwidth]{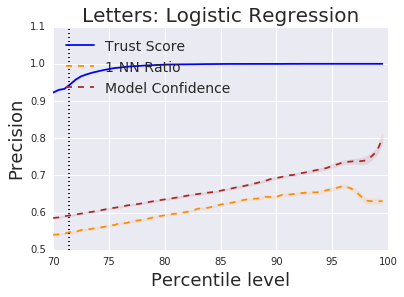}
\includegraphics[width=0.32\textwidth]{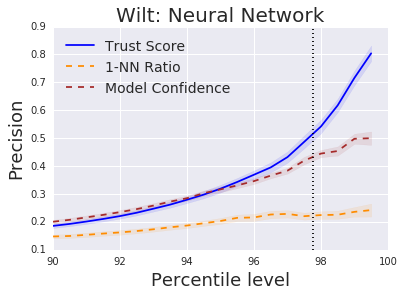}
\includegraphics[width=0.32\textwidth]{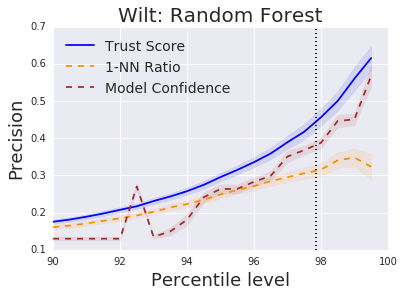}
\includegraphics[width=0.32\textwidth]{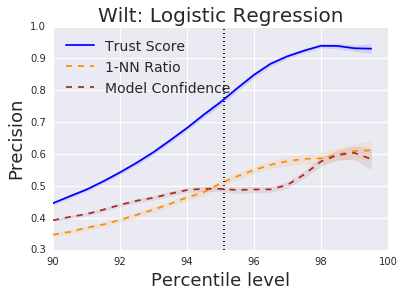}
\includegraphics[width=0.32\textwidth]{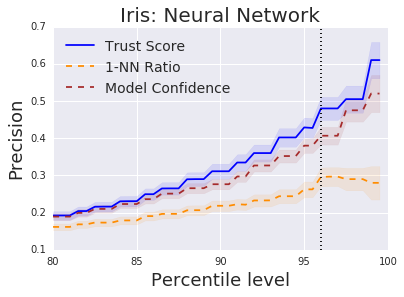}
\includegraphics[width=0.32\textwidth]{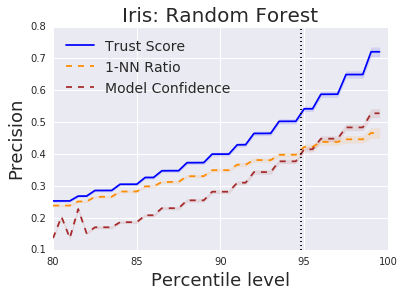}
\includegraphics[width=0.32\textwidth]{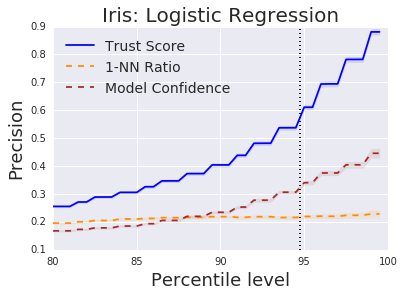}
\end{center}

\caption{UCI data sets and precision on incorrectness}
\end{figure}

%\section{Additional baselines}

%\begin{figure}[h!]
%\label{fig:}
%\begin{center}
%%\includegraphics[width=0.48\textwidth]{}
%\includegraphics[width=0.48\textwidth]{}
%\includegraphics[width=0.48\textwidth]{Precision_ToNotTrust_Phonemes_Logi%stic_WithBaseline}
%\includegraphics[width=0.48\textwidth]{}
%\includegraphics[width=0.48\textwidth]{}
%\includegraphics[width=0.48\textwidth]{}
%%\end{center}\end{figure}
%\begin{figure}
%\label{fig:}
%\begin{center}
%\includegraphics[width=0.48\textwidth]{Precision_ToTrust_Phonemes_NN_With%Baseline}
%\includegraphics[width=0.48\textwidth]{Precision_ToTrust_Phonemes_RF_With%Baseline}
%%\includegraphics[width=0.48\textwidth]{}
%\%includegraphics[width=0.48\textwidth]{Precision_ToTrust_Letters_NN_WithBa%seline}
%\includegraphics[width=0.48\textwidth]{Precision_ToTrust_Letters_RF_WithB%aseline}
%\includegraphics[width=0.48\textwidth]{}
%\end{center}\end{figure}

\subsection{High dimensional Datasets}
\begin{figure}






\centering

\subfigure[MNIST] {
\includegraphics[width=0.48\linewidth]{mnist_correct_relu_zoomin.png}
}
\subfigure[MNIST] {
\includegraphics[width=0.48\linewidth]{mnist_incorrect_relu_zoomin.png}
}
\subfigure[SVHN] {
\includegraphics[width=0.48\linewidth]{svhn_correct_relu.png}
}
\subfigure[SVHN] {
\includegraphics[width=0.48\linewidth]{svhn_incorrect_relu_zoomedin.png}
}
\subfigure[CIFAR-10] {
\includegraphics[width=0.48\linewidth]{cifar10_correct_relu.png}
}
\subfigure[CIFAR-10] {
\includegraphics[width=0.48\linewidth]{cifar10_incorrect_relu_zoomedin.png}
}
\subfigure[CIFAR-100] {
\includegraphics[width=0.48\linewidth]{cifar100_correct_relu.png}
}
\subfigure[CIFAR-100] {
\includegraphics[width=0.45\linewidth]{cifar100_incorrect_relu.png}
}
\caption{Trust score results using convolutional neural networks on MNIST, SVHN, CIFAR-10, and CIFAR-100 datasets. Left column is detecting trustworthy; right column is detecting suspicious.}
\label{fig:highdim_expanded}
\end{figure}

\end{document}